\def\fighome{.}
\def\bibhome{.}
\newcommand*{\mytensor}[1]{\bm{\mathcal{#1}}}
\newcommand*{\mymatrix}[1]{\bm{#1}}
\newcommand*{\myvector}[1]{\bm{#1}}
\newcommand*{\noise}[1]{\widehat{#1}}
\newcommand*{\mm}{\mymatrix}
\newcommand*{\mv}{\myvector}
\newcommand{\mt}{\mytensor}
\newcommand*{\mmCols}[2]{\mm{#1}_{[#2]}}
\DeclareMathOperator*{\argmin}{arg\,min}
\newcommand*{\true}[1]{\bm{#1}}
\newcommand*{\est}[1]{\bm{#1}^*}
\newcommand{\ourfullalgo}{{Slicing Initialized Alternating Subspace Iteration}\xspace}
\newcommand{\ouralgoshort}{{s-ASI}\xspace}
\newcommand{\ourfirstalgo}{{Alternating Subspace Iteration}\xspace}
\newcommand{\ourfirstalgoshort}{ASI\xspace}
\newcommand{\qr}[1]{\textsf{QR}\left(#1\right)}
\newcommand{\nameInit}{Slice-Based Initialization\xspace}
\newcommand{\Ornoisebound}{\min\{O(\frac{(\lambda_r - \lambda_{r+1})\epsilon}{\sqrt{r}}), O(\delta_0\frac{\lambda_r -\lambda_{r+1}}{\sqrt{d}})\}}
\newcommand{\uniformnoisebound}{\min\{O(\frac{\Delta\epsilon}{\sqrt{R}}), O(\delta_0\frac{\Delta}{\sqrt{d}})\}}
\newcommand{\uniformsotabound}{
\min\{
O(\frac{\Delta\epsilon}{\sqrt{R}})
,
O(\delta_0\frac{\Delta^2}{\sqrt{dR}})
))}
\newcommand{\gap}{\mathsf{gap}}
\newcommand{\op}{\mathsf{op}}
\newcommand{\diag}{\mathsf{Diag}}
\newcommand{\identitymatrix}{\mm{I}}
\newcommand\R{\mathbb{R}}
\def\tha{{\mbox{\tiny th}}}
\def\beq{\begin{equation}}
\def\eeq{\end{equation}\noindent}
\newcommand{\bp}{\begin{psfrags}}
\newcommand{\ep}{\end{psfrags}}
\newcommand{\bc}{\begin{center}}
\newcommand{\ec}{\end{center}}
\newcommand{\Diag}{\mathsf{Diag}}
\newcommand{\fro}{\mathsf{F}}
\DeclareMathOperator{\Tr}{\mathsf{trace}}
\newtheorem{theorem}{Theorem}[section]
\newtheorem{lemma}[theorem]{Lemma}
\newtheorem{definition}[theorem]{Definition}
\newtheorem{proposition}[theorem]{Proposition}
\newtheorem{myfact}{Fact}
\newtheorem{remark}[theorem]{Remark}
\newcommand{\mytitle}{Guaranteed Simultaneous Asymmetric Tensor Decomposition via Orthogonalized Alternating Least Squares}
\title{\textbf{\mytitle}}
\date{}
\author{Furong Huang$^*$\\ \emph{Department of Computer Science}\\
\emph{University of Maryland}\\
* \href{mailto:furongh@cs.umd.edu}{furongh@cs.umd.edu} 
   \and Jialin Li \\ \emph{Department of Mathematics}\\
\emph{University of Maryland}\\  \href{mailto:jl233@math.umd.edu}{jl233@math.umd.edu}
   \and Xuchen You \\ \emph{Department of Computer Science}\\
\emph{University of Maryland}\\  \href{mailto:xyou@cs.umd.edu}{xyou@cs.umd.edu}
 }
\begin{document}
\maketitle
\begin{abstract}
Tensor CANDECOMP/PARAFAC (CP) decomposition is an important tool that solves a wide class of machine learning problems.
Existing popular approaches recover components one by one, not necessarily in the order of larger components first. 
Recently developed simultaneous power method obtains only a high probability recovery of top $r$ components even when the observed tensor is noiseless. 
We propose a \emph{\ourfullalgo} (\ouralgoshort) method that is guaranteed to recover top $r$ components ($\epsilon$-close) simultaneously for (a)symmetric tensors almost surely under the noiseless case (with high probability for a bounded noise) using $O(\log(\log \frac{1}{\epsilon}))$ steps of tensor subspace iterations.
Our \ouralgoshort introduces a \nameInit that runs $O(1/\log(\frac{\lambda_r}{\lambda_{r+1}}))$ steps of matrix subspace iterations, where $\lambda_r$ denotes the $r^\tha$ top singular value of the tensor. 
We are the first to provide a theoretical guarantee on simultaneous orthogonal asymmetric tensor decomposition. 
Under the noiseless case, we are the first to provide an \emph{almost sure} theoretical guarantee on simultaneous orthogonal tensor decomposition. 
When tensor is noisy, our algorithm for asymmetric tensor is robust to noise smaller than 
$\Ornoisebound$, where $\delta_0$ is a small constant proportional to the  probability of bad initializations in the noisy setting.
\end{abstract}

\section{Introduction}

Latent variable models are probabilistic models that are versatile in modeling high dimensional complex data with hidden structure.
%
The method of moments~\cite{hall2005generalized} relates the observed data moments with model parameters using a CP tensor decomposition~\cite{kolda2009tensor}.
Specifically, learning latent variable models using the method of moments involves identifying the linearly independent components of a data moment tensor $\mytensor{T}$.  
The assumption of linearly independent components is practical and holds in many applications such as topic model, community detection and recommender systems.  Orthogonal assumption is not stronger than a linear independence one. 
CP decomposition for tensors with linearly independent components can be reduced to CP decomposition for tensors with orthogonal components 
using whitening (a multilinear transformation).
{Orthogonal tensor decomposition is key for spectral algorithms for solving many ML problems. For instance, paper~\cite{huang2015online} discusses how this method outperforms state-of-the-art variational inference in topic modeling and community detection.}
Due to finite number of data examples, we observe a data empirical moment $\widehat{\mytensor{T}}$ (a noisy version of the data moment $\mytensor{T}$):  $\widehat{\mytensor{T}} = \mytensor{T} +\mm{\Phi}$, where $\mm{\Phi}$ is the noise tensor. 
Therefore, the core algorithm needed in learning high-dimensional latent variable models in numerous machine learning applications is to find methods that provide guaranteed recovery of the dominant/top linearly independent components of $\mytensor{T}$ using $\widehat{\mytensor{T}}$.

Consider a 3-order underlying tensor $\mytensor{T}$ with components $\mymatrix{A}$,$\mymatrix{B}$, $\mymatrix{C}$,  then $\mytensor{T}=\sum_{i=1}^R \lambda_i\myvector{a}_i\otimes \myvector{b}_i \otimes \myvector{c}_i$  where $\myvector{a}_i$, $\myvector{b}_i$,  $\myvector{c}_i$ are the columns of $\mymatrix{A}$, $\mymatrix{B}$, $\mymatrix{C}$ respectively.  
If $\mytensor{T}$ is {\em symmetric\/}, it permits a symmetric CP decomposition $\mymatrix{A}=\mymatrix{B}=\mymatrix{C}$. 
If $\mytensor{T}$ is asymmetric, $\mytensor{T}$ must be decomposed via an {\em asymmetric\/} decomposition $\mymatrix{A}\neq\mymatrix{B}\neq\mymatrix{C}$. 

\paragraph{Simultaneous Recovery} Popular tensor decomposition methods recovers components one by one.  Unlike previous schemes based on deflation methods~
\cite{anandkumar2014guaranteed} that recover factors sequentially, our
scheme recovers the components simultaneously when $R$ is unknown.  This is a more practical setting. In numerous machine
learning settings, data is generated in real-time, and sequential
recovery of factors may be inapplicable under such online settings. Prior
work~\cite{wang2017tensor} considers a simultaneous subspace
iteration, but is only limited to symmetric tensors. 

\paragraph{Asymmetric Tensors} The symmetric assumption required
by prior methods is restrictive. In most applications, multi-view models or HMMs in which information is asymmetric along
different modes are needed. 
Decomposition of symmetric tensors is
easier than that of asymmetric ones~\cite{kolda2015symmetric} as the constraints of symmetric
entries vastly reduce the number of
parameters in the CP decomposition problem. There is much prior work
\cite{anandkumar2014tensor,anandkumar2016tensor,goyal2014fourier,sharan2017orthogonalized,wang2017tensor}
on decomposing symmetric tensor with identical components across
modes, all of which require multiple random sampling initializations which inevitably induce convergence of the algorithms, only with high probability.



In this paper, we consider simultaneous top $r$ components recovery of asymmetric tensors with
 unknown $R$ number of orthonormal components.  Our goal is to recover top $r$ components simultaneously almost surely when noiseless.
Our \emph{\ourfullalgo} (\ouralgoshort) uses a tensor subspace iteration method, i.e., orthogonalized alternating least square (o-ALS).  

Related works on matrix-based methods, optimization-based methods and other rank-1 or rank-$r$ tensor decomposition methods are surveyed in detail in Section~\ref{sec:relatedwork}.


\subsection{Summary of Contribution}
\paragraph{Contribution to Asymmetric Tensor Decomposition}
We provide the first guaranteed decomposition algorithm, \ourfullalgo (\ouralgoshort),  for asymmetric tensors with a convergence rate $O(\log \log \frac{1}{\epsilon})$ independent of the rank and dimension. 
Our \ouralgoshort recovers the top $r$ components corresponding to the largest $r$ singular values simultaneously with probability 1 under the noiseless case when $R$ is unknown.  Our \ouralgoshort is robust to noise smaller than $\min\{\frac{\sqrt{2}}{8}\frac{\Delta\epsilon}{\sqrt{R}}$, $\delta_0\frac{\lambda_{r}^2-\lambda_{r+1}^2}{8\|\mv{\lambda}\|}$,  $\delta_0\frac{\Delta}{2\sqrt{d}}\}$, where $\Delta = \min_r \lambda_r - \lambda_{r+1}$ denotes the spectral gap of the tensor, $d$ the dimension and $\delta_0$ a constant proportional to the failure probability of initialization.


\paragraph{Contribution to Symmetric Tensor Decomposition}
Our \nameInit procedure applies to symmetric orthogonal tensor decomposition to (1) provide an initialization that guarantees convergence to top $r$ components almost surely when the tensor is noiseless (in contrast to the state-of-the-art random sampling based initialization method~\cite{wang2017tensor} which leads to convergence with some high probability); (2) improve the robustness of the algorithm by allowing larger noise 
$\uniformnoisebound$, in contrast to the state-of-the-art noise level $\uniformsotabound$ allowed. Here we use the fact that the bound can be loosened by replacing $\lambda_r^2-\lambda_{r+1}^2$ by $\Delta^2$. 

\begin{theorem}[Informal \ouralgoshort Convergence Guarantee] Let a tensor permit an noisy orthogonal CP decomposition form $\widehat{\mytensor{T}} =\sum_{i=1}^R \lambda_i\myvector{a}_i\otimes \myvector{b}_i \otimes
\myvector{c}_i +\mm{\Phi}$, where $\lambda_i$ are in descending order. After running $O(\log(\log\frac{1}{\epsilon}))$ steps of tensor subspace iteration in our \ourfirstalgo (Procedure~\ref{algo:main}), the estimated $i^\tha$ component ${\mv{{a}}}^*_i$ converges to the $i$-th component $\mv{a}_i $ with high probability $\lVert \mv{a}_i - {\mv{{a}}}^*_i\rVert \le \epsilon$ for $\forall 1\le i\le r$ when noise is bounded.
\end{theorem}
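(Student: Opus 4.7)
The plan is to decompose the argument into three stages: (i) the \nameInit procedure supplies a deterministic warm-start that places the iterates within the basin of attraction of the top-$r$ factors; (ii) in the noiseless case, each o-ALS sweep is a contraction whose error squares per step, giving the $O(\log\log\frac{1}{\epsilon})$ rate; and (iii) bounded noise contributes an additive perturbation at each iteration that only changes the analysis until it hits a noise floor. The underlying mechanism is that once the iterate has non-trivial correlation with every one of the true top-$r$ components, trilinear contraction amplifies those correlations while suppressing the residual orthogonal to the true subspace.

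For the initialization stage, I would analyze one step of matrix subspace iteration on a mode unfolding $\widehat{\mytensor{T}}_{(1)}$, whose leading $r$ singular subspace aligns with $\mymatrix{A}$ up to an $O(\lambda_{r+1}/\lambda_r)$ tilt. Standard subspace iteration results then show that $O(1/\log(\lambda_r/\lambda_{r+1}))$ matrix iterations drive the initial sketch to a basis whose principal angles with $\mathrm{span}(\mv{a}_1,\ldots,\mv{a}_r)$ are strictly bounded away from $\pi/2$. This is the key difference from random initialization: the warm-start deterministically guarantees non-zero projection onto every top component, which is exactly the event that prior work could only ensure with high probability via many restarts.

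For the main contraction, I write the $t$-th iterate of mode-$1$ factor as $\mv{a}^{(t)}_i = \alpha^{(t)}_i \mv{a}_i + \mv{e}^{(t)}_i$ with $\mv{e}^{(t)}_i \perp \mv{a}_i$, and track $\|\mv{e}^{(t)}_i\|$ through one o-ALS sweep. An ALS update of mode $1$ contracts $\widehat{\mytensor{T}}$ along the current mode-$2$ and mode-$3$ iterates; by orthogonality of $\{\mv{a}_i\}$, $\{\mv{b}_i\}$, $\{\mv{c}_i\}$, the $i$-th column of the contraction is $\lambda_i \langle \mv{b}_i, \mv{b}^{(t)}_i\rangle \langle \mv{c}_i, \mv{c}^{(t)}_i\rangle \mv{a}_i$ plus cross terms proportional to $\langle \mv{b}_j, \mv{b}^{(t)}_i\rangle \langle \mv{c}_j, \mv{c}^{(t)}_i\rangle$ for $j\neq i$. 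The cross terms are quadratic in the current error, so after the QR-style orthogonalization the error $\|\mv{e}^{(t+1)}_i\|$ is bounded by $O(\|\mv{e}^{(t)}\|^2 / \Delta)$, yielding quadratic convergence once the warm-start is in the basin. Iterating, $T = O(\log\log\frac{1}{\epsilon})$ sweeps suffice to reach $\epsilon$.

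For the noisy case, each contraction inherits an additional term $\mm{\Phi}(\cdot, \mv{b}^{(t)}_i, \mv{c}^{(t)}_i)$ of norm at most $\|\mm{\Phi}\|$; this becomes an additive perturbation that must be controlled by the spectral gap to keep the top-$r$ subspace identifiable and to keep the signs/orderings stable through the orthogonalization. The main obstacle I anticipate is the asymmetric coupling: unlike the symmetric case where one tracks a single factor, here the three factors are updated in turn and feed each other's errors, so the contraction argument must be carried out jointly and the QR step (rather than vector normalization) must be shown to preserve correlations with the true components. Establishing the noise-tolerance bound $\min\{\sqrt{2}\Delta\epsilon/(8\sqrt{R}),\ \delta_0(\lambda_r^2-\lambda_{r+1}^2)/(8\|\mv{\lambda}\|),\ \delta_0\Delta/(2\sqrt{d})\}$ amounts to ensuring that, at every sweep, the perturbation stays below both the current error and the gap, which is exactly what the three-term minimum encodes.
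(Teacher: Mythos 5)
Your high-level decomposition — (i) a deterministic warm start from a slice-derived matrix plus matrix subspace iteration, (ii) a quadratic-rate contraction of the o-ALS iterates, (iii) additive perturbation analysis for bounded noise — matches the paper's proof architecture, and your observation that the initialization matrix has left singular subspace aligned with $\mymatrix{A}$ is essentially Lemma~\ref{thm:initAsymmetric}. However, your main contraction argument has a genuine gap. You propose to track the per-column decomposition $\mv{a}^{(t)}_i = \alpha^{(t)}_i\mv{a}_i + \mv{e}^{(t)}_i$ through one o-ALS sweep, but the algorithm's update is a QR factorization of the $d\times r$ matrix $\widehat{\mytensor{T}}_{(1)}(\mm{Q}_C^{(k)}\odot\mm{Q}_B^{(k)})$, not a per-column normalization. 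Gram--Schmidt subtracts earlier columns from later ones, so the $i^\tha$ column of $\mm{Q}_A^{(k+1)}$ is not simply the normalized $i^\tha$ column of the contraction; you acknowledge this tension at the end (``the QR step must be shown to preserve correlations'') but do not resolve it, and resolving it per-column is exactly where the difficulty lies. The paper sidesteps this by parametrizing the error via the principal angle $t_{A_{(r)}}^{(k)} = \tan(\mmCols{A}{r}, \mm{Q}_{\mmCols{A}{r}}^{(k)})$, which is invariant to the choice of orthonormal basis: the QR step can be dropped entirely when computing the subspace angle since $\mm{Q}_A^{(k+1)}\mm{R}_A^{(k+1)}$ and $\mm{Q}_A^{(k+1)}$ span the same space. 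The conversion from subspace closeness to per-column closeness is then done once at the end (Lemma~\ref{thm:simul_convergence}) via the Cauchy interlacing theorem, which shows that small $\tan(\mmCols{A}{r},\mm{Q}_{(r)})$ forces small $\tan(\mmCols{A}{i},\mm{Q}_{(i)})$ for every prefix $i\le r$, and from this one reads off $(\mv{q}_i^\top\mv{a}_i)^2 \ge 1-2\epsilon^2$. This interlacing step is the missing ingredient in your proposal; without it, there is no guarantee that the $i^\tha$ estimated column is correlated with $\mv{a}_i$ specifically, rather than some other $\mv{a}_j$ or a mixture.

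Two secondary points. First, you describe the contraction as ``error squares per step,'' but the actual recursion is the three-way coupled bilinear system $t_{A_{(r)}}^{(k+1)} \le (\lambda_{r+1}/\lambda_r)\, t_{B_{(r)}}^{(k)} t_{C_{(r)}}^{(k)}$ and cyclically for $B,C$; the paper solves this explicitly (with interleaved index shifts from the alternating updates) to obtain the exponent $2^K-1$ on $\lambda_{r+1}/\lambda_r$ and $2^K/3$ on the initial tangent product — a nontrivial computation once the three modes feed into each other asymmetrically. Second, you frame the warm start as fully deterministic, but Algorithm~\ref{algo:matsub} still uses a random Haar-distributed $\mm{Q}^{(0)}$, so the ``almost surely'' in the noiseless case and the $1-O(\delta_0)$ in the noisy case come precisely from the probability that this random starting matrix has a degenerate projection onto the target invariant subspace; your proof should identify this as the sole source of randomness.
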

Note that the results are identifiable up to sign flip. In contrast to rank-1 methods which are identifiable up to sign flip and column permutation, our \ouralgoshort identifies the top-$r$ components with largest $\lambda_i$.
 In table~\ref{tab:rate_compare}, we compare the convergence rate of our algorithm with existing works. A detailed discussion of related work is in section~\ref{sec:relatedwork}.
Our almost surely convergence result with a quadratic convergence rate is supported by experiments in section~\ref{sec:exp}.

\begin{table}
\centering
\scriptsize
	\begin{tabular}{c|l|l|c|c|c}
	 &  		\multicolumn{2}{c|}{\textbf{Method}}		& \multicolumn{2}{c|}{\textbf{\# of iterations}}		& \textbf{Noise} 	\\
	 & 		{Initialization} & {Iterations}  					&  {Initialization} & {Top $r$ recovery} 					& \textbf{allowed} 	 \\
	 \hline
     \cite{anandkumar2014tensor} 	& 	random & rank-1 power & $O(R\log R)$  & $R\log\log\frac{1}{\epsilon}^*$ &  $O(\frac{\epsilon}{d})$  \\
     
     \cite{anandkumar2014guaranteed} & 	SVD & rank-1 ALS 	& $O(R)$  & $R\log\log\frac{1}{\epsilon}^{*}$ & $O(\frac{\epsilon}{\sqrt{d}})$ \\
     
        \cite{sharan2017orthogonalized} & random & rank-r ALS & $O(R)$  & $R\log\log\frac{1}{\epsilon}^{**}$   & -  \\
		 
\cite{wang2017tensor} & sampling & rank-r power &$O(\log d)$ & $\log\log\frac{1}{\epsilon}^\dagger$ &  $\min\{O(\frac{\epsilon}{\sqrt{R}}), O(\frac{1}{\sqrt{dR}})\} $ \\
         
	 \hline
     \ouralgoshort	 & slice based & rank-r ASI & $O(1)${{$^\ddagger$}} & $\log\log\frac{1}{\epsilon}${{$^\ddagger$}}  & $\min\{O(\frac{\epsilon}{\sqrt{R}}),O(\frac{1}{\sqrt{d}}\}) $   \\
	\end{tabular}
	\caption{Convergence comparison of existing methods for symmetric orthogonal tensor decomposition. Both \cite{sharan2017orthogonalized} and our algorithm allow incoherent tensor decomposition. For simplicity, we hide dependency on spectral gap although our method achieves the existing best spectral gap requirement for top $r$ recovery. Our \ouralgoshort matches the state-of-the-art symmetric convergence rate even in the asymmetric setting -- using a \nameInit with $O(1/\log{\frac{\lambda_r}{\lambda_{r+1}}})$ steps of matrix subspace iteration and $O(\log \log \frac{1}{\epsilon})$ steps of \emph{alternating subspace iteration}(ASI). $^*$ The top $r$ factors can only be determined after all $R$ factors are recovered. $^{**}$ The convergence to top $r$ is obtained only when $R$ is known. {$^\dagger$} A high probability convergence due to the sampling initialization. {$^\ddagger$} The initialization will be deterministically successful in noiseless case.}\label{tab:rate_compare}
\end{table}
\normalsize

\section{Related Work}\label{sec:relatedwork}
\paragraph{Rank-1 methods} Both popular rank-1 power methods~\cite{anandkumar2014tensor,wang2016online} (on  orthogonal symmetric tensors using random initialization and deflation) 
and 
rank-1 ALS~\cite{anandkumar2014guaranteed} (
on incoherent tensors via optimizing individual mode of the factors while fixing all other modes, and alternating between the modes) 
require  recovery of \textbf{all} $R$ components sequentially to determine the top $r$ components.
Therefore the convergence rates are inevitably a factor of $R$ slower than our \ouralgoshort as they recover components sequentially, not necessarily in the order of the largest first. 

%

\paragraph{Rank-$r$ methods} (1) Comparison with rank-$r$ power method. Wang et al.~\cite{wang2017tensor} use subspace iteration and prove
the simultaneous convergence of the top-$k$ singular vectors for
orthogonal symmetric tensors. A sampling-based procedure is
used for initialization.   Their sampling-based initialization inevitably introduces a high probability bound even when the observed data is noiseless.
(2) Comparison with rank-$r$ orthogonal ALS. Convergence of a variant of ALS using QR
decomposition~\cite{sharan2017orthogonalized} with random initialization for symmetric tensors
has been proven to require number of iterations linear in $R$.  
 Their method converges to the top $r$ components only when the rank $R$ is known and $r=R$. Their convergence bound of sequential analysis is found to be loose.

\paragraph{Gradient-based methods} Stochastic gradient descent is used to solve
tensor decomposition problem.  In~\cite{ge2015escaping}, an objective
function for tensor decomposition is proposed where all the local
optima are globally optimal. However, the polynomial convergence rate is slower than the double exponential rate achieved in our paper. 

\paragraph{Matrix-based methods}  \cite{tomasi2006comparison} provides a general survey on some early efforts, most of which are based on reduction to matrix decomposition (including subroutines that solves CP decomposition for two-slice tensors through joint diagonalization(\cite{domanov2014canonical}\cite{roemer2008closed})). 
Our method improves upon the line of work mentioned due to the following reasons.
(a) We a noise-robust algorithm that fast converges to top-r components. In contrast, neither \cite{domanov2014canonical} nor \cite{roemer2008closed} presents a convergence rate analysis or robustness analysis under noise. 
(b) \cite{tomasi2006comparison} also discussed several types of trilinear decomposition methods (TLD), which call matrix decompositive subroutines so their convergence rates are limited to be slower than ours. For others mentioned in \cite{tomasi2006comparison}, our method outperforms them in terms of either convergence rate, memory expense, resistance of over-factoring, or ability of simultaneous recovery of top-$r$ components. 



It is empirically shown in \cite{faber2003recent} that a preliminary version of ALS outperforms a series of trilinear decomposition methods (DTLD, ATLD, SWATLD). Our algorithm outperforms the state-of-the-art ALS method in experiments.

More recent works in this direction include \cite{kuleshov2015tensor} and \cite{pimentel2016simpler}. 
Kuleshov et al \cite{kuleshov2015tensor} proposed a sophisticated way of projection such that the gaps of eigenvalues are preserved with high probability. However there is no guarantee of top $r$ recovery. Matrix-decomposition-based methods in general have a logarithmic convergence rate.

The advantages of our method over the \textbf{eigen-decomposition based methods} are:  (1) We achieve $\log(\log(1/\epsilon))$ convergence rate whereas matrix decomposition has $\log(1/\epsilon)$, to the best of our knowledge. 
(2) We provided an analysis for noise tolerance for (a)symmetric tensors, which is either not allowed or missing in the eigen-decomposition based methods.



\section{Tensor \& Subspace Iteration Preliminaries}

Let $[n]:=\{1,2,\ldots,n\}$. 
For a vector $\myvector{v}$, denote the $i^{\tha}$ element as $v_i$. 
For a matrix $\mymatrix{M}$, denote the $i^{\tha}$ row as $\mymatrix{m}^i$, $j^{\tha}$  column as $\mymatrix{m}_j$, and $(i,j)^{\tha}$ element as $m_{ij}$.  
Denote  the first $r$ columns of matrix $\mymatrix{M}$ as $\mmCols{M}{r}$.
An $n$-order (number of dimensions, a.k.a. modes) tensor, denoted as $\mytensor{T}$, is a multi-dimensional array with $n$ dimensions.
For a 3-order tensor $\mytensor{T}$, its $(i,j,k)^{\tha}$ entry is denoted by $T_{ijk}$. 
A tensor is called \emph{cubical} if every mode is of the same size.  A cubical tensor is called \emph{supersymmetric} (or simply refered as symmetric thereafter) if its elements remain constant under any permutation of the indices.

\paragraph{Tensor product} is also known as outer product. 
For $\myvector{a} \in \R^m, \myvector{b} \in \R^n$ and $\myvector{c} \in \R^p$, $\myvector{a} \otimes \myvector{b} \otimes \myvector{c} $ is a $m \times n \times p$ sized 3-way tensor with $(i,j,k)^\tha$ entry being $a_ib_jc_k, \forall 1 \le i \le m, 1\le j \le n, 1\le k \le p$. 
\paragraph{Multilinear Operation} 
The tensor-vector/matrix multilinear operation of $\mytensor{T}$ and matrices $\mymatrix{A}$, $\mymatrix{B}$, $\mymatrix{C}$ is defined as:
$\mytensor{T}(\mymatrix{A}, \mymatrix{B}, \mymatrix{C})_{ijk} = \sum_{a,b,c}\mytensor{T}_{abc}\mymatrix{A}_{ai}\mymatrix{B}_{bj}\mymatrix{C}_{ck}$.
The tensor-vector multiplication is defined similarly.
\paragraph{Tensor operator norm} The operator norm for tensor $\mytensor{T} \in \mathbb{R}^{d_1\times d_2 \times d_3}$ is defined as \\
$\| \mytensor{T} \|_{\op} = \max\limits_{\myvector{\mu}_i \in \mathbb{R}^{d_i}\backslash{ \{\myvector{0}\}}, i=1,2,3} \frac{| \mytensor{T} (\myvector{\mu}_1, \myvector{\mu}_2,\myvector{\mu}_3) |}{\| \myvector{\mu}_1\| \cdot \|\myvector{\mu}_2\| \cdot \|\myvector{\mu}_3 \|}$.
\paragraph{Matricization} is the process of reordering the elements of an $N$-way
tensor into a matrix.
The mode-$n$ matricization of
a tensor $\mytensor{T} \in \R^{I_1\times I_2\times \ldots \times I_N}$ is denoted by $\mytensor{T}_{(n)}$ and arranges the mode-$n$ fibers ~\cite{kolda2009tensor} to be the columns of the resulting matrix, i.e., the $(i_1, i_2,...,i_N)^\tha$ element of  the tensor maps to the $(i_n, j)^\tha$ element of the matrix, where $j=1+\sum_{k=1,k\neq n}^N (i_k-1)\prod_{m=1,m\neq n}^{k-1} I_m$.
\paragraph{Khatri-rao product} $\mymatrix{A}\! \odot\! \mymatrix{B}\!:=\!
	\begin{bmatrix}
	a_{11} \mymatrix{b}_1 	&\hspace{-1em}\cdots&\hspace{-1em} 	a_{1p} \mymatrix{b}_p \\
	\tiny{\vdots} 			    	&\hspace{-1em}\tiny{\ddots} &\hspace{-1em} 	\tiny{\vdots} \\
	a_{m1} \mymatrix{b}_1 	&\hspace{-1em}\cdots&\hspace{-1em} 	a_{mp} \mymatrix{b}_p 
	\end{bmatrix}$, 
for $\mymatrix{A}$$\in$$\R^{m\times p}$, $\mymatrix{B}$$\in$$\R^{n\times p}$.

\paragraph{Tensor CP decomposition} A tensor $\mytensor{T}\in \R^{d_1\times d_2\times d_3}$ has \emph{CP decomposition} if the tensor could be expressed exactly as a sum of $R$ rank-one components, i.e. $\exists ~ \bm{\Lambda}$, $\bm{A}$, $\bm{B}$, $\bm{C}$ such that $\mytensor{T} = \sum_{i=1}^R \lambda_i \bm{a}_i \otimes \bm{b}_i \otimes \bm{c}_i $, where $R$ is a positive integer, $\bm{\Lambda} = \diag([\lambda_1, \lambda_2, \cdots, \lambda_R])$, $\bm{A} = [\bm{a}_1,\bm{a}_2, \ldots, \bm{a}_R] \in \R^{d_1\times R} $ , $\bm{B} = [\bm{b}_1,\bm{b}_2, \ldots, \bm{b}_R] \in \R^{d_2\times R}$ and $\bm{C} = [\bm{c}_1,\bm{c}_2, \ldots, \bm{c}_R] \in \R^{d_3\times R}$. If so, we donote the CP decomposition as $\mytensor{T}=  \llbracket \bm{\Lambda} ; \bm{A}, \bm{B}, \bm{C} \rrbracket$ and call $\bm{A}$, $\bm{B}$, $\bm{C}$ factors of this CP decomposition. The \emph{rank} of $\mytensor{T}$ is the smallest number of rank-one components that sum to $\mytensor{T}$.

\paragraph{Subspace similarity}
\begin{definition}[Subspace Similarity~\cite{zhu2013angles}]  \label{def:subspace_similarity}
Let $S_1$, $S_2$ be two $m$-dimension proper subspaces in $\mathbb{R}^n$ spanned respectively by columns of two basis matrices $\mymatrix{M}_1, \mymatrix{M}_2$. Let $\mymatrix{M}_2^c$ be the basis matrix for the complement subspace of $S_2$. The principal angle  $\theta$  formed by $S_1$ and $S_2$ is 
$ \cos(\theta) = \min\limits_{\myvector{y} \in \mathbb{R}^m} \frac{\| \mymatrix{M}_1^\top \mymatrix{M}_2 \myvector{y} \| }{\| \mymatrix{M}_2 \myvector{y} \|} = \sigma_{\text{min}}(\mymatrix{M}_1^\top \mymatrix{M}_2)$,   
$\sin(\theta) = \max\limits_{\myvector{y} \in \mathbb{R}^{n-m}} \frac{\| \mymatrix{M}_1^\top \mymatrix{M}_2^c \myvector{y} \| }{\| \mymatrix{M}_2^c \myvector{y} \|} = \sigma_{\text{max}}(\mymatrix{M}_1^\top \mymatrix{M}_2^c)$,  
$\tan(\theta) = \frac{\sin(\theta)}{\cos(\theta)} = \frac{\sigma_{\text{max}}(\mymatrix{M}_1^\top \mymatrix{M}_2^c)}{\sigma_{\text{min}}(\mymatrix{M}_1^\top \mymatrix{M}_2)}$, where $\sigma_{\text{min}}(\cdot)$ / $\sigma_{\text{max}}(\cdot)$ denotes the smallest / greatest singular value of a matrix.
	\end{definition}

\section{Asymmetric Tensor Decomposition Model}
Consider a rank-$R$ asymmetric tensor 
$\mytensor{T}\in \R^{d\times d\times d}$
with latent factors $\mymatrix{\Lambda}$, $\mymatrix{A}$, $\mymatrix{B}$ and $\mymatrix{C}$
\begin{equation}\label{eq:generative}
	 \mytensor{T}=  \llbracket \mymatrix{\Lambda} ; \mymatrix{A}, \mymatrix{B}, \mymatrix{C} \rrbracket
	 \equiv \sum_{i=1}^R \lambda_i \myvector{a}_i \otimes \myvector{b}_i \otimes \myvector{c}_i 
\end{equation}
where $\mymatrix{\Lambda}$$=$$\diag([\lambda_1, \cdots, \lambda_R])$, 
$\mymatrix{A}$$=$$[\myvector{a}_1, \ldots, \myvector{a}_R] \in \R^{d\times R}$ and $\mm{A}^\top \mm{A}$$=$${\identitymatrix}$ (similarly for $\mymatrix{B}$, $\mymatrix{C}$).
Without loss of generality, we assume $\lambda_1 > \lambda_2 > \cdots > \lambda_R >0$. Our analysis applies to general order-$n$ symmetric and asymmetric tensors. In this paper, $\mm{A}, \mm{B}, \mm{C}$ are all orthonormal matrices (can be generalized to linearly independent components), and therefore the tensor we find CP decomposition on has a unique orthogonal decomposition, based on Kruskal's condition \cite{kruskal1977three}. 

\textbf{Orthogonal Constraints.}
Although we restrict our discussion to orthogonal CP decompositions, our method applies to a more general setting of linearly independent components. A conventional technique called whitening  can be used to construct an orthogonal tensor without loss of information compared to the original tensor with linearly independent components, a practical setting for various machine learning problems such as topic modeling and community detection. 

Our goal is to discover a CP decomposition with $R$ orthogonal components that best approximates the observed $\widehat{\mytensor{T}}$, and it can be formulated as solving the following optimization problem:
\begin{align}
\argmin_{\!\est{\Lambda}\!, \!\est{A}\!,\!\est{B}\!,\!\est{C}\!}
    \left\lVert \widehat{\mytensor{T}}\! -\! \llbracket \est{\Lambda} ; \est{A}, \est{B}, \est{C} \rrbracket \right\rVert_{\mathsf{F}}^2 
    { \text{s.t. } }
     \Lambda^*_{i,j}\!=\!0, \forall i\!\neq \!j,\!\mymatrix{A}^{*\top}\!\!\! \est{A}\! =\!{\identitymatrix}\!,\! \mymatrix{B}^{*\top}\!\!\! \est{B} \!=\!\identitymatrix\!, \!\mymatrix{C}^{*\top}\!\!\! \est{C} \!=\!\identitymatrix\!
    \label{eqn:objective}
\end{align}
We denote the estimated singular values and factor matrices as $\est{\Lambda}$, $\est{A}$, $\est{B}$ and $\est{C}$ respectively.

\subsection{Difficulty of Asymmetric Tensor Decomposition}
Asymmetric tensor decomposition is more difficult than symmetric tensor decomposition due to the following reasons: (1) the number of parameters required to be estimated is a factor of the tensor order more than the symmetric tensor decomposition (2) the missing symmetry imposes additional difficulty for simultaneous recovery of top-$r$ components of the tensor.

\paragraph{Symmetrization Instability} Existing works often assume that an asymmetric tensor can be symmetrized by a multilinear operation, i.e., $\mt{T}(\mm{M}_a, \mm{M}_b, \identitymatrix)$ becomes symmetric, and thus only prove convergence of symmetric tensor decomposition. Here the symmetrization matrices $\mm{M}_a = \mt{T}(\mv{b},\identitymatrix,\identitymatrix)^\top$ $\mt{T}(\identitymatrix,\identitymatrix,\mv{a})^{-1}$ and $\mm{M}_b = \mt{T}(\identitymatrix,\mv{b},\identitymatrix)^{\top}$ $(\mt{T}{(\identitymatrix,\identitymatrix,\mv{a})^{\top}})^{-1}$ with $\mv{a}$ and $\mv{b}$ sampled from a unit sphere. For a proof of the symmetrization, see Appendix~\ref{app:symmetrization}.
However, the computation of $\mm{M}_a$ and $\mm{M}_b$ can be unstable due to the inversion of $\mt{T}(\identitymatrix,\identitymatrix,\mv{a})^{-1}$. Specifically, the inversion of $\mt{T}(\identitymatrix,\identitymatrix,\mv{a})$ can be ill-conditioned, i.e., the condition number $\kappa(\mt{T}(\identitymatrix,\identitymatrix,\mv{a})) = \frac{\max_i{\lambda_i (\mv{a}^\top \mv{c}_i)}}{\min_i{\lambda_i (\mv{a}^\top \mv{c}_i)}}$ can be high. 
Therefore, we consider the direct asymmetric tensor decomposition.

\section{Simultaneous Asymmetric Tensor Decomposition}
One way to solve the trilinear optimization problem in
Equation~\eqref{eqn:objective} is through the alternating least square
(ALS) method \cite{carroll1970analysis, harshman1970foundations,
  kolda2009tensor}. 
The ALS (without orthognalization) approach fixes $\mm{B},\mm{C}$ to compute a closed form solution for $\mm{A}$, then fixes $\mm{A},\mm{C}$ for $\mm{B}$, and fixes $\mm{A},\mm{B}$  for $\mm{C}$. The alternating updates are repeated until the convergence criterions are satisfied. Fixing all but one factor matrix, the problem reduces to a linear least-squares problem over the matricized tensor
\begin{equation}{
\argmin\limits_{\est{A},\est{\Lambda}} \lVert \widehat{\mytensor{T}}_{(1)} -\est{A} \est{\Lambda} (\est{C} \odot \est{B})^\top \rVert_\fro^2,}
\end{equation}
where there exists a closed form solution $\est{A}\est{\Lambda} = \widehat{\mytensor{T}}_{(1)}[(\est{C}\odot \est{B})^\top ]^\dagger$, using the pseudo-inverse. 
ALS converges quickly and is usually robust to noise in
practice. However, the convergence theory of ALS for asymmetric tensor is not well understood.  We fill the gap in this paper by introducing an
\emph{alternating subspace iteration} (ASI) as shown in Algorithm~\ref{algo:main}, for 
asymmetric tensors.

We provide the convergence rate proof of our \ouralgoshort for asymmetric tensor using two steps. \textbf{(1)} Under some \emph{$r$-sufficient initialization condition} (defined in Definition~\ref{def:init_cond}),  we prove an $O(\log(\log(\frac{1}{\epsilon})))$ convergence rate of ASI (Algorithm~\ref{algo:main}). \textbf{(2)} We propose a \emph{\nameInit} (Algorithm~\ref{algo:init}), and prove that after  $O({1}/{\log{\frac{\lambda_r}{\lambda_{r+1}}}})$ steps of matrix subspace iteration, $r$-sufficient initialization condition is satisfied.  We call our algorithm \emph{\ourfullalgo} (\ouralgoshort).

\subsection{ASI under $r$-sufficient Initialization Condition}
 \label{sec:als}
 We define the sufficient initialization condition in Definition~\ref{def:init_cond} under which our \ourfirstalgo algorithm is guaranteed to converge to the true factors of the tensor $\mytensor{T}$.

\begin{definition}[$r$-Sufficient Initialization Condition]\label{def:init_cond}
The $r$-sufficient initialization condition is satisfied if  
$\tan\left(\mmCols{A}{r}, \mymatrix{Q}_{\mmCols{A}{r}}^{(0)}\right) < 1$, $\tan\left(\mmCols{B}{r}, \mymatrix{Q}_{\mmCols{B}{r}}^{(0)}\right) < 1$, and $\tan\left(\mmCols{C}{r}, ~ \bm{Q}_{\mmCols{C}{r}}^{(0)}\right) < 1$. 
\end{definition}

\begin{algorithm}[!htbp]
	\caption{\ourfirstalgo (\ourfirstalgoshort) for Asymmetric Tensor Decomposition}
	\label{algo:main}
	\begin{algorithmic}[1]
		\Require  $d \times d \times d$ sized tensor $\widehat{\mytensor{T}}$, a tentative rank $r$, precision $\epsilon$ 
		\Ensure  $\est{\Lambda}$, $\est{A}, \est{B}, \est{C}$, such that $\lVert \mmCols{A}{r} - \mymatrix{A} ^*\rVert$,  $ \lVert \mmCols{B}{r}  - \est{B}\rVert$, $ \lVert \mmCols{C}{r}  - \est{C}\rVert$ $\le$ $\epsilon$
		\State Initialize $\mathbf{Q}_A^{(0)}, \mathbf{Q}_B^{(0)}, \mathbf{Q}_C^{(0)}$ through Algorithm~\ref{algo:init}
		\For{$k = 0$ \textbf{to} $K=O(\log{( \log{ \frac{1}{\epsilon} } )})$}
		\State  $\mymatrix{Q}_A^{(k+1)} \mymatrix{R}_A^{(k+1)} \leftarrow   \qr{\widehat{\mytensor{T}}_{(1)} (\mymatrix{Q}_C^{(k)}\odot \mymatrix{Q}_B^{(k)})}$  
		\State  $\mymatrix{Q}_B^{(k+1)} \mymatrix{R}_B^{(k+1)} \leftarrow   \qr{\widehat{\mytensor{T}}_{(2)} (\mymatrix{Q}_C^{(k)}\odot \mymatrix{Q}_A^{(k+1)})}$
		\State $\mymatrix{Q}_C^{(k+1)} \mymatrix{R}_C^{(k+1)} \leftarrow   \qr{\widehat{\mytensor{T}}_{(3)} (\mymatrix{Q}_B^{(k+1)}\odot \mymatrix{Q}_A^{(k+1)})}$
		\EndFor
		 \State $(\est{\Lambda}, \est{A}, \est{B}, \est{C})$ $\leftarrow$ Algorithm~\ref{algo:lambda}($\widehat{\mytensor{T}}$, $r$, $\mymatrix{Q}_A^{(K)}$, $\mymatrix{Q}_B^{(K)}$, $\mymatrix{Q}_C^{(K)}$) 
	\end{algorithmic}
\end{algorithm}

Under a satisfaction of the $r$-sufficient initialization condition in
Definition~\ref{def:init_cond}, we update the
components $ \mymatrix{Q}_A^{(k+1)}$, $ \mymatrix{Q}_B^{(k+1)}$ and $
\mymatrix{Q}_C^{(k+1)}$ as in line 3,4,5 of Algorithm~\ref{algo:main}.
We save on expensive matrix inversions over
$(\mymatrix{Q}_C^{(k)}\odot \mymatrix{Q}_B^{(k)})$ as
$(\mymatrix{Q}_C^{(k)}\odot \mymatrix{Q}_B^{(k)}) =
[(\mymatrix{Q}_C^{(k)}\odot \mymatrix{Q}_B^{(k)})^\top]^\dagger$ due
to the orthogonality of $\mymatrix{Q}_B^{(k)}$ and
$\mymatrix{Q}_C^{(k)}$. 
We obtain the following conditional convergence theorem. 
\begin{theorem}[Noiseless Conditional Simultaneous Convergence]\label{thm:main-result}
Under the $r$-sufficient initialization condition in definition~\ref{def:init_cond} and noiseless scenario,  after $K=O(\log(\log(\frac{1}{\epsilon})))$ steps, our \ourfirstalgo in Algorithm~\ref{algo:main} recovers the estimates of the factors $\est{a}_i$, $\est{b}_i$, and $\est{c}_i$ that correspond to the top-$r$ true components with largest $\lambda_i$ up to sign flip, i.e., $\|  \myvector{a}_i -\myvector{a}^*_i \|^2 \le 2\epsilon$, $\forall 1\le i \le r$. Similarly for $ \myvector{b}^*_i$, $\myvector{c}^*_i$, $\forall 1\le i \le r.$
 \end{theorem}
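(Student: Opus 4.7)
The plan is to track, for each mode $X \in \{A, B, C\}$, the principal-angle tangent $\tan_X^{(k)} := \tan\bigl(\mathbf{A}_{[:,1:r]}, \mathbf{Q}_X^{(k)}\bigr)$ from Definition~\ref{def:subspace_similarity} and show that one sweep of ASI contracts these three tangents quadratically, so that $K = O(\log\log(1/\epsilon))$ iterations suffice to drive them below $\epsilon$.

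\textbf{Step 1: unfold the ASI update algebraically.} In the noiseless orthogonal case $\widehat{\mytensor{T}}_{(1)} = \mathbf{A}\bm{\Lambda}(\mathbf{C}\odot\mathbf{B})^\top$. Using the Khatri--Rao/Hadamard identity $(\mathbf{U}\odot\mathbf{V})^\top(\mathbf{X}\odot\mathbf{Y}) = (\mathbf{U}^\top\mathbf{X}) \ast (\mathbf{V}^\top\mathbf{Y})$ (here $\ast$ denotes the Hadamard product), the argument of line~3 of Algorithm~\ref{algo:main} collapses to
$$\widehat{\mytensor{T}}_{(1)}(\mathbf{Q}_C^{(k)}\odot\mathbf{Q}_B^{(k)}) \;=\; \mathbf{A}\bm{\Lambda}\bigl(\mathbf{W}_C^{(k)} \ast \mathbf{W}_B^{(k)}\bigr),\qquad \mathbf{W}_X^{(k)} := \mathbf{X}^\top \mathbf{Q}_X^{(k)}.$$
Since $\mathbf{A}$ is orthonormal, the QR step yields $\mathbf{W}_A^{(k+1)} \mathbf{R}_A^{(k+1)} = \bm{\Lambda}\bigl(\mathbf{W}_C^{(k)} \ast \mathbf{W}_B^{(k)}\bigr)$, and the triangular factor $\mathbf{R}_A^{(k+1)}$ will cancel when forming ratios; this cancellation is the clean algebraic payoff of using orthogonalized ALS over raw ALS.

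\textbf{Step 2: block split and tangent identity.} Partition each $\mathbf{W}_X^{(k)}\in\mathbb{R}^{R\times r}$ row-wise into its top-$r$ block $\mathbf{W}_{X,\mathrm{t}}^{(k)}$ and bottom $(R-r)$-block $\mathbf{W}_{X,\mathrm{b}}^{(k)}$. Definition~\ref{def:subspace_similarity} gives $\tan_X^{(k)} = \sigma_{\max}\bigl(\mathbf{W}_{X,\mathrm{b}}^{(k)} (\mathbf{W}_{X,\mathrm{t}}^{(k)})^{-1}\bigr)$. Substituting the update from Step~1 and exploiting the diagonal structure of $\bm{\Lambda}$ produces the mode-$A$ recursion
$$\mathbf{W}_{A,\mathrm{b}}^{(k+1)}\bigl(\mathbf{W}_{A,\mathrm{t}}^{(k+1)}\bigr)^{-1} \;=\; \bm{\Lambda}_{\mathrm{b}} \bigl(\mathbf{W}_{C,\mathrm{b}}^{(k)} \ast \mathbf{W}_{B,\mathrm{b}}^{(k)}\bigr) \bigl(\mathbf{W}_{C,\mathrm{t}}^{(k)} \ast \mathbf{W}_{B,\mathrm{t}}^{(k)}\bigr)^{-1} \bm{\Lambda}_{\mathrm{t}}^{-1}.$$
Analogous identities for modes $B$ and $C$ follow from lines~4--5 of the algorithm, using the already-updated iterates.

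\textbf{Step 3: the Hadamard contraction (the main obstacle).} The crux is to show
$$\tan_A^{(k+1)} \;\le\; C\,\frac{\lambda_{r+1}}{\lambda_r}\,\tan_B^{(k)}\,\tan_C^{(k)}$$
for an absolute constant $C$, and the matching inequalities for modes $B$ and $C$. This is the hardest step because the Hadamard product does not factor under the operator norm the way the Kronecker product does. I would attack it entrywise: under $\tan_B^{(k)},\tan_C^{(k)}<1$ (the $r$-sufficient initialization, which I must also verify is preserved by the induction), the top blocks $\mathbf{W}_{B,\mathrm{t}}^{(k)}, \mathbf{W}_{C,\mathrm{t}}^{(k)}$ are well-conditioned near-orthogonal $r\times r$ matrices, which forces $\bigl\|(\mathbf{W}_{C,\mathrm{t}}^{(k)} \ast \mathbf{W}_{B,\mathrm{t}}^{(k)})^{-1}\bigr\| = O(1)$; meanwhile every entry of $\mathbf{W}_{X,\mathrm{b}}^{(k)}$ is $O(\tan_X^{(k)})$, so a Frobenius-to-operator bound gives $\bigl\|\mathbf{W}_{C,\mathrm{b}}^{(k)} \ast \mathbf{W}_{B,\mathrm{b}}^{(k)}\bigr\| \lesssim \sqrt{R}\,\tan_B^{(k)}\tan_C^{(k)}$. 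Combined with the sandwich by $\bm{\Lambda}_{\mathrm{b}}$ and $\bm{\Lambda}_{\mathrm{t}}^{-1}$ in Step~2 this yields the claimed contraction.

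\textbf{Step 4: iterate and convert subspace error to column-wise error.} Coupling the three quadratic recursions, the quantity $\log(1/\max_X \tan_X^{(k)})$ at least doubles per iteration (the factor $\lambda_{r+1}/\lambda_r \le 1$ only helps), so $K = O(\log\log(1/\epsilon))$ sweeps bring every $\tan_X^{(K)}$ below $\epsilon$. Since $\sin\theta \le \tan\theta$ and the $\mathbf{Q}_X^{(K)}$ are orthonormal, a standard Pythagorean argument converts this into $\|\mathbf{a}_i-\mathbf{a}_i^*\|^2 \le 2\epsilon$ once Algorithm~\ref{algo:lambda} extracts columns ordered by the recovered singular values. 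The correct identification of the top-$r$ components (and not some other $r$-subset) is a consequence of the fact that the $\bm{\Lambda}_{\mathrm{t}}^{-1}$ in the Step-2 identity amplifies large-$\lambda_i$ directions most strongly, while the sign-flip ambiguity is intrinsic to orthogonal CP decomposition.
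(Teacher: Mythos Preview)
Your overall architecture matches the paper's (Appendix~\ref{sec:convergence}, Lemmas~\ref{thm:main_convergence} and~\ref{thm:simul_convergence}): derive a one-step contraction $t_A^{(k+1)} \lesssim (\lambda_{r+1}/\lambda_r)\, t_B^{(k)} t_C^{(k)}$ via the Khatri--Rao/Hadamard identity, iterate to get the $\log\log$ rate, then convert subspace error to columnwise error. Steps~1 and~2 are essentially the paper's setup.

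The genuine gap is Step~3. Your claim that $\bigl\|(\mathbf{W}_{C,\mathrm{t}}^{(k)} \ast \mathbf{W}_{B,\mathrm{t}}^{(k)})^{-1}\bigr\| = O(1)$ follows from the factors being ``well-conditioned near-orthogonal'' is false in general: the Hadamard product of two perfectly conditioned orthogonal matrices can be singular---take $\mathbf{W} = \tfrac{1}{\sqrt{2}}\bigl(\begin{smallmatrix}1 & 1\\ 1 & -1\end{smallmatrix}\bigr)$, for which $\mathbf{W}\ast\mathbf{W}=\tfrac{1}{2}\bigl(\begin{smallmatrix}1 & 1\\ 1 & 1\end{smallmatrix}\bigr)$ has rank one. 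So the entrywise route does not control the denominator. Even on the numerator side, your Frobenius-to-operator bound inserts a $\sqrt{R}$ prefactor; with any constant $C>1$ in front, the bare hypothesis $\tan<1$ no longer forces contraction or preserves the $r$-sufficient condition inductively, so the doubling claim in Step~4 also breaks. The paper sidesteps both issues by invoking Hadamard-product singular-value inequalities $\sigma_{\max}(\mathbf{M}_1\ast\mathbf{M}_2)\le \sigma_{\max}(\mathbf{M}_1)\sigma_{\max}(\mathbf{M}_2)$ and $\sigma_{\min}(\mathbf{M}_1\ast\mathbf{M}_2)\ge \sigma_{\min}(\mathbf{M}_1)\sigma_{\min}(\mathbf{M}_2)$ (citing~\cite{liu2008hadamard}), which yield the constant-free recursion $t_A^{(k+1)}\le(\lambda_{r+1}/\lambda_r)\,t_B^{(k)}t_C^{(k)}$ directly and make the $r$-sufficient condition self-improving from the first step. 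A secondary gap: your Step~4 ``standard Pythagorean argument'' is not quite standard. The paper's Lemma~\ref{thm:simul_convergence} first uses Cauchy interlacing on $\sigma_{\min}(\mathbf{A}_{[:,1:i]}^\top \mathbf{Q}_{(i)})$ to propagate $\tan(\mathbf{A}_{[:,1:r]},\mathbf{Q}_{(r)})\le\epsilon$ down to every $i\le r$, and only then extracts $(\mathbf{q}_i^\top\mathbf{a}_i)^2 \ge \cos^2(\cdot)-\sin^2(\cdot) \ge 1-2\epsilon^2$; without the interlacing step you do not get per-column control from a single subspace-angle bound.
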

Theorem~\ref{thm:main-result} guarantees that the estimated factors recovered using \ourfirstalgoshort converges to the true factors $\true{A}$, $\true{B}$ and $\true{C}$ when noiseless. We also provided the guarantee for the noisy case in Section~\ref{sec:robust}. 
The convergence rate of \ourfirstalgo is $\log(\log(\frac{1}{\epsilon}))$ when the $r$-sufficient initialization condition is satisfied. The convergence result requires careful manipulation of three different modes. Most ALS methods assume a relaxation to asymmetric tensors, however the existing works only provide convergence results for symmetric tensors. Our work closes the gap between theory and practice.
The proof sketch is in Appendix~\ref{sec:convergence}.
We now propose a novel initialization method in Algorithm~\ref{algo:init} which guarantees that the $r$-Sufficient Initialization Condition is satisfied. 

\begin{algorithm}[!htbp]
	\caption{\nameInit }
	\label{algo:init}
	\begin{algorithmic}[1]
		\Require  
		Tensor $\widehat{\mytensor{T}}$,  $r$
		\Ensure  $\mm{Q}_A^{(0)}, \mm{Q}_B^{(0)}, \mm{Q}_C^{(0)}$
		\State 
		$\mv{e}_i$ $\leftarrow$ $i^\tha$ column of identity matrix
		\If{ $\widehat{\mt{T}}$ is asymmetric}
		\State $\mm{M}^A \leftarrow \sum_{i=1}^d \widehat{\mt{T}}(\identitymatrix, \identitymatrix, \mv{e}_i)\widehat{\mt{T}}(\identitymatrix, \identitymatrix, \mv{e}_i)^\top$
		\State  $\mm{M}^B \leftarrow \sum_{i=1}^d \widehat{\mt{T}}(\mv{e}_i, \identitymatrix, \identitymatrix)\widehat{\mt{T}}(\mv{e}_i, \identitymatrix, \identitymatrix)^\top$
		\State $\mm{M}^C \leftarrow  \sum_{i=1}^d \widehat{\mt{T}}(\mm{I},\mv{e}_i, \mm{I})^\top\widehat{\mt{T}}(\mm{I},\mv{e}_i, \mm{I})$
		\Else			
		\State $\mm{M}^A \leftarrow \widehat{\mt{T}}(\identitymatrix, \identitymatrix, \mv{v}^C)$
		\Comment{$v^C_i = \text{trace}(\widehat{\mytensor{T}}(\identitymatrix, \identitymatrix,\mv{e}_i))$}
		\State 
		$\mm{M}^B \leftarrow \widehat{\mt{T}}(\mv{v}^A, \mm{I}, \mm{I})$  
		 \Comment{$v^A_i = \text{trace}(\widehat{\mytensor{T}}(\mv{e}_i, \identitymatrix,\identitymatrix))$}
		\State $\mm{M}^C \leftarrow \widehat{\mt{T}}(\mm{I},\mv{v}^B, \mm{I})^\top$
		 \Comment{$v^B_i = \text{trace}(\widehat{\mytensor{T}}(\identitymatrix, \mv{e}_i,\identitymatrix))$} 
		\EndIf		
		\State $\mm{Q}_A^{(0)} \leftarrow$ output of Algorithm ~\ref{algo:matsub} on  $\mm{M}^A$ 
		\State $\mm{Q}_B^{(0)}  \leftarrow$ output of Algorithm ~\ref{algo:matsub} on  $\mm{M}^B$
		\State $\mm{Q}_C^{(0)}  \leftarrow$ output of Algorithm ~\ref{algo:matsub} on  $\mm{M}^C$
	\end{algorithmic}
\end{algorithm} 
 \subsection{$r$-Sufficient Initialization: \nameInit$+$Matrix Subspace Iteration}\label{sec:init}
We provide a guaranteed $r$-Sufficient Initialization  $\bm{Q}_{A}^{(0)}, \bm{Q}_{B}^{(0)}, \bm{Q}_{C}^{(0)}$ using a 2-step procedure: 
\begin{itemize}[leftmargin=*,itemsep=0em,topsep=0em]
    \item Prepare matrix $\mm{M}^{{A}}$ ($\mm{M}^{{B}}$, $\mm{M}^{{C}}$) such that the left eigenspace is the column space of $\mm{A}$ ($\mm{B}$, $\mm{C}$). Unlike in \cite{sharan2017orthogonalized} or \cite{wang2017tensor}, Algorithm~\ref{algo:init} recovers  $\mm{M}^{{A}}$ with preserved order of tensor components.
    \item Recover $r$-sufficient $\mm{Q}_A^{(0)}$ (same for $\mm{Q}_B^{(0)}$ and $\mm{Q}_C^{(0)}$) from the matrices above, achieved by Algorithm~\ref{algo:matsub} almost surely in the noiseless case (the discussion of noisy setting is deferred to section~\ref{sec:robust}).
\end{itemize}

We assume a gap between the $r^\tha$ and the $(r+1)^\tha$ singular values for all $r \le R$.
Lemma~\ref{thm:matrixsubspaceiteration} in Appendix~\ref{app:matrixsubspaceiteration}  provides the key intuition behind our initialization procedure. 
Lemma~\ref{thm:matrixsubspaceiteration} shows that given a matrix $\mm{M}\in\R^{d\times d}$, matrix subspace iteration in Algorithm~\ref{algo:matsub} recovers the left eigenspace spanned by eigenvectors of $\mm{M}$ corresponding to $p$ largest eigenvalues.  Therefore, matrix subspace iteration provides insight into how the factors should be initialized. It suggests that as long as we find a matrix whose left eigenspace is the column space of $\true{A}$, we can use matrix subspace iteration to prepare an initialization for \ourfirstalgoshort. 
\begin{algorithm}[!htbp]
	\caption{Matrix Subspace Iteration}
	\label{algo:matsub}
	\begin{algorithmic}[1]
		\Require  
		Matrix $\mm{M}$, $r$
		\Ensure  Left invariant subspace approximation $\mm{Q}^{(J)}$ 
		\State Initialize random orthogonal $\mm{Q}^{(0)} \in \R^{d\times r}$ from \emph{Haar} distribution ~\cite{Mezzadri2006} 
		\For{ $j = 1$ \textbf{to} $J=O(\log(C)/\log(|\frac{\lambda_{r}}{\lambda_{r+1}} |))$}
		\State $\mm{Q}^{(j)} \mm{R}^{(j)}  \leftarrow\qr{ \mymatrix{M}\mymatrix{Q}^{(j-1)}}$
		\EndFor
	\end{algorithmic}
\end{algorithm} 

\begin{theorem}[Noiseless]\label{th:initGuarantee}
Assume that $C\ge 1$ (otherwise $r$-Sufficient Initialization Condition is met after one iteration),  after we run Algorithm~\ref{algo:init} and \ref{algo:matsub} with $J=O(\log(C)/\log(|\frac{\lambda_{r}}{\lambda_{r+1}} |))$ steps,  we guarantee under noiseless scenario, up to sign flip only 
$
\tan\left(\mmCols{A}{r}, \mm{Q}_{A}^{(0)}\right) < 1, \text{ same for } \mm{Q}_{B}^{(0)}\text{ and }\mm{Q}_{C}^{(0)}.
$
\end{theorem}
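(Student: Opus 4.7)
My plan is to split the argument into two stages matching the two phases of the initialization: first I would show that each of the matrices $\mm{M}^A, \mm{M}^B, \mm{M}^C$ constructed in Algorithm~\ref{algo:init} admits an eigendecomposition whose top-$r$ invariant subspace is exactly $\mmCols{A}{r}$, $\mmCols{B}{r}$, $\mmCols{C}{r}$ respectively, with a positive spectral gap; then I would invoke Lemma~\ref{thm:matrixsubspaceiteration} to translate the standard contraction of matrix subspace iteration into the required tangent bound.

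For the first stage, I would substitute the noiseless CP form $\widehat{\mytensor{T}} = \sum_{j=1}^R \lambda_j \mv{a}_j \otimes \mv{b}_j \otimes \mv{c}_j$ into each line of Algorithm~\ref{algo:init} and exploit orthonormality. In the asymmetric branch, $\widehat{\mytensor{T}}(\identitymatrix, \identitymatrix, \mv{e}_i) = \sum_j \lambda_j (\mv{c}_j^\top \mv{e}_i)\, \mv{a}_j \mv{b}_j^\top$; squaring produces cross terms $\lambda_j\lambda_k (\mv{c}_j^\top \mv{e}_i)(\mv{c}_k^\top \mv{e}_i)\mv{a}_j(\mv{b}_j^\top \mv{b}_k)\mv{a}_k^\top$, where $\mv{b}_j^\top \mv{b}_k = \delta_{jk}$ collapses the sum to its diagonal and the outer summation $\sum_{i=1}^d (\mv{c}_j^\top \mv{e}_i)^2 = \|\mv{c}_j\|^2 = 1$ gives $\mm{M}^A = \mm{A}\,\diag(\lambda_1^2, \ldots, \lambda_R^2)\,\mm{A}^\top$. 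In the symmetric branch, the trace identity reduces to $\mv{v}^C = \sum_j \lambda_j \mv{a}_j$ via $\text{trace}(\mv{a}_j\mv{a}_j^\top) = 1$, and then $\mv{v}^{C\top}\mv{a}_j = \lambda_j$ yields the same form $\mm{M}^A = \sum_j \lambda_j^2 \mv{a}_j \mv{a}_j^\top$. The assumption $\lambda_1 > \lambda_2 > \cdots > \lambda_R > 0$ then makes the top-$r$ left eigenspace exactly $\mmCols{A}{r}$, with strictly positive spectral gap $\lambda_r^2 - \lambda_{r+1}^2$. Analogous mode permutations handle $\mm{M}^B$ and $\mm{M}^C$.

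For the second stage, I would apply Lemma~\ref{thm:matrixsubspaceiteration} to $\mm{M}^A$, whose eigenvalues are $\lambda_j^2$. The lemma supplies a contraction of the form $\tan(\mmCols{A}{r}, \mm{Q}^{(j)}) \le (\lambda_{r+1}^2/\lambda_r^2)^{j}\, \tan(\mmCols{A}{r}, \mm{Q}^{(0)})$. Because $\mm{Q}^{(0)}$ is drawn from the Haar distribution, the event $\{\det(\mmCols{A}{r}^\top \mm{Q}^{(0)}) \neq 0\}$ holds almost surely, so the initial tangent is finite almost surely and can be bounded by some $C \ge 1$. Solving $C \cdot (\lambda_{r+1}/\lambda_r)^{2J} < 1$ yields $J > \log(C)/(2 \log(\lambda_r/\lambda_{r+1})) = O(\log(C)/\log(\lambda_r/\lambda_{r+1}))$, which matches the iteration count prescribed in Algorithm~\ref{algo:matsub}. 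The same argument applied to $\mm{M}^B$ and $\mm{M}^C$ closes the proof. The residual sign ambiguity is intrinsic to eigenvectors and accounts for the ``up to sign flip only'' clause.

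The main obstacle I anticipate is the first-stage bookkeeping. The computation for $\mm{M}^A$ is straightforward once the right cancellations are identified, but executing the six analogous reductions (three modes $\times$ two branches) demands consistent mode relabeling and careful attention to which orthonormal factor kills the off-diagonal terms and which index is summed to produce the factor-of-one. Once this cancellation pattern is uniformly stated, the remainder is a direct application of the matrix-subspace-iteration lemma and a standard almost-sure genericity argument for the Haar initialization.
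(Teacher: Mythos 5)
Your proposal is correct and follows essentially the same route as the paper: first reduce $\mm{M}^A,\mm{M}^B,\mm{M}^C$ to the diagonalized form $\mm{A}\,\diag(\lambda_j^2)\,\mm{A}^\top$ (the paper's Lemma~\ref{thm:initAsymmetric} and Lemma~\ref{thm:initSymmetric}), then invoke Lemma~\ref{thm:matrixsubspaceiteration} with tolerance $1$ and the almost-sure full-rank condition for the Haar-distributed $\mm{Q}^{(0)}$ to obtain the stated iteration count. The factor of $2$ from squared singular values is absorbed in the $O(\cdot)$, exactly as you note.
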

Theorem~\ref{th:initGuarantee} guarantees that $r$-Sufficient Initialization Condition (Definition~\ref{def:init_cond}) is satisfied after $O(\log(C)/\log(|\frac{\lambda_{r}}{\lambda_{r+1}} |))$ steps of matrix subspace iteration.
The proof of Theorem~\ref{th:initGuarantee} (appendix) follows directly from Lemma~\ref{thm:matrixsubspaceiteration} by setting the convergence tolerance to 1.

\begin{algorithm}[!htbp]
	\caption{Singular Value Computation 
		}\label{algo:lambda}
	\begin{algorithmic}[1]
	\Require  $\widehat{\mytensor{T}}$, $r$, $\mymatrix{Q}_A^{(K)}$, $\mymatrix{Q}_B^{(K)}$, $\mymatrix{Q}_C^{(K)}$ 
	\Ensure  $\mathbf{\Lambda}^*, \est{A}, \est{B}, \est{C}$
	\For{$i = 1$ \textbf{to} $r$}
	\State $\lambda_i^* \leftarrow \widehat{\mytensor{T}}(\est{a}_i, \est{b}_i, \est{c}_i)$
	\Comment{where $\est{a}_i$, $\est{b}_i$, and $\est{c}_i$ denote the $i^
		\tha$ column of  $\mathbf{Q}_A^{(K)}$,  $\mathbf{Q}_B^{(K)}$, $\mathbf{Q}_C^{(K)}$}
	\EndFor
	\State $\est{\Lambda}\leftarrow \Diag(\lambda_1^*, \cdots , \lambda_r^*)$,  $\est{A} \leftarrow  {\mm{Q}_A}^{(K)}$,  $\est{B} \leftarrow  \mm{Q}_B^{(K)}$,  $\est{C} \leftarrow  \mm{Q}_C^{(K)}$ 
\end{algorithmic}
\end{algorithm}
	
\section{\nameInit}
For matrix subspace iteration in Algorithm~\ref{algo:matsub} to work, we prepare a matrix that spans space of eigenvectors of $\mymatrix{A}$ using  \emph{\nameInit} in Algorithm~\ref{algo:init} for symmetric and asymmetric tensors.  matrix subspace iteration is on $ \widehat{\mt{T}}(\identitymatrix, \identitymatrix, \mv{v}^C)$
where $\mv{v}^C_i  = \text{trace}(\widehat{\mytensor{T}}(\identitymatrix, \identitymatrix,\mv{e}_i)) ,  \forall i\in[d]$ for symmetric tensor, and is on $\sum_{i=1}^d \widehat{\mt{T}}(\identitymatrix, \identitymatrix, \mv{e}_i)\widehat{\mt{T}}(\identitymatrix, \identitymatrix, \mv{e}_i)^\top$ for asymmetric tensor. 
\subsection{Performance of \nameInit Algorithm for Symmetric Tensor}\label{para:improv}
Both the performance of symmetric tensor decomposition using rank-1 power
method~\cite{anandkumar2014tensor} and that of simultaneous power
method~\cite{wang2017tensor} will be improved 
using our initialization procedure.
Consider a symmetric tensor with orthogonal components $\mytensor{T} =
\sum_{i = 1}^{R} \lambda_i \mv{u}_i \otimes \mv{u}_i \otimes
\mv{u}_i $ where $\mv{u}_i\perp \mv{u}_j$ and $\mv{u}_i^\top \mv{u}_i=1$.
We start with a vector $\bm{v}^C$ which is the collection of the trace of each third mode slice of tensor $\mytensor{T}$, i.e., the $i^\tha$ element of vector $\mv{v}^C$ is
${v}^C_i =  \sum_{l=1}^{d} \sum_{m=1}^{R} \lambda_m u_{lm}u_{lm} u_{im}  \forall i\in[d].$ 
We then take mode-3 product of tensor $\mytensor{T}$ with the above vector $\mv{v}^C$. As a result, we have Lemma~\ref{thm:initSymmetric}.
%

\textbf{Rank-1 Power Method} with deflation~\cite{anandkumar2014tensor} uses random unit vector initializations, and the power iteration $\bm{v}^{(k+1)} =
	\mytensor{T}(\bm{I}, \bm{v}^{(k)}, \bm{v}^{(k)})$ converges to the tensor
	eigenvector with the largest $| c_i \lambda_i |$ among $| c_1\lambda_1 |,
	\cdots, | c_R \lambda_R |$ where $c_i = \myvector{v}^\top
	\myvector{u}_i$.
A drawback of this property is that random
initialization does {\em not\/} guarantee convergence to the
eigenvector with the largest eigenvalue.

\begin{lemma}[\nameInit improves the rank-1 power method]
For each power iteration loop in rank-1 power method with deflation\cite{anandkumar2014tensor} for symmetric tensors, procedure~\ref{algo:init} guarantees recovery of the eigenvector corresponding to the largest eigenvalue.
\end{lemma}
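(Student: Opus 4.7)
The plan is to show that the symmetric branch of Algorithm~\ref{algo:init} produces, up to sign, precisely the tensor eigenvector associated with the largest $\lambda$, after which any run of the rank-$1$ power method trivially converges to that component. Throughout, I work with $\widehat{\mytensor{T}} = \mytensor{T} = \sum_{i=1}^R \lambda_i \myvector{u}_i^{\otimes 3}$ where the $\myvector{u}_i$ are orthonormal and $\lambda_1 > \lambda_2 > \cdots > \lambda_R > 0$. A direct calculation gives $v^C_i = \mathrm{trace}(\mytensor{T}(\identitymatrix,\identitymatrix,\myvector{e}_i)) = \sum_j \lambda_j u_{ij}$, so $\myvector{v}^C = \sum_j \lambda_j \myvector{u}_j$. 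Using orthonormality to read off $\myvector{u}_j^{\top}\myvector{v}^C = \lambda_j$, I then get
\begin{equation*}
\mm{M}^A \;=\; \mytensor{T}(\identitymatrix,\identitymatrix,\myvector{v}^C) \;=\; \sum_j \lambda_j (\myvector{u}_j^{\top}\myvector{v}^C)\, \myvector{u}_j \myvector{u}_j^{\top} \;=\; \sum_j \lambda_j^2 \myvector{u}_j \myvector{u}_j^{\top},
\end{equation*}
which is symmetric PSD with eigenpairs $(\lambda_j^2, \myvector{u}_j)$ and strictly positive spectral gap $\lambda_1^2 - \lambda_2^2$.

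Next I invoke Lemma~\ref{thm:matrixsubspaceiteration} with target rank $r=1$: matrix subspace iteration on $\mm{M}^A$ contracts the principal angle at rate $(\lambda_2/\lambda_1)^{2J}$, so after $J = O(1/\log(\lambda_1/\lambda_2))$ iterations the output unit vector $\myvector{v}^{(0)}$ satisfies $\tan(\angle(\myvector{v}^{(0)},\myvector{u}_1)) < 1$. Writing $\myvector{v}^{(0)} = \sum_j c_j \myvector{u}_j$, the tangent bound gives $c_1^2 > \sum_{j\ge 2} c_j^2$, hence $|c_1| > |c_j|$ for every $j \ge 2$, and therefore $|c_1|\lambda_1 > |c_j|\lambda_1 > |c_j|\lambda_j$. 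Feeding $\myvector{v}^{(0)}$ as the starting iterate of the rank-$1$ power method and applying the convergence result recalled from \cite{anandkumar2014tensor}, the iterates converge to $\myvector{u}_{j^*}$ with $j^* = \arg\max_j |c_j \lambda_j|$; by the inequality just established, $j^* = 1$, so the iteration returns $\myvector{u}_1$, the eigenvector of largest $\lambda$.

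For each subsequent loop of the deflation scheme the argument is identical on the residual $\mytensor{T}^{(t)} = \mytensor{T} - \sum_{s<t} \lambda_{i_s} \myvector{u}_{i_s}^{\otimes 3}$: its orthogonal components are exactly the un-extracted $\myvector{u}_i$ with unchanged $\lambda_i$, so rerunning Algorithm~\ref{algo:init} on $\mytensor{T}^{(t)}$ yields a vector concentrated on the remaining component of largest $\lambda$. Inductively, the components are therefore extracted in strictly decreasing order of $\lambda$. The main obstacle in turning this sketch into a fully quantitative lemma is controlling the propagation of error: after finitely many subspace iterations $\myvector{v}^{(0)}$ is only $\varepsilon$-close to $\myvector{u}_1$, and each deflation step replaces $\mytensor{T}^{(t)}$ by a version contaminated by the approximation error of the extracted component. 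One must verify that this accumulated deflation noise, together with the finite-iteration tangent bound, still satisfies the hypothesis $|c_1|\lambda_1 > |c_j|\lambda_j$ on the residual tensor; this requires an inductive bookkeeping argument ensuring that the spectral gap $\lambda_{i_t}^2 - \lambda_{i_{t+1}}^2$ of $\mytensor{T}^{(t)}$ is never closed by the per-step error at which earlier components were recovered.
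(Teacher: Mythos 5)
Your proof is correct but takes a more circuitous route than the paper's. The paper's argument is essentially one line: take $\myvector{v}^C = \sum_m \lambda_m \myvector{u}_m$ \emph{itself} as the seed for the tensor power iteration; then $c_i := \myvector{v}^{C\top}\myvector{u}_i = \lambda_i$, so the ordering by $|c_i\lambda_i| = \lambda_i^2$ coincides with the ordering by $\lambda_i$, and the convergence result of \cite{anandkumar2014tensor} immediately delivers $\myvector{u}_1$ --- no matrix subspace iteration is invoked at all. You instead build $\mm{M}^A = \sum_j \lambda_j^2 \myvector{u}_j\myvector{u}_j^\top$, run Algorithm~\ref{algo:matsub} until $\tan(\angle(\myvector{v}^{(0)},\myvector{u}_1))<1$, decompose $\myvector{v}^{(0)}=\sum_j c_j\myvector{u}_j$, and observe $c_1^2 > \sum_{j\ge 2}c_j^2 \Rightarrow |c_1|\lambda_1 > |c_j|\lambda_j$. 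That chain is valid, and arguably tracks the literal content of Algorithm~\ref{algo:init} (which does call Algorithm~\ref{algo:matsub}) more faithfully than the paper's own justification, but it pays for the extra iteration loop and needs Lemma~\ref{thm:matrixsubspaceiteration}, whereas the paper's observation that $c_i=\lambda_i$ exactly makes the conclusion immediate without any iteration. One small caution: your concluding paragraph on deflation-error accumulation is an honest caveat, but in the noiseless setting of this lemma the deflated tensor is exact (the extracted components are the true $\myvector{u}_{i_s}$ in the limit), so the inductive bookkeeping you worry about does not actually arise here; it would be the relevant concern if this lemma were stated for noisy $\widehat{\mytensor{T}}$, which it is not.
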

 \nameInit for symmetric tensors recovers the top-$r$ subspace of the true factor $\true{U}$ as descending order of $\lambda_i^2$ is the same as descending order of $\lambda_i$.
Algorithm~\ref{algo:init} uses $v_k = \text{trace}\big(\mytensor{T}(\bm{I}, \bm{I}, \bm{e}_k)\big)$ and thus 
$\bm{v} 
= \sum_{m=1}^{R} \lambda_m \bm{u}_m.$
Therefore we obtain $c_i = \myvector{v}^\top \myvector{u}_i = \lambda_i$, and the power method converges to the eigenvector $\mv{u}_1$ which corresponds to the largest eigenvalue $\lambda_1$. 

\textbf{Rank-$r$ Simultaneous Power Method} for symmetric tensors is also improved by  Algorithm~\ref{algo:init}.
\begin{lemma}[\nameInit improves the rank-$r$ simultaneous power method] Algorithm~\ref{algo:init} provides an initialization for the matrix subspace iteration used in ~\cite{wang2017tensor} requiring no sampling and averaging, in contrast to $O(\frac{1}{\gamma^2}\log{d})$ steps of
iterations in\cite{wang2017tensor} where $\gamma = \min\limits_{1\le i \le R} \frac{\lambda_i^2 - \lambda_{i+1}^2}{\lambda_i^2}$.
\end{lemma}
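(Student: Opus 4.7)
The plan is to show that Algorithm~\ref{algo:init}, applied to a symmetric orthogonal tensor $\mytensor{T} = \sum_{i=1}^R \lambda_i \myvector{u}_i^{\otimes 3}$, deterministically produces a matrix whose top-$r$ eigenspace coincides with $\spn(\myvector{u}_1, \ldots, \myvector{u}_r)$, thereby removing the need for any sampling-and-averaging step used in~\cite{wang2017tensor}. First, I would compute the trace vector $\myvector{v}^C$: by definition $v^C_k = \trace(\mytensor{T}(\identitymatrix, \identitymatrix, \myvector{e}_k)) = \sum_{l=1}^d \sum_{m=1}^R \lambda_m u_{lm} u_{lm} u_{km}$, and since $\|\myvector{u}_m\|=1$ this collapses to $v^C_k = \sum_{m=1}^R \lambda_m u_{km}$, i.e. $\myvector{v}^C = \sum_{m=1}^R \lambda_m \myvector{u}_m$.

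Next, I would plug $\myvector{v}^C$ into the mode-3 contraction used to form $\mm{M}^A$. Using multilinearity and orthogonality $\myvector{u}_i^\top \myvector{u}_j = \delta_{ij}$, one obtains
\begin{equation*}
\mm{M}^A = \mytensor{T}(\identitymatrix, \identitymatrix, \myvector{v}^C) = \sum_{i=1}^R \lambda_i \, \myvector{u}_i \myvector{u}_i^\top \bigl(\myvector{u}_i^\top \myvector{v}^C\bigr) = \sum_{i=1}^R \lambda_i^2 \, \myvector{u}_i \myvector{u}_i^\top .
\end{equation*}
Thus $\mm{M}^A$ is symmetric with eigendecomposition $\{(\lambda_i^2, \myvector{u}_i)\}_{i=1}^R$. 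Because $\lambda_1 > \lambda_2 > \cdots > \lambda_R > 0$, the ordering of $\lambda_i^2$ agrees with the ordering of $\lambda_i$, so the top-$r$ eigenvectors of $\mm{M}^A$ are exactly $\myvector{u}_1, \ldots, \myvector{u}_r$.

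From here the lemma is immediate from the matrix-subspace-iteration guarantee already established: invoking Theorem~\ref{th:initGuarantee} (equivalently Lemma~\ref{thm:matrixsubspaceiteration}) on $\mm{M}^A$, Algorithm~\ref{algo:matsub} returns $\mm{Q}_A^{(0)}$ with $\tan(\mmCols{U}{r}, \mm{Q}_A^{(0)}) < 1$ after $O\!\left(\log C / \log(\lambda_r^2/\lambda_{r+1}^2)\right)$ iterations, and this entire construction is deterministic — no random sampling of tensor slices and no averaging is required to build $\mm{M}^A$. To contrast with~\cite{wang2017tensor}, I would briefly note that their sampling-based initialization must average enough random contractions to drive the empirical spectral gap close to the true $\gamma = \min_i (\lambda_i^2 - \lambda_{i+1}^2)/\lambda_i^2$ with high probability, which by standard matrix-concentration arguments costs $O(\gamma^{-2} \log d)$ samples/iterations; our closed-form $\myvector{v}^C$ bypasses this concentration step entirely.

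The only mildly delicate point is verifying that the simple choice $v^C_k = \trace(\mytensor{T}(\identitymatrix,\identitymatrix,\myvector{e}_k))$ produces a matrix whose eigengap does not degenerate — but this is automatic in the orthogonal symmetric case because $\myvector{u}_i^\top \myvector{v}^C = \lambda_i \neq 0$ for every $i \le R$, so every true component contributes to $\mm{M}^A$ with weight $\lambda_i^2$. No second-order terms, cross terms, or sampling variance appear; the algebra closes exactly. This is the shortest part of the proof and I expect it to be routine; the main obstacle, if any, is purely expositional — making clear that the deterministic $O(1)$ construction of $\mm{M}^A$ replaces the entire randomized subroutine of~\cite{wang2017tensor}, not just its final iteration count.
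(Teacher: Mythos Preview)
Your proposal is correct and follows essentially the same approach as the paper: compute the trace vector $\myvector{v}^C=\sum_m \lambda_m \myvector{u}_m$, contract to get $\mytensor{T}(\identitymatrix,\identitymatrix,\myvector{v}^C)=\mymatrix{U}\mymatrix{\Lambda}^2\mymatrix{U}^\top$ (the paper's Lemma~\ref{thm:initSymmetric}), and observe that this deterministic matrix replaces the sampled-and-averaged $\mytensor{T}(\identitymatrix,\identitymatrix,\bar{\myvector{w}})$ used by Wang et al. Your write-up is in fact more detailed than the paper's own justification, which is just the short paragraph following the lemma.
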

In the initialization phase of the algorithm in~\cite{wang2017tensor},
the paper generates random Gaussian vectors $\bm{w}_1, \cdots,
\bm{w}_L \sim \mathcal{N}(\bm{0},\bm{I}_d)$ and let
$\bar{\bm{w}} = \frac{1}{L} \sum_{l=1}^{L}
\mytensor{T}(\bm{I}, \bm{w}_l, \bm{w}_l)$. By doing
$\mytensor{T}(\bm{I}, \bm{I}, \bar{\bm{w}})$,
\cite{wang2017tensor} builds a matrix with approximately squared
eigenvalues and preserved eigengaps. We improve this phase by
simply obtaining vector $\bm{v}$ as $ (\bm{v})_k =
\text{trace}\big(\mytensor{T}(\bm{I}, \bm{I},
\bm{e}_k)\big) $ and substitute $\mytensor{T}(\bm{I}, \bm{I}, \bar{\bm{w}})$ by $\mytensor{T}(\bm{I}, \bm{I}, \bm{v})$.


{Our \nameInit for the symmetric case is slightly different from the asymmetric case for consideration of computational complexity (saving the multiplication of two $d\times d$ matrices). However, the asymmetric \nameInit applies to symmetric case and allows a larger noise. Symmetric \nameInit requires the operator norm of the noise tensor to be 
$O(\delta_0\min\{\frac{\lambda_r^2-\lambda_{r+1}^2}{4\|\mv{\lambda}\|}, \frac{\lambda_r-\lambda_{r+1}}{2d^{(3/4)}}\})$, while the asymmetric \nameInit requires the operator norm of the noise tensor to be
$O(\delta_0\min\{\frac{\lambda_r^2-\lambda_{r+1}^2}{8\|\mv{\lambda}\|}, \frac{\lambda_r-\lambda_{r+1}}{2\sqrt{d}}\})$. }

\subsection{Performance of \nameInit Algorithm for Asymmetric Tensor}\label{para:asymInit}
We provide the first initialization approach for asymmetric tensors, and prove the first convergence result for asymmetric tensors. With our \nameInit which involves a different procedure for asymmetric tensors than for symmetric tensors, the top-$r$ components convergence rate of asymmetric tensors matches that of symmetric tensors.  
Now let us consider the asymmetric tensor $\mytensor{T}$ with orthogonal components $\true{A}$, $\true{B}$ and $\true{C}$. We start with taking the quadratic form of each slice matrix along the third mode of the tensor, i.e., $\mt{T}(\identitymatrix,\identitymatrix,\mv{e}_i)\mt{T}(\identitymatrix,\identitymatrix,\mv{e}_i)^\top$. We obtain
$
\mt{T}(\identitymatrix,\identitymatrix,\mv{e}_i)\mt{T}(\identitymatrix,\identitymatrix,\mv{e}_i)^\top
=\sum_{j=1}^R \lambda_j^2 c_{ij}^2 \mv{a}_j \mv{a}_j^\top$  
 which implies 
$\sum\limits_{i=1}^d \mt{T}(\identitymatrix,\identitymatrix,\mv{e}_i)\mt{T}(\identitymatrix,\identitymatrix,\mv{e}_i)^\top$ 
$=$ $\sum\limits_{j=1}^R \lambda_j^2 \mv{a}_j \mv{a}_j^\top
$
as $\mm{C}$ is orthonormal.
\begin{lemma}[Preserved Component Order]\label{thm:initAsymmetric}
Aggregated quadratic form of slices of asymmetric tensor satisfies	$\sum_{i=1}^d \mt{T}(\identitymatrix,\identitymatrix,\mv{e}_i)\mt{T}(\identitymatrix,\identitymatrix,\mv{e}_i)^\top = \mm{A} \mm{\Lambda} \mm{A}^\top$ where $\mm{\Lambda}$  $=$ $\diag((\lambda_m)_{1\le m\le R})$.
\end{lemma}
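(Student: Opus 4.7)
The plan is a direct computation that unfolds the slice $\mt{T}(\identitymatrix,\identitymatrix,\mv{e}_i)$ using the CP decomposition and then collapses the resulting double sum by exploiting orthonormality of $\mm{B}$ and $\mm{C}$. First I would substitute $\mt{T} = \sum_{j=1}^R \lambda_j\,\mv{a}_j\otimes\mv{b}_j\otimes\mv{c}_j$ into the multilinear operation: contracting the third mode with $\mv{e}_i$ selects the $i$-th coordinate of each $\mv{c}_j$, giving
\[
\mt{T}(\identitymatrix,\identitymatrix,\mv{e}_i) \;=\; \sum_{j=1}^R \lambda_j\, c_{ij}\,\mv{a}_j \mv{b}_j^\top,
\]
a rank-$R$ matrix whose structure makes the subsequent quadratic form transparent.

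Second, I would form $\mt{T}(\identitymatrix,\identitymatrix,\mv{e}_i)\mt{T}(\identitymatrix,\identitymatrix,\mv{e}_i)^\top$, which expands into a double sum over indices $(j,k)$ with generic term $\lambda_j \lambda_k\, c_{ij} c_{ik}\,\mv{a}_j(\mv{b}_j^\top \mv{b}_k)\mv{a}_k^\top$. The orthonormality $\mm{B}^\top \mm{B} = \identitymatrix$ forces $\mv{b}_j^\top \mv{b}_k = \delta_{jk}$, killing every off-diagonal pair and leaving the single sum $\sum_{j=1}^R \lambda_j^2\, c_{ij}^2\,\mv{a}_j \mv{a}_j^\top$, which is exactly the intermediate identity displayed just before the lemma statement.

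Third, I would sum over $i\in[d]$ and move the summation inside, writing $\sum_{i=1}^d c_{ij}^2 = \|\mv{c}_j\|^2 = 1$ by orthonormality of $\mm{C}$. This yields $\sum_{j=1}^R \lambda_j^2\,\mv{a}_j \mv{a}_j^\top$, which in matrix form is $\mm{A}\,\diag(\lambda_1^2,\dots,\lambda_R^2)\,\mm{A}^\top$. (I read the $\mm{\Lambda}$ in the lemma statement as $\diag(\lambda_m^2)$; the exponent is the one consistent with the preceding display.)

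There is no real obstacle: the entire proof is three lines of bookkeeping, and the only thing worth emphasizing is \emph{why} the lemma is named ``Preserved Component Order.'' The resulting matrix is a symmetric eigendecomposition with eigenbasis $\mm{A}$ and eigenvalues $\lambda_j^2$ listed in the same monotone order as the tensor singular values $\lambda_j$, so running matrix subspace iteration (Algorithm~\ref{algo:matsub}) on it recovers the top-$r$ left singular subspace of $\mm{A}$ in the \emph{correct} order dictated by $\lambda_1>\cdots>\lambda_R$. That order preservation is precisely what downstream guarantees in Theorem~\ref{th:initGuarantee} and Theorem~\ref{thm:main-result} require from the initialization, so the lemma deserves its name despite the triviality of its proof.
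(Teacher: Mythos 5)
Your proof is correct and matches the paper's own derivation, which appears as the short inline computation directly preceding the lemma statement: slice to get $\sum_j \lambda_j c_{ij}\,\mv{a}_j\mv{b}_j^\top$, form the Gram matrix to kill cross-terms via $\mv{b}_j^\top\mv{b}_k = \delta_{jk}$, then sum over $i$ using $\sum_i c_{ij}^2 = 1$. You are also right to flag the typo: the lemma's $\mm{\Lambda} = \diag((\lambda_m))$ should read $\diag(\lambda_m^2)$, consistent with the display just above it and with the remark immediately after the lemma that "the descending order of $\lambda_i^2$ is the same as descending order of $\lambda_i$."
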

Our \nameInit for asymmetric tensors recovers the top-$r$ subspace of the true factors $\true{A}$, $\true{B}$, and $\true{C}$ as the descending order of $\lambda_i^2$ is the same as descending order of $\lambda_i$.

\section{Robustness of the Convergence Result}\label{sec:robust}
We now extend the convergence result to noisy asymmetric
tensors.
For symmetric tensors, there are a number of prior
efforts~\cite{sharan2017orthogonalized,wang2017tensor,anandkumar2016tensor}
showing that their decomposition algorithms are robust to noise. Such
robustness depends on restriction on tensor or structure of the noise such
as low column correlations of factor matrices
(in~\cite{sharan2017orthogonalized}) or symmetry of noise along with
the true tensor (in~\cite{wang2017tensor}).
We provide a robustness theorem of our algorithm under the following \emph{bounded noise} condition.
\begin{definition}[$\delta_0$-bounded Noise Condition] A tensor satisfies the $\delta_0$-bounded noise condition if the noise tensor is bounded in operator norm $\forall 1\leq r\leq R$:
		$
            \|\mm{\Phi}\|_{\op} \leq 
            \min\left\{ \frac{\sqrt{2}}{8}\frac{(\lambda_r - \lambda_{r+1})\epsilon}{\sqrt{r}},\delta_0\frac{\lambda_r^2-\lambda_{r+1}^2}{8\|\mv{\lambda}\|}, \delta_0\frac{\lambda_r -\lambda_{r+1}}{2\sqrt{d}}\right\}.
	$
\end{definition}
Under the bounded noise model, we have the following robustness result. 
\begin{theorem}[\ouralgoshort Convergence Guarantee]\label{thm:robust} Assume the tensor $\mt{T}$ permits a CP decomposition form $\llbracket \mymatrix{\Lambda} ; \mymatrix{A}, \mymatrix{B}, \mymatrix{C} \rrbracket + \mm{\Phi}$ where $\mymatrix{A}, \mymatrix{B}, \mymatrix{C}$ are orthonormal  matrices  and the noise tensor $\mm{\Phi}$ satisfies the $\delta_0$-bounded noise condition. For all $1\leq r\leq R$,
after $J=O(1/\log(|\frac{\lambda_{r}}{\lambda_{r+1}} |))$ matrix subspace iterations in procedure~\ref{algo:matsub} and $O(\log(\log \frac{1}{\epsilon}))$ \ourfirstalgo iterations in procedure~\ref{algo:main}, 
    \ouralgoshort is guaranteed to return estimated
    $\est{\mymatrix{\Lambda}}$,$\est{\mymatrix{A}}$,$\est{\mymatrix{B}}$ and $\est{\mymatrix{C}}$
with probability $> 1 - {O}(\delta_0)$.
   And the estimations satisfy, up to sign flip,  
	$
		\lVert \mv{a}_i - {\mv{{a}}}^*_i\rVert \le \epsilon, ~ \forall 1\le i\le r.$
		Similarly for ${\mv{{b}}}^*_i$ and ${\mv{{c}}}^*_i$ $\forall 1\le i\le r$.
\end{theorem}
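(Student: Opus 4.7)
The plan is to prove Theorem~\ref{thm:robust} by decomposing the analysis along the two phases of \ouralgoshort: (i) the initialization (Algorithm~\ref{algo:init} combined with matrix subspace iteration in Algorithm~\ref{algo:matsub}), and (ii) the alternating subspace iteration (Algorithm~\ref{algo:main}). I will show that under the $\delta_0$-bounded noise condition the initialization phase produces $\mm{Q}_A^{(0)},\mm{Q}_B^{(0)},\mm{Q}_C^{(0)}$ satisfying the $r$-sufficient initialization condition of Definition~\ref{def:init_cond} with probability at least $1 - O(\delta_0)$; then, starting from such an initialization, the iteration phase drives the per-factor error below $\epsilon$ within $O(\log\log(1/\epsilon))$ steps despite the noise. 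The final per-column guarantee follows by converting the resulting subspace-tangent bound into a column-wise bound using the orthonormality of the true factors.

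For the initialization phase, I would write $\widehat{\mm{M}}^A = \sum_i \widehat{\mt{T}}(\identitymatrix,\identitymatrix,\mv{e}_i)\widehat{\mt{T}}(\identitymatrix,\identitymatrix,\mv{e}_i)^\top$ as the noisy counterpart of the clean $\mm{M}^A = \mm{A}\mm{\Lambda}^2\mm{A}^\top$ from Lemma~\ref{thm:initAsymmetric}. Expanding $\widehat{\mt{T}} = \mt{T} + \mm{\Phi}$ and collecting cross terms gives $\|\widehat{\mm{M}}^A - \mm{M}^A\|_{\op} \le 2\|\mv{\lambda}\|\,\|\mm{\Phi}\|_{\op} + \|\mm{\Phi}\|_{\op}^2$, which the second piece of the noise bound keeps below $\delta_0(\lambda_r^2-\lambda_{r+1}^2)/2$. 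A Davis--Kahan / Wedin $\sin\Theta$ bound then controls the angle between the top-$r$ invariant subspace of $\widehat{\mm{M}}^A$ and the column space of $\mmCols{A}{r}$. Matrix subspace iteration on $\widehat{\mm{M}}^A$ contracts the tangent of this angle geometrically at rate $|\lambda_{r+1}/\lambda_r|^{2J}$, and a standard concentration estimate for Haar-distributed random orthogonal matrices (the smallest singular value of the projection of the random initial onto the top-$r$ true subspace is at least $c\,\delta_0/\sqrt{d}$ with probability $1-O(\delta_0)$) converts the required step count into $J = O(1/\log|\lambda_r/\lambda_{r+1}|)$ and consumes the $\delta_0\frac{\lambda_r-\lambda_{r+1}}{2\sqrt{d}}$ piece of the noise bound. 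The same argument applies to $\mm{Q}_B^{(0)}$ and $\mm{Q}_C^{(0)}$, and a union bound yields the $r$-sufficient initialization condition with probability $1 - O(\delta_0)$.

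For the iteration phase, I would extend the noiseless proof behind Theorem~\ref{thm:main-result} by tracking the subspace tangent under the three alternating QR updates with an added noise term. Writing
\begin{equation*}
\widehat{\mt{T}}_{(1)}(\mm{Q}_C^{(k)}\odot\mm{Q}_B^{(k)}) = \mm{A}\mm{\Lambda}\bigl((\mm{C}^\top\mm{Q}_C^{(k)})\odot(\mm{B}^\top\mm{Q}_B^{(k)})\bigr)^\top + \mm{\Phi}_{(1)}(\mm{Q}_C^{(k)}\odot\mm{Q}_B^{(k)}),
\end{equation*}
and using $\sigma_{\min}/\sigma_{\max}$ bounds on the Khatri--Rao product of two near-orthonormal matrices (whose singular values tend to $1$ as the tangents vanish), one derives a coupled recursion of the form
\begin{equation*}
\tan\theta_A^{(k+1)} \le \frac{\lambda_{r+1}}{\lambda_r}\tan\theta_B^{(k)}\tan\theta_C^{(k)} + \frac{C\,\|\mm{\Phi}\|_{\op}}{\lambda_r - \lambda_{r+1}},
\end{equation*}
and cyclically for $\theta_B^{(k+1)}$, $\theta_C^{(k+1)}$, where $\theta_A^{(k)}$ abbreviates the principal angle between $\mmCols{A}{r}$ and $\mm{Q}_A^{(k)}$. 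Starting from the $\tan\theta^{(0)}<1$ produced by the initialization phase, this yields double-exponential contraction until the tangents saturate at the noise floor $O(\|\mm{\Phi}\|_{\op}/(\lambda_r-\lambda_{r+1}))$. The first piece of the noise bound, $\|\mm{\Phi}\|_{\op} \le \tfrac{\sqrt{2}}{8}(\lambda_r-\lambda_{r+1})\epsilon/\sqrt{r}$, keeps this floor below $O(\epsilon/\sqrt{r})$. Converting the subspace-tangent bound into $\|\mv{a}_i - \est{\mv{a}}_i\| \le \epsilon$ (up to sign flip) uses the orthonormality of $\mm{A}$: after an optimal sign-and-permutation alignment, each column of $\mmCols{A}{r} - \mm{Q}_A^{(K)}$ is controlled by $\sin\theta_A^{(K)}$, and the $\sqrt{r}$ factor appears when decomposing the Frobenius-level bound into per-column bounds; the same argument applies to the $\mv{b}_i$'s and $\mv{c}_i$'s.

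The main obstacle will be the asymmetric alternating recursion in the iteration phase: unlike the symmetric power-method analysis of \cite{wang2017tensor}, each of the three updates perturbs a different factor, so the tangent recursion couples $\theta_A,\theta_B,\theta_C$ across modes. Making the quadratic contraction explicit while carrying a clean noise term through the Khatri--Rao products and the QR normalizations requires careful bookkeeping, in particular a joint induction on the three tangents together with a tight lower bound on $\sigma_{\min}\bigl((\mm{C}^\top\mm{Q}_C)\odot(\mm{B}^\top\mm{Q}_B)\bigr)$ expressed in terms of the individual tangents. The probabilistic factor $1 - O(\delta_0)$ is incurred entirely in the initialization and does not compound across the deterministic iteration, which explains why only the initialization-related pieces of the noise bound are scaled by $\delta_0$.
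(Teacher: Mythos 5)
Your high-level decomposition (initialization with probability $1-O(\delta_0)$, then noisy alternating subspace iteration with a noise floor, then a tangent-to-column conversion) matches the paper's structure via Lemmas~\ref{lemma:init_step}, \ref{lemma:convergence_step}, and \ref{thm:simul_convergence}. However, there are three concrete issues.

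First, your perturbation bound for the initialization matrix drops a factor of $d$. Each slice $\mm{\Phi}(\identitymatrix,\identitymatrix,\mv{e}_u)$ has operator norm at most $\|\mm{\Phi}\|_{\op}$, and summing the quadratic terms over $u=1,\ldots,d$ gives $\|\sum_u \mm{\Phi}(\identitymatrix,\identitymatrix,\mv{e}_u)\mm{\Phi}(\identitymatrix,\identitymatrix,\mv{e}_u)^\top\|_{\op} \le d\|\mm{\Phi}\|_{\op}^2$, not $\|\mm{\Phi}\|_{\op}^2$ (this is Lemma~\ref{lemma:init_perturb}). Without the $d$, the third piece of the $\delta_0$-bounded noise condition, $\delta_0\frac{\lambda_r-\lambda_{r+1}}{2\sqrt{d}}$, has no reason to carry a $1/\sqrt{d}$, and your bookkeeping won't close against the stated hypothesis.

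Second, your description of the initialization analysis conflates two different angles. Davis--Kahan controls the fixed angle between the top-$r$ invariant subspace of $\widehat{\mm{M}}^A$ and the column space of $\mmCols{A}{r}$; the matrix subspace iteration on $\widehat{\mm{M}}^A$ contracts the angle between $\mm{Q}^{(j)}$ and $\widehat{\mm{M}}^A$'s top-$r$ subspace, not the former. A two-step route (Davis--Kahan plus triangle inequality) is viable and genuinely different from the paper, but as written it is not coherent. The paper instead runs the tangent recursion directly on the noisy matrix and shows $t^{(j+1)} \le \max\{(\frac{d_{r+1}}{d_r})^\theta t^{(j)},\,1\}$ with $\theta=1-2u$ absorbing the noise via Facts~\ref{fact:convex}--\ref{fact:exp}; this one-shot argument avoids a separate Davis--Kahan step and avoids the additive-noise-floor bookkeeping entirely.

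Third, your tangent-to-column conversion relies on ``optimal sign-and-permutation alignment'' followed by distributing a Frobenius bound across columns. That does not give the claim you need: the theorem states that the $i$-th returned column is close to the $i$-th true component (ordered by $\lambda_i$), so no permutation is available and the alignment is forced to be the identity. The paper's Lemma~\ref{thm:simul_convergence} establishes this via Cauchy interlacing: $\tan(\mmCols{A}{i},\mmCols{Q}{i}) \le \tan(\mmCols{A}{r},\mmCols{Q}{r})$ for every prefix $i \le r$, and then a direct computation bounds $(\mv{q}_i^\top\mv{a}_i)^2 \ge 1-2\epsilon^2$. That nested-prefix argument is the ingredient your sketch is missing; without it you only control the subspace, not the individual columns in their correct positions.
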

The proof follows from the main convergence result~\ref{thm:main} and is in Appendix~\ref{sec:noise}. 
\begin{remark}[1]
  If the goal is to recover all components $\mmCols{A}{R}, \mmCols{B}{R}, \mmCols{C}{R}$, then the preservation of eigenvalue order is not required. Thus the bound on the operator norm of the noise tensor can be relaxed to 
  $O(\frac{\lambda_{\min}}{\sqrt{d}}\epsilon)$,
\end{remark}
\begin{remark}[2]
    For the robustness theorem the worst case is considered (rather than considering the average case associated with a specific family of noise distribution), without any structural assumption. 
In the general case, the noise can be ``malicious'' if there is a sharp
    angle between subspace of $\mm{\Phi}$ and subspace of
$\mytensor{T}$ for every modes.  
\end{remark}

\section{Experiments}\label{sec:exp}
Our method is general enough to be applied as core algorithms for many real-world applications (see~\cite{anandkumar2014tensor,wang2017tensor,huang2015online} for empirical successes). We are not sacrificing any generality by testing on synthetic data. Experimentally we justified convergence theorem~\ref{thm:robust}. Each setting is run 100 times and the [5 percentile, 95 percentile] plots are shown in the figures. 

\subsection{Baseline} 
We compare against the state-of-the-art baseline for asymmetric tensor decomposition, \emph{randomly initialized orthogonalized ALS}(r-ALS)~\cite{sharan2017orthogonalized} ({It is shown in~\cite{faber2003recent} that a preliminary version of ALS empirically outperforms a series of trilinear decomposition methods (DTLD, ATLD, SWATLD). Therefore we choose OALS, the state-of-the-art ALS, as our baseline}), and the state-of-the-art baseline for symmetric tensor decomposition, \emph{simultaneous power iteration}(SPI)~\cite{wang2017tensor}. 

We see scenarios as shown in Figure~\ref{fig:smaller_r_asym} that our \nameInit correctly recovers the top-$r$ components whereas random initialization in r-ALS~\cite{sharan2017orthogonalized} fails to identify the top-$r$ components. 
{We only use randomness for the initialization of matrix subspace iteration which converges to a ``good'' subspace with probability 1, whereas~\cite{sharan2017orthogonalized} randomly initializes OALS. In the noiseless case, we can recover the leading $r$-components with probability 1, which cannot be achieved by~\cite{sharan2017orthogonalized}. 
}
Sampling based initialization in SPI~\cite{wang2017tensor} inevitably introduces a high probability bound even when the observed data is noiseless, and is less robust than our \ouralgoshort under noise as shown in Figure~\ref{fig:smaller_r_sym}. 

\subsection{Symmetric vs Asymmetric}
Figure~\ref{fig:smaller_r_asym} and ~\ref{fig:smaller_r_sym}  illustrate the convergence rate comparison of our \ouralgoshort with the baselines, for asymmetric and  symmetric tensors respectively, when the rank $R$ is unknown to the algorithm. Our \ouralgoshort is always guaranteed to converge for both symmetric and asymmetric tensor, with a much better convergence rate than the baselines.


\begin{figure}[!htbp]
	\begin{minipage}{\textwidth}
		\centering
		\begin{subfigure}[c]{0.32\textwidth}
			\centering
			\includegraphics[width=\textwidth]{\fighome/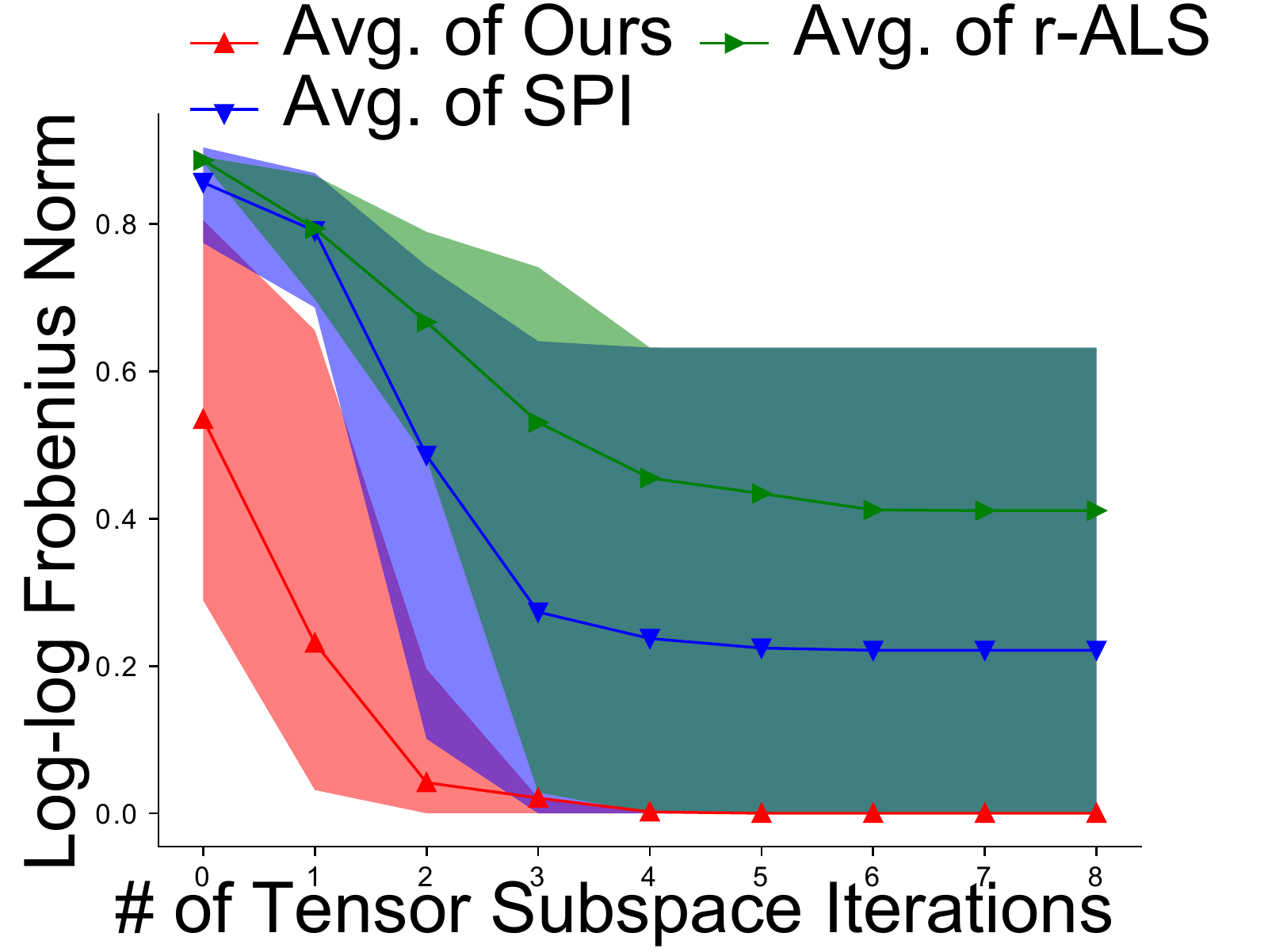}
			\caption{\scriptsize{$\lVert \mmCols{A}{r} - \est{A}\rVert_\fro$}}
			\label{fig:smaller_r_asym_ss1}
		\end{subfigure}
		\hfill
		\begin{subfigure}[c]{0.32\textwidth}
			\centering
			\includegraphics[width=\textwidth]{\fighome/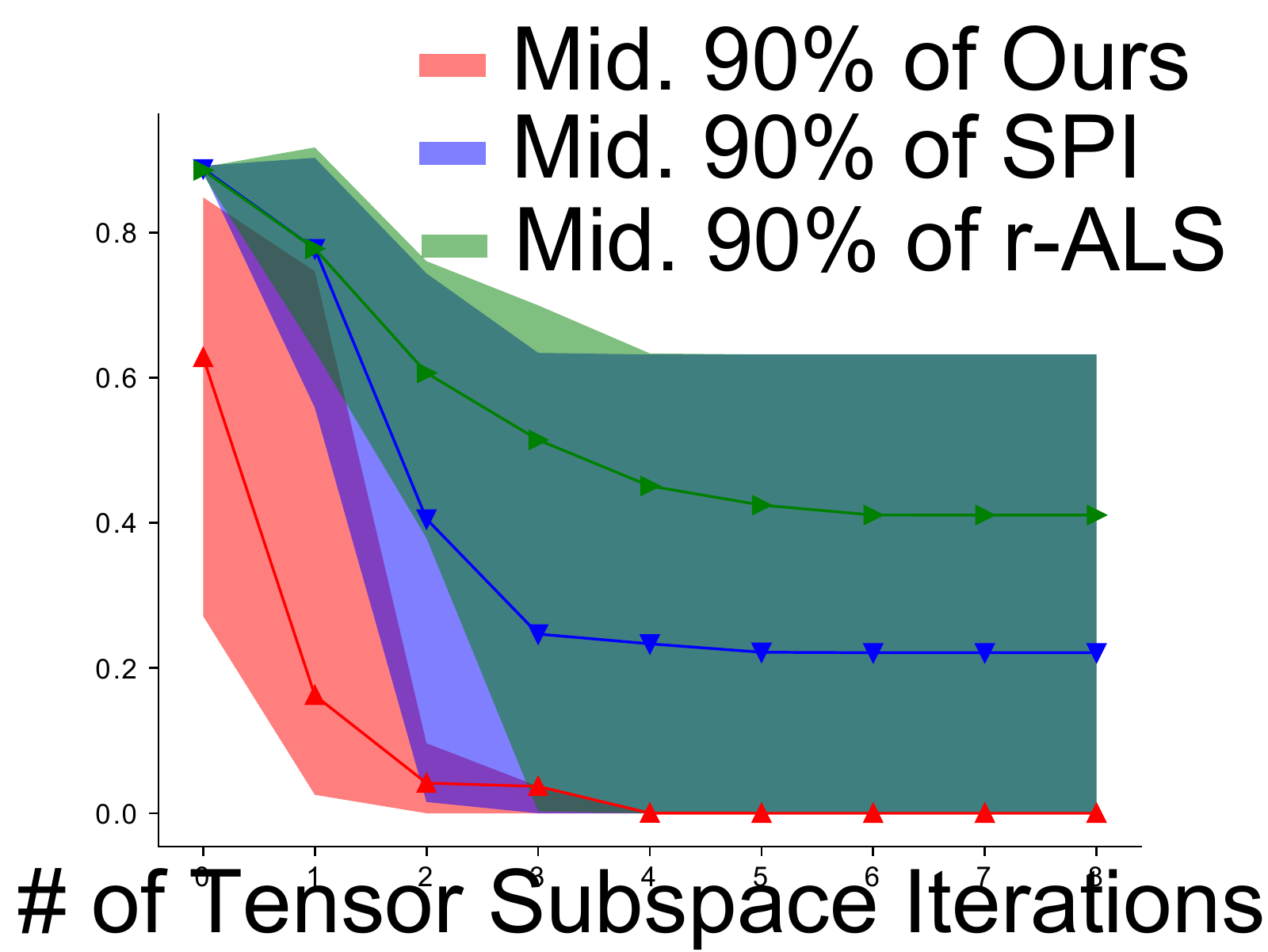}
			\caption{\scriptsize{$\lVert \mmCols{B}{r} - \est{B}\rVert_\fro$}}
			\label{fig:smaller_r_asym_ss2}
		\end{subfigure}
		\hfill
		\begin{subfigure}[c]{0.32\textwidth}
			\centering
			\includegraphics[width=\textwidth]{\fighome/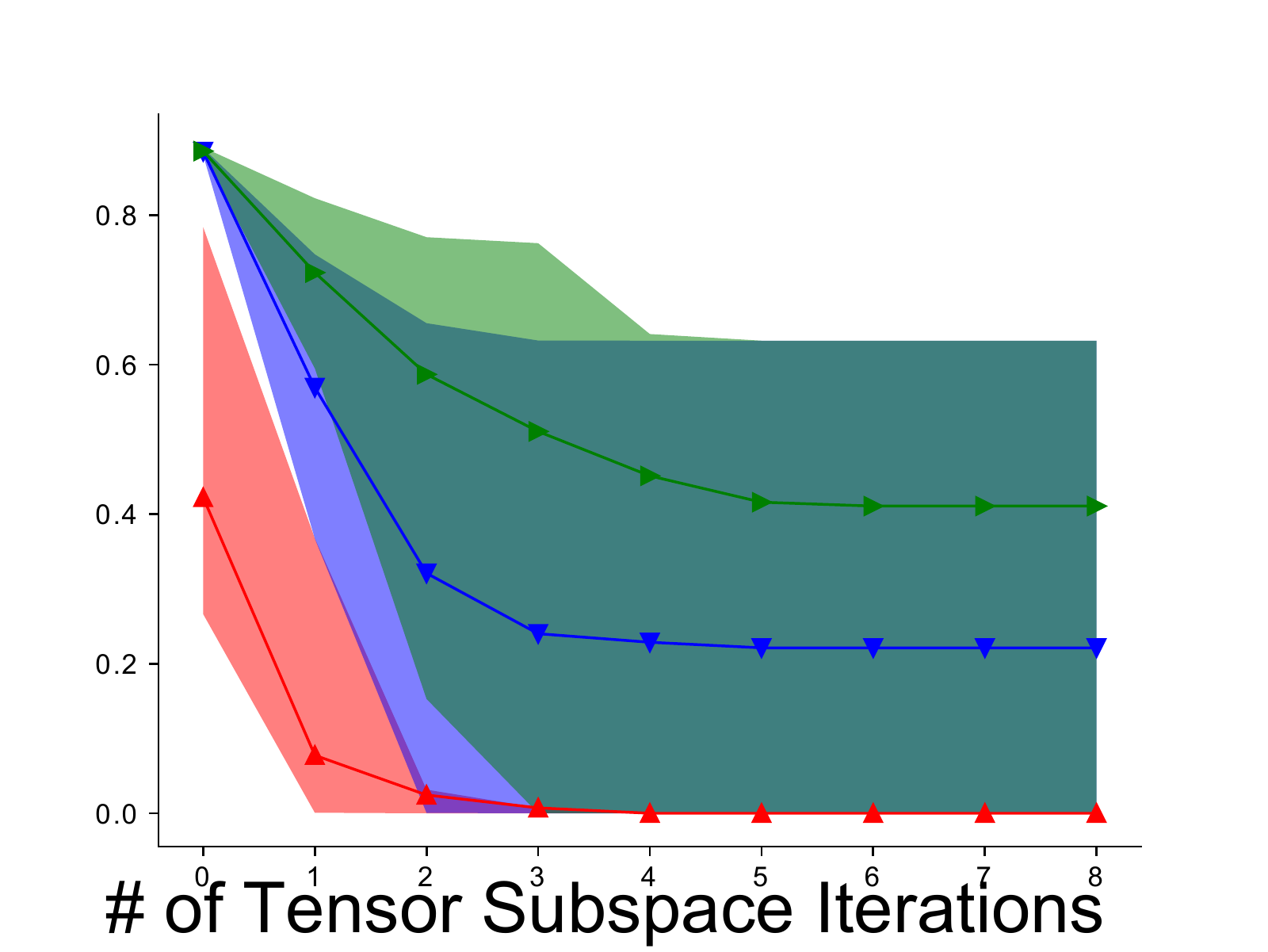}
			\caption{\scriptsize{$\lVert \mmCols{C}{r} - \est{C}\rVert_\fro$}}
			\label{fig:smaller_r_asym_ss3}
		\end{subfigure}
	\end{minipage}
	\begin{minipage}{\textwidth}
		\begin{subfigure}[c]{0.32\textwidth}
			\centering
			\includegraphics[width=\textwidth]{\fighome/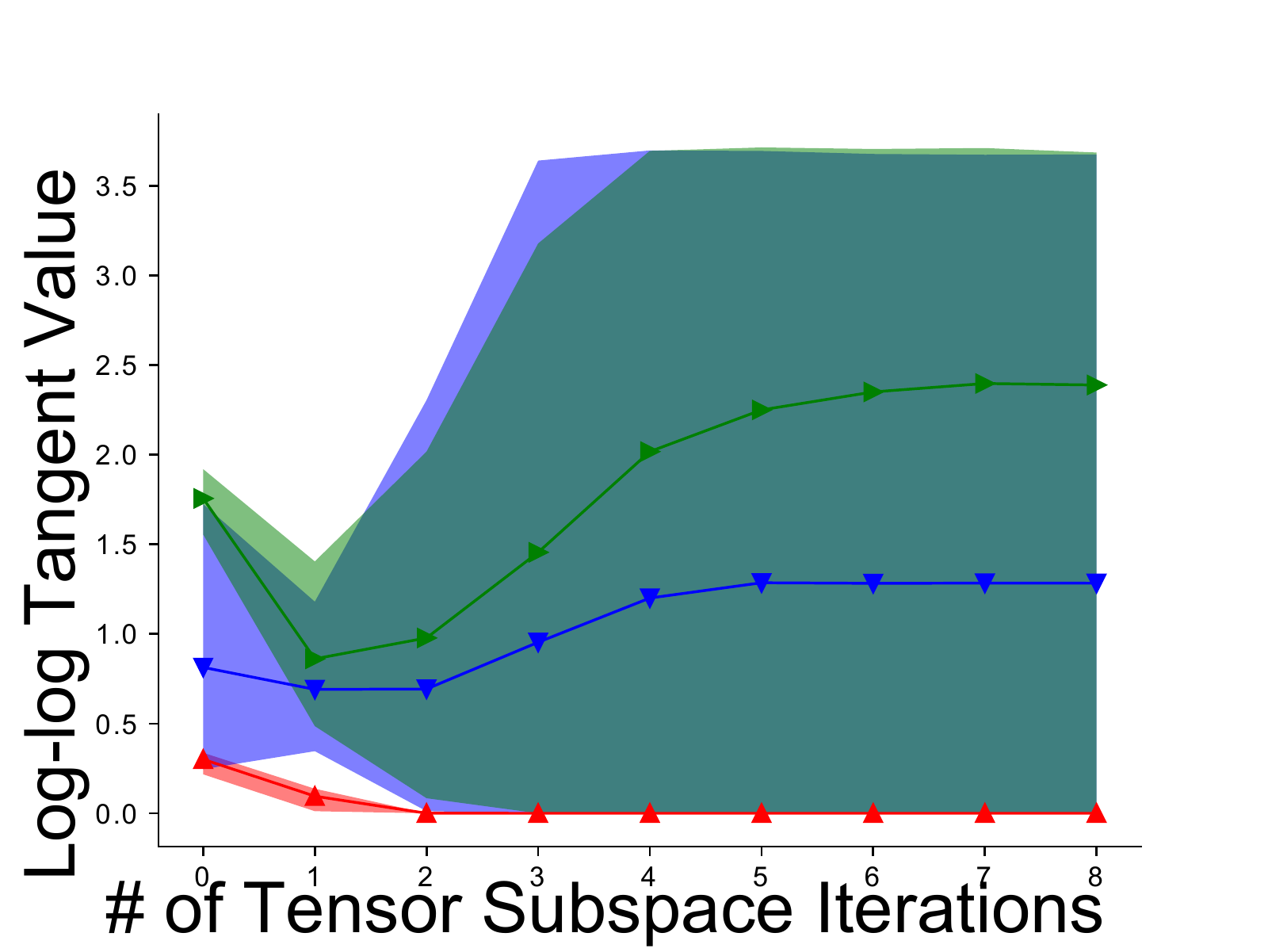}
			\caption{\scriptsize{$\tan{(\mmCols{A}{r}, \est{A})\!}$}}
			\label{fig:smaller_r_asym_tan1}
		\end{subfigure}
		\hfill
		\begin{subfigure}[c]{0.32\textwidth}
			\centering
			\includegraphics[width=\textwidth]{\fighome/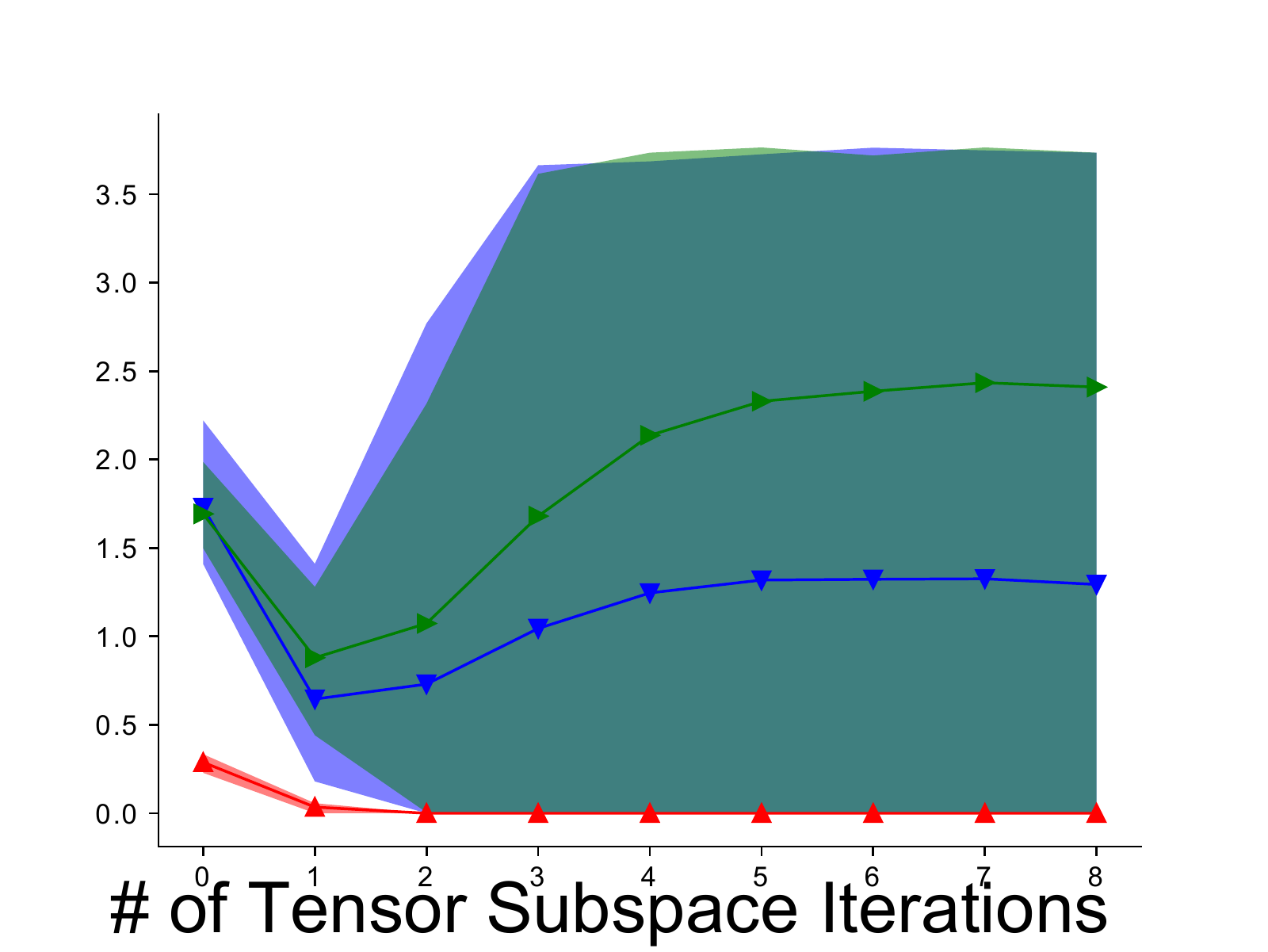}			\caption{\scriptsize{$\tan{(\mmCols{B}{r}, \est{B})}$}}
			\label{fig:smaller_r_asym_tan2}
		\end{subfigure}
		\hfill
		\begin{subfigure}[c]{0.32\textwidth}
			\centering
			\includegraphics[width=\textwidth]{\fighome/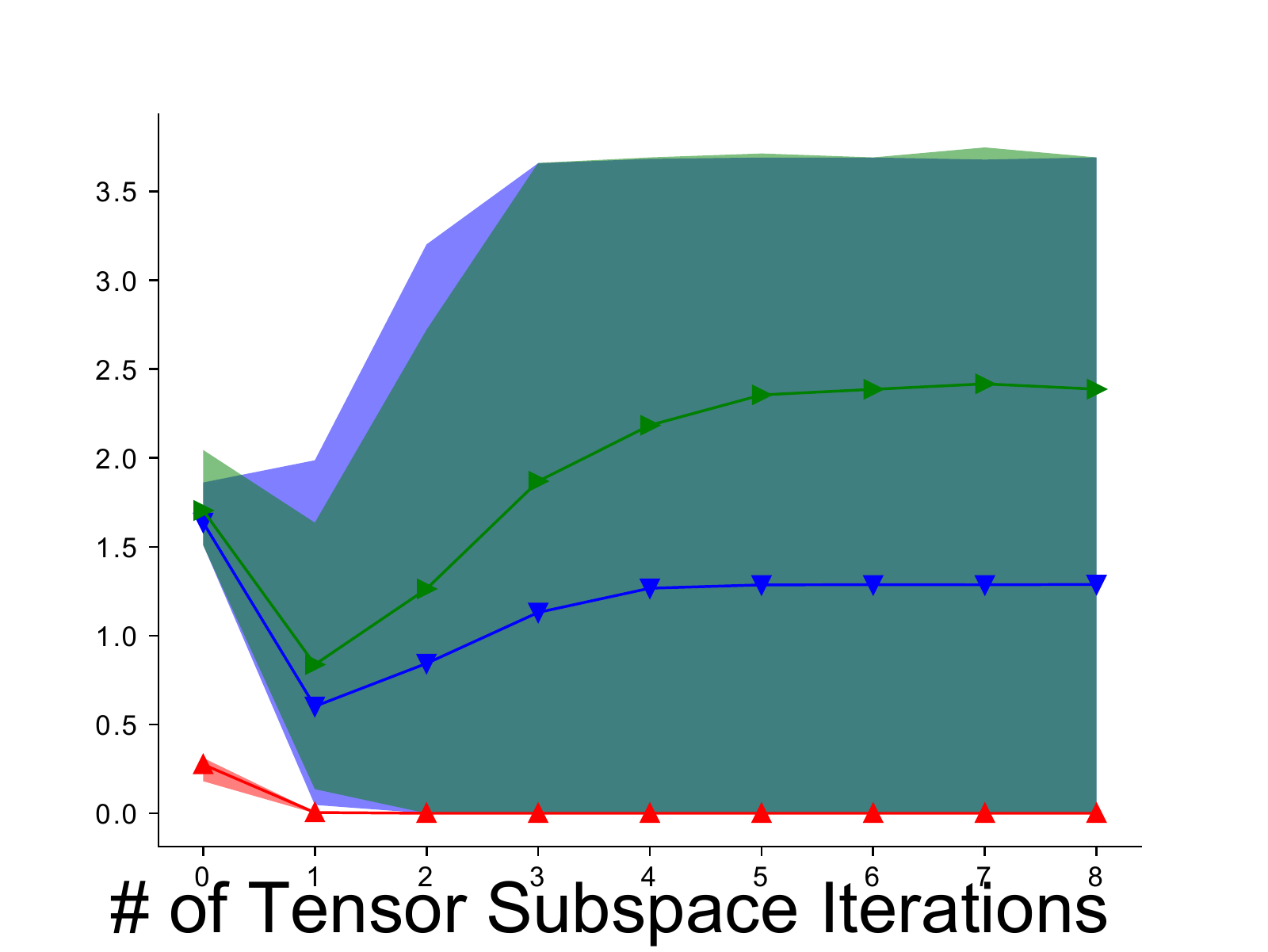}
			\caption{\scriptsize{$\tan{(\mmCols{C}{r}, \est{C})}$}}
			\label{fig:smaller_r_asym_tan3}
		\end{subfigure}
	\end{minipage}
	\caption{Convergence of our \ouralgoshort vs r-ALS~\cite{sharan2017orthogonalized} vs SPI~\cite{wang2017tensor} for asymmetric tensor  when $r=5 < R=10$, $d=500$ run 100 times.
	}
	\label{fig:smaller_r_asym}	
\end{figure}


\begin{figure}[!htbp]
		\centering
		\begin{subfigure}[c]{0.4\textwidth}
			\centering
			\includegraphics[width=\textwidth]{\fighome/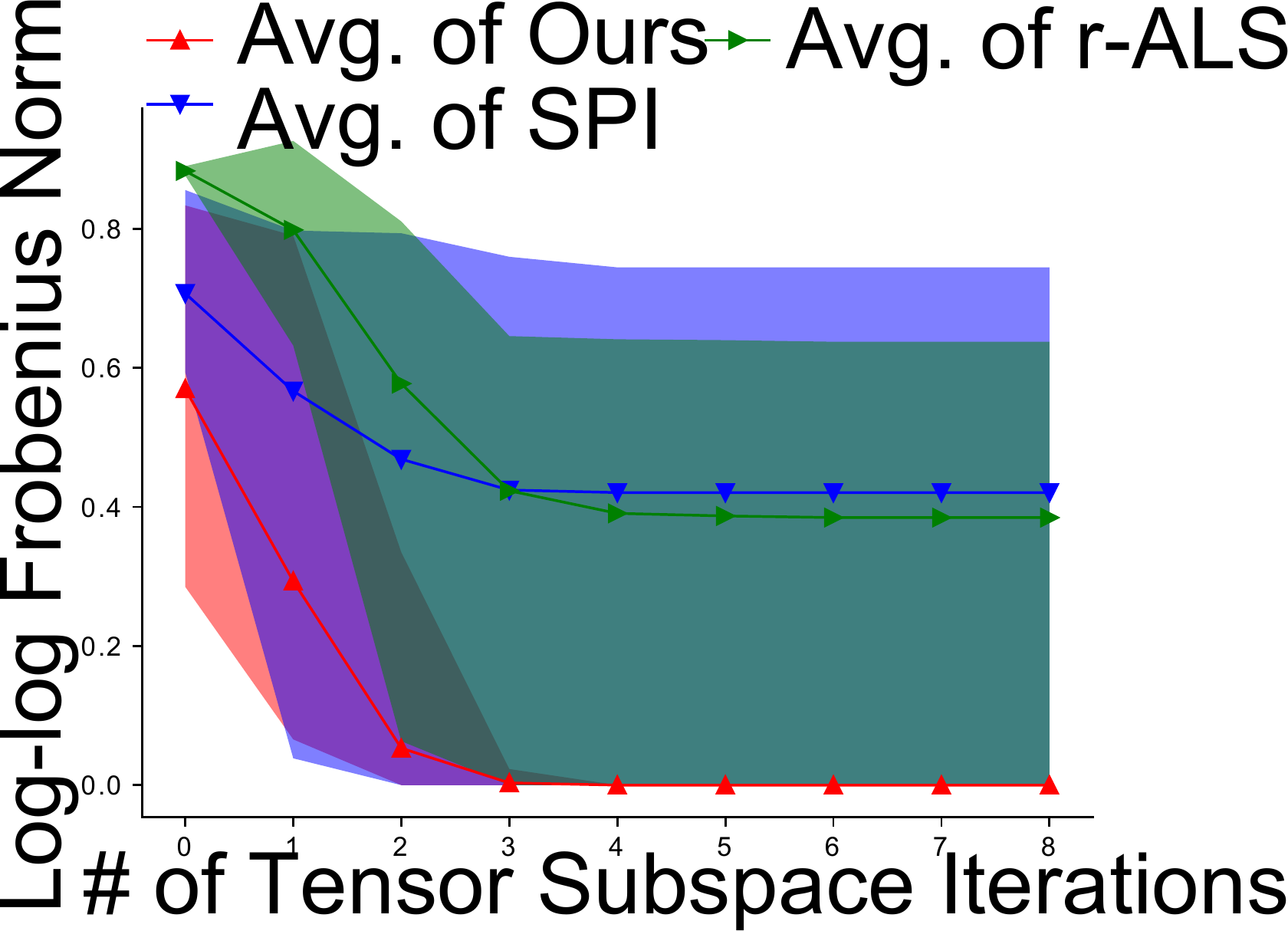}
			\caption{$\lVert \mmCols{A}{r} - \est{A}\rVert_\fro$}
			\label{fig:smaller_r_sym_ss1}
		\end{subfigure}
\begin{minipage}{0.2\textwidth}
\end{minipage}
		\begin{subfigure}[c]{0.4\textwidth}
			\centering
			\includegraphics[width=\textwidth]{\fighome/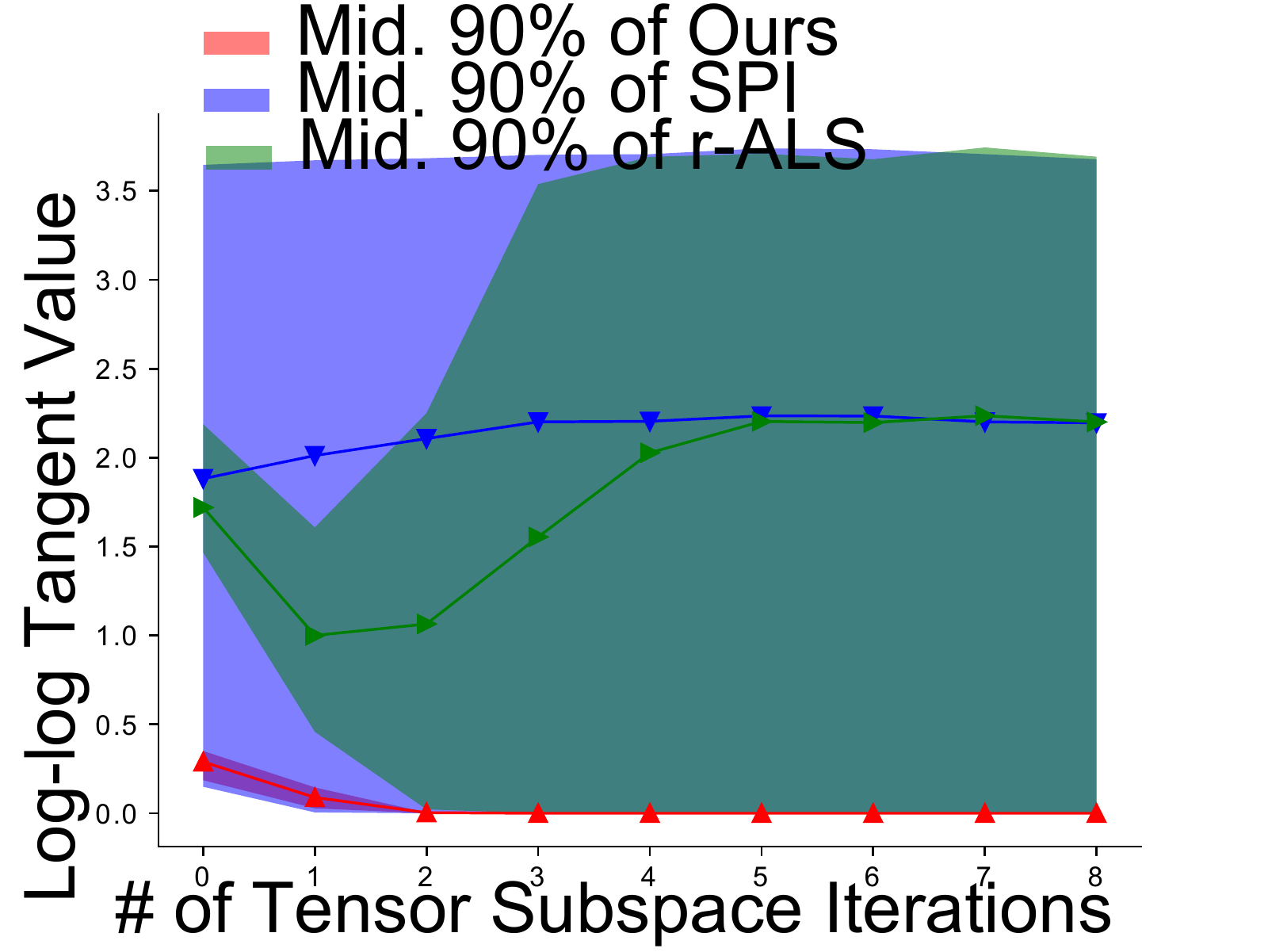}
			\caption{$\tan{(\mmCols{A}{r}, \est{A})}$}
			\label{fig:smaller_r_sym_tan1}
		\end{subfigure}
	\caption{Convergence of our \ouralgoshort vs r-ALS~\cite{sharan2017orthogonalized} vs SPI~\cite{wang2017tensor} for symmetric tensor when  $r=5 < R=10$, $d=500$ run 100 times.
	}
	\label{fig:smaller_r_sym}	
\end{figure}

\subsection{Known vs Unknown Rank.}
 Figure~\ref{fig:smaller_r_asym} illustrates the convergence rate comparison of our \ouralgoshort with the baseline r-ALS when our estimated rank is smaller than the true rank, i.e., $r\le R$. Our
 \ouralgoshort exhibits tremendous advantage when the rank $R$ is unknown to the algorithm.   Figure~\ref{fig:true_r_asym} illustrate the convergence rate comparison of our \ouralgoshort with baselines when our estimated rank is equal to the true rank, i.e., $r= R$.  Our \ourfullalgo achieves better convergence rate than the baselines.


\begin{figure}[!htbp]
	\begin{minipage}{\textwidth}
		\centering
		\begin{subfigure}[c]{0.32\textwidth}
			\centering
			\includegraphics[width=\textwidth]{\fighome/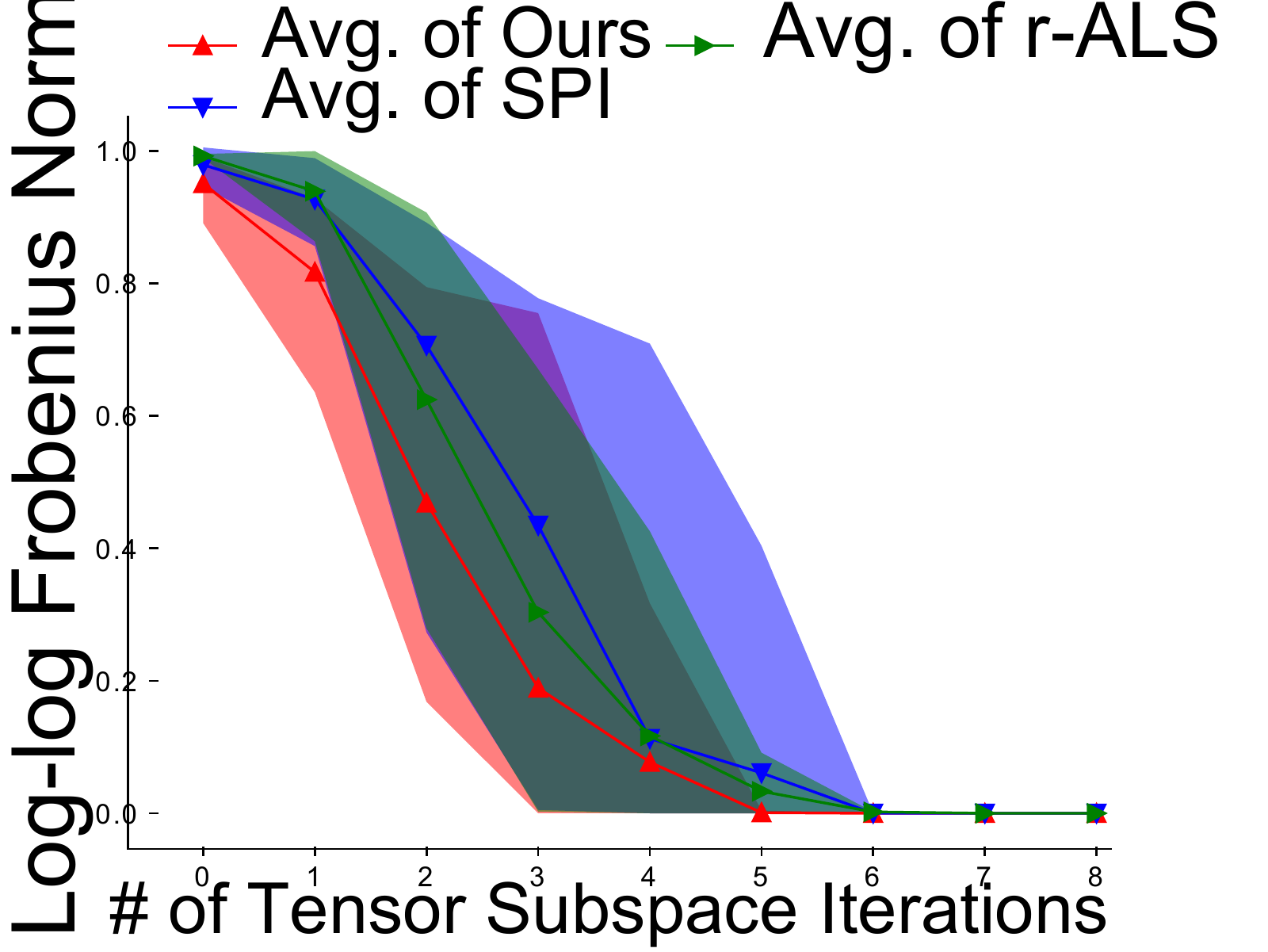}
			\caption{\scriptsize{$\lVert \mmCols{A}{r} - \est{A}\rVert_\fro$}}
			\label{fig:true_r_asym_ss1}
		\end{subfigure}
		\hfill
		\begin{subfigure}[c]{0.32\textwidth}
			\centering
			\includegraphics[width=\textwidth]{\fighome/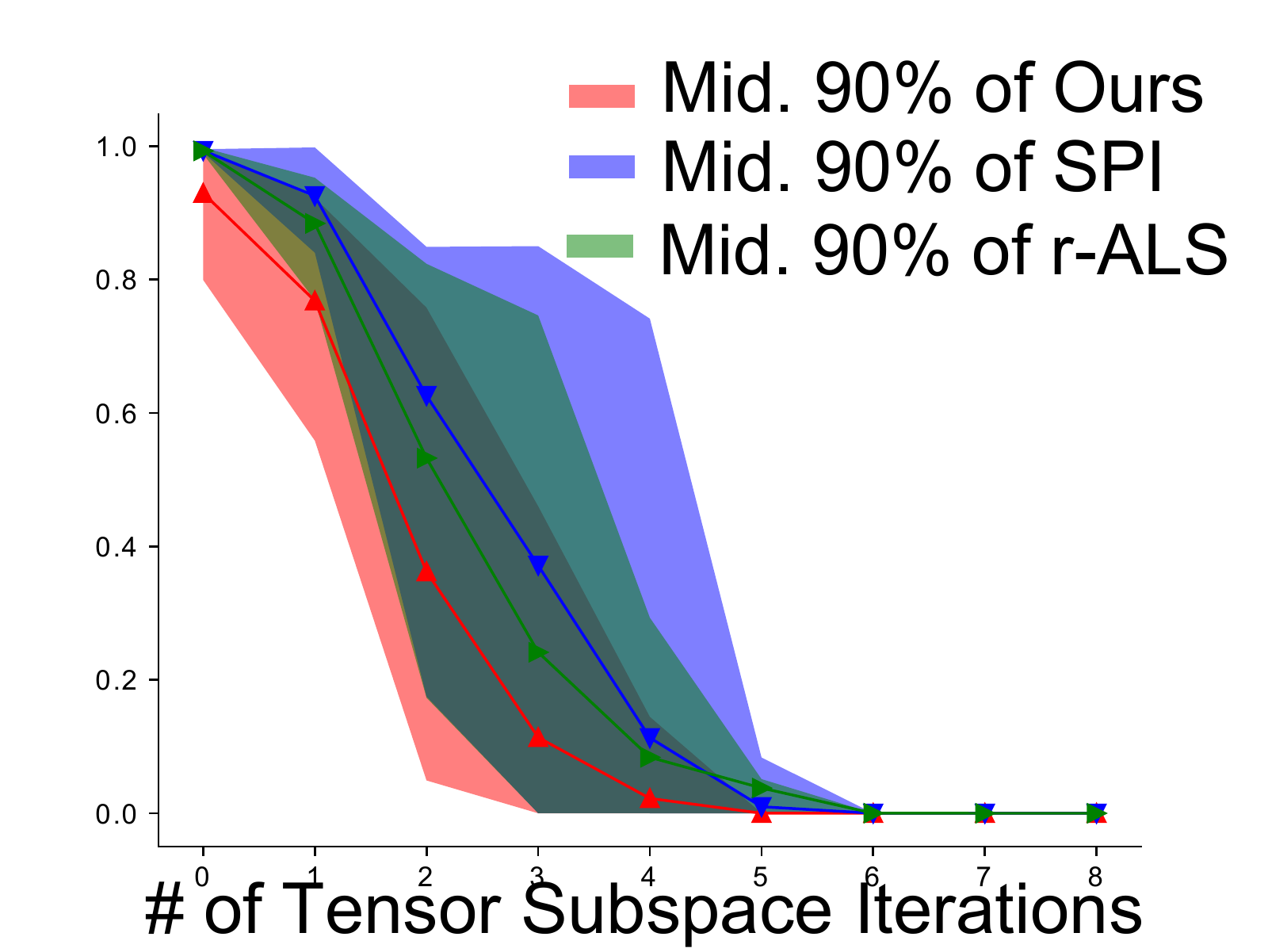}
			\caption{\scriptsize{$\lVert \mmCols{B}{r} - \est{B}\rVert_\fro$}}
			\label{fig:true_r_asym_ss2}
		\end{subfigure}
		\hfill
		\begin{subfigure}[c]{0.32\textwidth}
			\centering
			\includegraphics[width=\textwidth]{\fighome/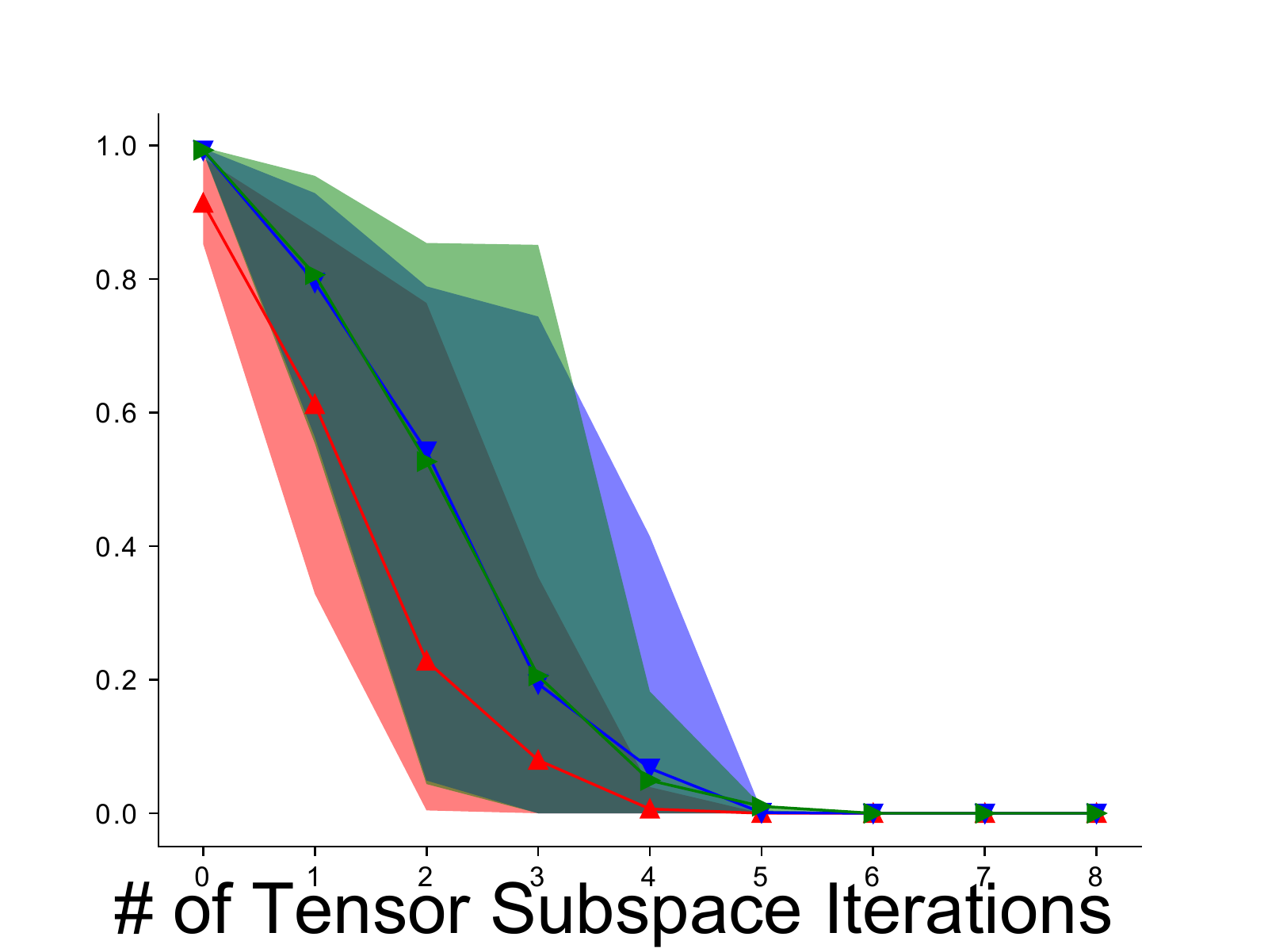}
			\caption{\scriptsize{$\lVert \mmCols{C}{r} - \est{C}\rVert_\fro$}}
			\label{fig:true_r_asym_ss3}
		\end{subfigure}
	\end{minipage}
	\begin{minipage}{\textwidth}
		\begin{subfigure}[c]{0.33\textwidth}
			\centering
			\includegraphics[width=\textwidth]{\fighome/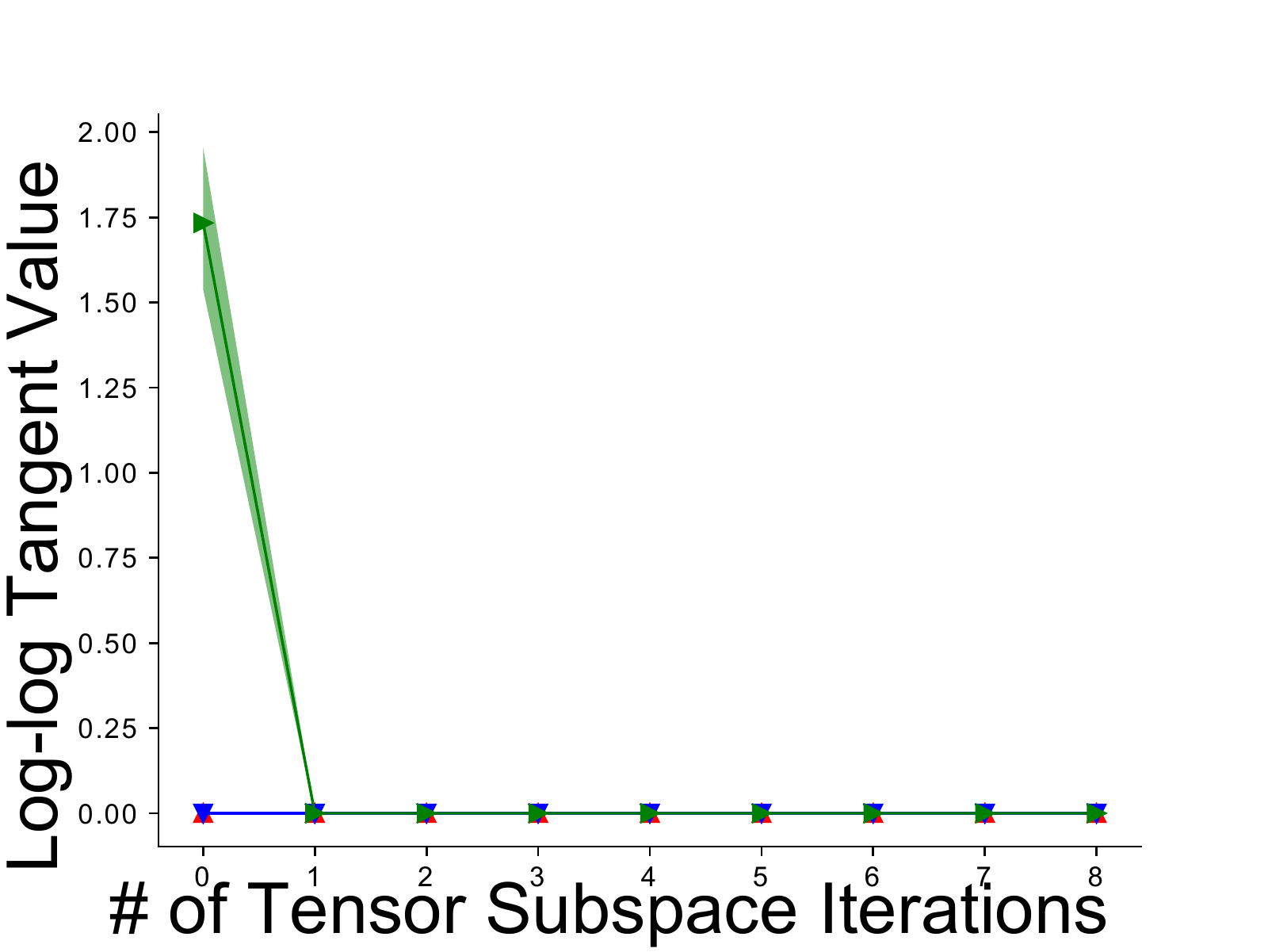}
			\caption{\scriptsize{$\tan{(\mmCols{A}{r}, \est{A})}$}}
			\label{fig:true_r_asym_tan1}
		\end{subfigure}
		\hfill
		\begin{subfigure}[c]{0.32\textwidth}
			\centering
			\includegraphics[width=\textwidth]{\fighome/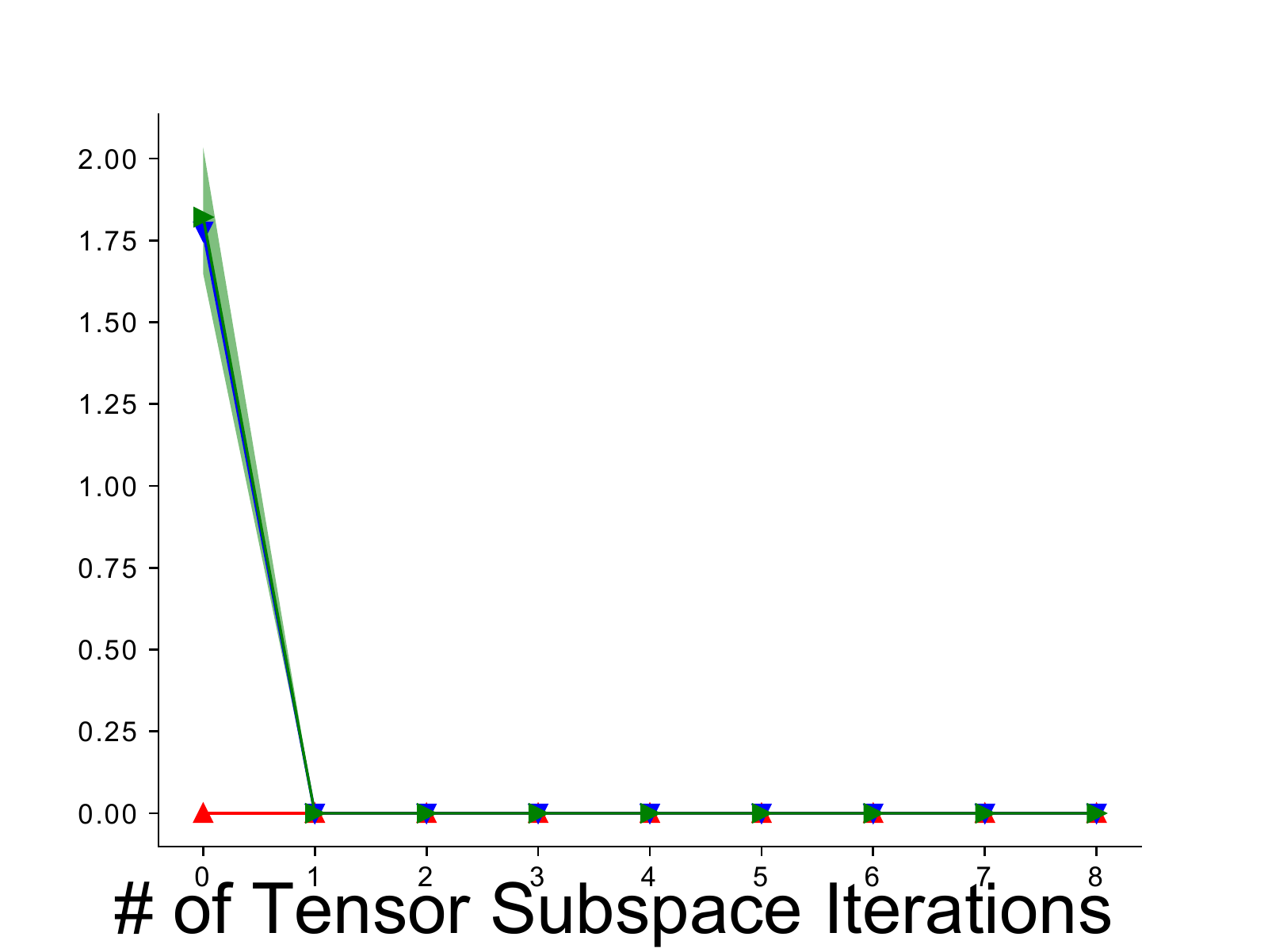}
			\caption{\scriptsize{$\tan{(\mmCols{B}{r}, \est{B})}$}}
			\label{fig:true_r_asym_tan2}
		\end{subfigure}
		\hfill
		\begin{subfigure}[c]{0.32\textwidth}
			\centering
			\includegraphics[width=\textwidth]{\fighome/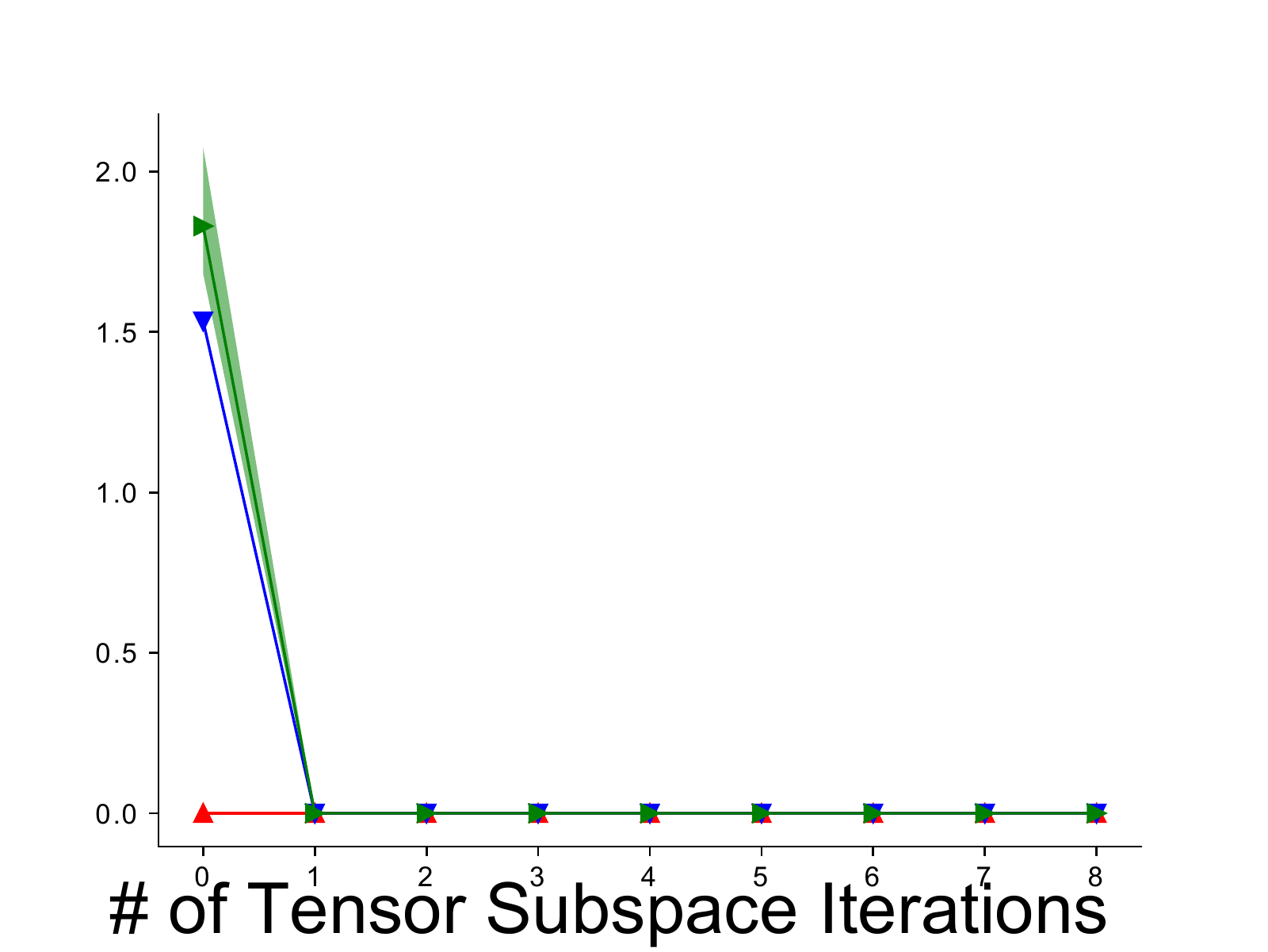}
			\caption{\scriptsize{$\tan{(\mmCols{C}{r}, \est{C})}$}}
			\label{fig:true_r_asym_tan3}
		\end{subfigure}
	\end{minipage}
	\caption{Convergence of our \ouralgoshort vs r-ALS~\cite{sharan2017orthogonalized} vs SPI~\cite{wang2017tensor} for asymmetric tensor when $r = R=10$, $d=500$ run 100 times.
	}
	\label{fig:true_r_asym}	
\end{figure}



\section{Conclusion}
Discovering latent variable models over large datasets can be cast as a tensor decomposition problem.  
Existing theory for tensor
decompositions guarantee results when the tensor is
symmetric.  
However, in practice, the tensors are noisy due to finite examples, and also inherently
asymmetric.  
Recovering top $r$ components of asymmetric tensors is often required for many learning scenarios.  
We present the
first algorithm for guaranteed recovery of tensor factors for an asymmetric noisy tensor.  
Our results extend to tensors with \emph{incoherent components}, where the orthogonality constraint is relaxed to tensors with nearly orthogonal components.  
 

\bibliographystyle{plain}\bibliography{\bibhome/supp_bib}
\newpage
\appendix
\begin{center}{\Large \textbf{Appendix: \mytitle}}\end{center}
\section{A Naive Initialization Procedure}\label{app:naive}

Based on the CP decomposition model in Equation~\eqref{eq:generative}, it is easy to see that the frontal slices shares the mode-A and mode-B singular vectors with the tensor  $\mytensor{T}$, and the $k^\tha$ frontal slice is $\mm{M}_{Ck} = \mm{A} \mm{\Lambda}_{Ck} \mm{B}^\top$ where $\mm{\Lambda}_{Ck} = 
	\begin{bmatrix}
		\lambda_1 c_{k1} &&\bm{0}\\
		&\ddots &\\
		\bm{0}&&\lambda_R c_{kR}
	\end{bmatrix}$. 
It is natural to consider naively implementing singular value decompositions on the frontal slices to obtain estimations of $\mymatrix{A}$ and $\mymatrix{B}$. 

\paragraph{Failure of Naive Initialization}

Consider the simpler scenario of finding a good initialization for a symmetric tensor $\mytensor{T}$ which permits the following CP decomposition
\begin{equation}
\mytensor{T} = \sum\limits_{i=1}^{R} \lambda_i \myvector{u}_i \otimes \myvector{u}_i \otimes \myvector{u}_i
\end{equation}

Specifically 
we have
\begin{align}
	\mytensor{T}(\mm{I}, \mm{I}, \bm{v}^C) = \mm{U\Lambda}^2 \mm{U}^\top
\end{align}
where $\mm{U} = [\bm{u}_1, \cdots, \bm{u}_R], \Lambda = \text{diag}(\lambda_1,\cdots, \lambda_R)$. However the first method gives us a matrix without any improvement on the diagonal decomposition, i.e. $  \mm{U} \mm{\Lambda}_U \mm{U}^\top$, where
\begin{align}
	\mm{\Lambda}_U = \text{diag}(\lambda_1 u_{k1} ,\cdots,\lambda_R u_{kR} )
\end{align}
For each eigenvalue of matrix $  \mm{U} \mm{\Lambda}_U \mm{U}^\top$, it contains not only the factor of a tensor singular value which we care about, but also some unknowns from the unitary matrix. This induces trouble when one wants to recover the subspace relative to only some leading singular values of the tensor if the rank $R$ is believed to be in a greater order of the dimension $d$. Although the analogous statement in matrix subspace iteration is true almost surely (with probability one),
in tensor subspace iteration we indeed need to do more work than simply taking a slice. It is highly likely that the unknown entries $u_{k1}, \cdots, u_{kR}$ permute the eigenvalues into an unfavorable sequence. Meanwhile, since $\mm{\Lambda}^2$ is ideally clean, we see success when we use the second method to recover the subspace relative to a few dominant singular values of a symmetric tensor.

They are all qualified in the sense that they own $\mm{A}$ as the left eigenspace exactly. However we can generalize this scheme to a greater extent. Frontal slicing is just a specific realization of multiplying the tensor on the third mode by a unit vector. Mode-$n$ product of a tensor with a vector would return the collection of inner products of each mode-$n$ fiber with the vector. The mode-3 product of tensor $\mytensor{T}$ with $\bm{e}_k$ will give the $k$th slice of $\mytensor{T}$.

\section{Unreliability of Symmetrization}\label{app:symmetrization}

In multi-view model, \cite{anandkumar2012spectral} introduced a method to symmetrize an asymmetric tensor. Here we change the notations and restate it below.

\begin{proposition}
	Let $\mytensor{T} = \sum_{i = 1}^{R} \lambda_{i} \myvector{u}_i \otimes \myvector{v}_i \otimes \myvector{w}_i$ have components $\mymatrix{U}, \mymatrix{V}, \mymatrix{W}$, then for some vectors $\myvector{a}$ and $\myvector{b}$ chosen independently, tensor \begin{equation} \mytensor{T}( \mytensor{T}(\myvector{b}, \identitymatrix, \identitymatrix)^\top  \mytensor{T}(\identitymatrix, \identitymatrix, \myvector{a})^{-1} ,  \mytensor{T}(\identitymatrix, \myvector{b}, \identitymatrix)^\top  (\mytensor{T}(\identitymatrix, \identitymatrix, \myvector{a})^\top)^{-1} , \identitymatrix) \end{equation} is symmetric.
	
\end{proposition}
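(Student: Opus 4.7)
My plan is to expand every factor in the formula in the CP coordinates, exploit diagonal-cancellation identities, and show that the result is a sum of fully symmetric rank-one tensors. Using $\mt{T}=\sum_{m=1}^R \lambda_m \mv{u}_m\otimes\mv{v}_m\otimes\mv{w}_m$, collect factors into $\mm{U},\mm{V},\mm{W}$ and $\mm{\Lambda}=\diag(\lambda_m)$. A direct entrywise computation yields the matrix identities
\begin{align*}
\mt{T}(\identitymatrix,\identitymatrix,\mv{a}) &= \mm{U}\mm{\Lambda}\mm{D}_a^W\mm{V}^\top, \\
\mt{T}(\mv{b},\identitymatrix,\identitymatrix)^\top &= \mm{W}\mm{D}_b^U\mm{\Lambda}\mm{V}^\top, \\
\mt{T}(\identitymatrix,\mv{b},\identitymatrix)^\top &= \mm{W}\mm{D}_b^V\mm{\Lambda}\mm{U}^\top,
\end{align*}
with $\mm{D}_a^W=\diag(\mm{W}^\top\mv{a})$, $\mm{D}_b^U=\diag(\mm{U}^\top\mv{b})$, and $\mm{D}_b^V=\diag(\mm{V}^\top\mv{b})$. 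For generic $\mv{a},\mv{b}$ these diagonals are invertible and so is $\mt{T}(\identitymatrix,\identitymatrix,\mv{a})$ (in the orthonormal-component case one uses the pseudo-inverse, which reduces to the square inverse after restriction to the column span of $\mm{U}$).

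Second, I would substitute and cancel the common $\mm{V}^\top\mm{V}^{-\top}$ and $\mm{U}^\top\mm{U}^{-\top}$ factors, together with the commuting diagonal simplification $\mm{\Lambda}(\mm{D}_a^W)^{-1}\mm{\Lambda}^{-1}=(\mm{D}_a^W)^{-1}$, to obtain the closed forms
\begin{equation*}
\mm{M}_a=\mm{W}\,\mm{D}_b^U(\mm{D}_a^W)^{-1}\mm{U}^{-1},\qquad \mm{M}_b=\mm{W}\,\mm{D}_b^V(\mm{D}_a^W)^{-1}\mm{V}^{-1}.
\end{equation*}
Each factor column is then picked off cleanly: $\mm{U}^{-1}\mv{u}_m=\mv{e}_m$ and $\mm{V}^{-1}\mv{v}_m=\mv{e}_m$ give $\mm{M}_a\mv{u}_m=\tfrac{\mv{b}^\top\mv{u}_m}{\mv{a}^\top\mv{w}_m}\mv{w}_m$ and $\mm{M}_b\mv{v}_m=\tfrac{\mv{b}^\top\mv{v}_m}{\mv{a}^\top\mv{w}_m}\mv{w}_m$. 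Applying the multilinear action of $(\mm{M}_a,\mm{M}_b,\identitymatrix)$ to the CP summands then yields
\begin{equation*}
\mt{T}(\mm{M}_a,\mm{M}_b,\identitymatrix)=\sum_{m=1}^R \mu_m\,\mv{w}_m\otimes\mv{w}_m\otimes\mv{w}_m,\qquad \mu_m=\lambda_m\,\frac{(\mv{b}^\top\mv{u}_m)(\mv{b}^\top\mv{v}_m)}{(\mv{a}^\top\mv{w}_m)^2},
\end{equation*}
which is a sum of fully symmetric rank-one tensors, hence symmetric.

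The main obstacle I expect is dimensional bookkeeping: making sure the transposes and the multilinear convention line up so that $\mm{M}_a$ acts on $\mm{U}$ as a $\mm{U}^{-1}$-pickoff (and symmetrically on the $\mm{V}$ side), and stating the genericity condition on $(\mv{a},\mv{b})$ precisely enough to guarantee that $\mm{D}_a^W,\mm{D}_b^U,\mm{D}_b^V$ and $\mt{T}(\identitymatrix,\identitymatrix,\mv{a})$ are simultaneously invertible (the bad set is a finite union of hyperplanes determined by the columns of $\mm{U},\mm{V},\mm{W}$, hence of measure zero). Once these items are fixed, the proof is the short chain of identities above.
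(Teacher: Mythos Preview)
Your proposal is correct and follows essentially the same route as the paper: expand each slice in CP coordinates to get $\mm{U}\mm{\Lambda}\mm{D}\mm{V}^\top$-type factorizations, multiply and cancel to obtain $\mm{M}_a=\mm{W}\,\mm{D}_b^U(\mm{D}_a^W)^{-1}\mm{U}^{-1}$ and $\mm{M}_b=\mm{W}\,\mm{D}_b^V(\mm{D}_a^W)^{-1}\mm{V}^{-1}$, then apply these mode-wise to the CP summands to produce $\sum_m \mu_m\,\mv{w}_m^{\otimes 3}$ with $\mu_m=\lambda_m(\mv{b}^\top\mv{u}_m)(\mv{b}^\top\mv{v}_m)/(\mv{a}^\top\mv{w}_m)^2$. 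The paper's proof is the orthonormal specialization where $\mm{U}^{-1}=\mm{U}^\top$, and it arrives at exactly the same final expression; your added remark on the measure-zero bad set for $(\mv{a},\mv{b})$ and your explicit flag about matching the transpose convention in the multilinear action are appropriate and do not change the argument.
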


\begin{proof}
	\begin{align}
	&\mytensor{T}(\identitymatrix, \identitymatrix, \myvector{a}) = \sum_{i = 1}^{R} \lambda_i(\myvector{w}_i^\top\myvector{a}) \myvector{u}_i \otimes \myvector{v}_i = \mymatrix{U} \diag(\lambda_i (\myvector{w}_i^\top\myvector{a})) \mymatrix{V}^\top \\
	&(\mytensor{T}(\identitymatrix, \identitymatrix, \myvector{a}))^{-1} = \mymatrix{V} \diag \big(\frac{1}{\lambda_i (\myvector{w}_i^\top\myvector{a})} \big) \mymatrix{U}^\top
	\end{align}
	Similarly,
	\begin{equation}
	\mytensor{T}(\myvector{b}, \identitymatrix, \identitymatrix) = \mymatrix{V}\diag(\lambda_{i}(\myvector{u}_i^\top\myvector{b}))\mymatrix{W}^\top, \quad 
	\mytensor{T}(\identitymatrix, \myvector{b}, \identitymatrix) = 
	\mymatrix{U}\diag(\lambda_{i}(\myvector{v}_i^\top\myvector{b}))\mymatrix{W}^\top
	\end{equation}
	Therefore,
	\begin{align}
	&\quad \mytensor{T}( \mytensor{T}(\myvector{b}, \identitymatrix, \identitymatrix)^\top  \mytensor{T}(\identitymatrix, \identitymatrix, \myvector{a})^{-1} ,  \mytensor{T}(\identitymatrix, \myvector{b}, \identitymatrix)^\top  (\mytensor{T}(\identitymatrix, \identitymatrix, \myvector{a})^\top)^{-1} , \identitymatrix  ) \\
	&= \mytensor{T}( \mymatrix{W} \diag \Big(\frac{ \myvector{u}_i^\top\myvector{b}}{  \myvector{w}_i^\top\myvector{a}} \Big) \mymatrix{U}^\top, \mymatrix{W} \diag \Big(\frac{ \myvector{v}_i^\top\myvector{b}}{  \myvector{w}_i^\top\myvector{a}} \Big) \mymatrix{V}^\top, \identitymatrix) \\
	&= \sum_{i = 1}^{R} \lambda_i \frac{ \myvector{u}_i^\top\myvector{b}}{  \myvector{w}_i^\top\myvector{a}} \frac{ \myvector{v}_i^\top\myvector{b}}{  \myvector{w}_i^\top\myvector{a}} \myvector{w}_i \otimes \myvector{w}_i \otimes \myvector{w}_i
	\end{align}
	shows the symmetry.
\end{proof}

However, in practice the condition number for $ \mytensor{T}(\identitymatrix, \identitymatrix, \myvector{a}) $ could be very large. So symmetrization using matrix inversion is not reliable since it is sensitive to noise. 

Indeed, we can analyze this assuming $ \bm{a}  $ is a fixed vector. Proposition~\ref{thm:daoshudierge} by Jiang et al.~\cite{jiang2006}
provides a good tool for our analysis.

\begin{proposition}\label{thm:daoshudierge} Let $\bm{M}_d = (m_{ij})_{1\le i,j \le d}$, where $m_{ij}$'s are independent standard Gaussian,  $\bm{X}_d = (x_{ij})_{1\le i,j \le d}$ be the matrix obtained from performing the Gram-Schmidt procedure on the columns of $\bm{M}_d$,  $ \{ n_d < d: d \ge 1 \}$ be a sequence of positive integers and
	\begin{align}
	\epsilon_d(n) \equiv \max\limits_{1 \le i \le d, 1\le j \le n} \big|\sqrt{d} x_{ij} -m_{ij}\big|, 
	\end{align}
	we then have
	\begin{enumerate}
		\item[(1)] the matrix $\bm{X}_d$ is Haar invariant on the orthonormal group $O(n)$;
		\item[(2)] $\epsilon_d(n_d) \rightarrow 0$ in probability, provided $n_d = o(d/\log d)$ as $n \rightarrow \infty$;
		\item[(3)] $\forall \alpha > 0$, we have that $~ \epsilon_d([d\alpha/\log d]) \rightarrow 2\sqrt{\alpha}$ in probability as $d \rightarrow \infty$.
	\end{enumerate}
\end{proposition}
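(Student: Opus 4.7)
The plan is to dispose of (1) by a symmetry argument and to base the quantitative parts (2) and (3) on the explicit QR distribution of a Gaussian matrix. For (1), I would use that standard Gaussian entries are invariant under orthogonal left multiplication, so $O\bm{M}_d \stackrel{d}{=} \bm{M}_d$ for every fixed $O \in O(d)$. Because Gram--Schmidt on the columns commutes with orthogonal transformations, the output on $O\bm{M}_d$ is exactly $O\bm{X}_d$, hence $O\bm{X}_d \stackrel{d}{=} \bm{X}_d$; this left $O(d)$-invariance pins down the Haar measure on $O(d)$ (using that $\bm{M}_d$ is a.s.\ nonsingular so $\bm{X}_d$ actually lives on $O(d)$).

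For (2) and (3), the starting point is the QR decomposition $\bm{M}_d = \bm{X}_d \bm{R}_d$ in which $\bm{R}_d$ is upper triangular with independent entries $R_{jj}^2 \sim \chi^2_{d-j+1}$ and $R_{kj} \sim N(0,1)$ for $k < j$ (classical Bartlett-type identity, proved by peeling off one Gaussian column at a time and using rotational invariance of the residual). Reading off $m_{ij} = R_{jj} x_{ij} + \sum_{k<j} R_{kj}\, x_{ik}$ gives the key identity
\begin{equation*}
\sqrt{d}\, x_{ij} - m_{ij} \;=\; \Bigl(\tfrac{\sqrt{d}}{R_{jj}} - 1\Bigr)\, m_{ij} \;-\; \tfrac{\sqrt{d}}{R_{jj}} \sum_{k<j} R_{kj}\, x_{ik}.
\end{equation*}
The first summand is negligible: $\chi^2$ concentration yields $|\sqrt{d}/R_{jj}-1| = O_p(j/d)$ uniformly in $j \le n$, while the max of $dn$ standard Gaussians is $O_p(\sqrt{\log(dn)})$, so this summand is $O_p((n/d)\sqrt{\log(dn)})$, which is $o(1)$ in both regimes.

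The dominant contribution, and the origin of the constant $2\sqrt{\alpha}$ in (3), comes from the second summand. Conditional on the first $j-1$ columns of $\bm{X}_d$, the inner sum is $N(0, \sigma_{ij}^2)$ with $\sigma_{ij}^2 = \sum_{k<j} x_{ik}^2$. By (1), each row of $\bm{X}_d$ is uniform on $S^{d-1}$, so $\sigma_{ij}^2 \sim \mathrm{Beta}((j-1)/2,(d-j+1)/2)$; Beta concentration with a union bound gives $\max_{i,j} \sigma_{ij}^2 \le (n/d)(1 + o(1))$ w.h.p. Combining with Gaussian extreme-value asymptotics,
\begin{equation*}
\max_{i \le d,\; j \le n} \Bigl|\sum_{k<j} R_{kj}\, x_{ik}\Bigr| \;\le\; \sqrt{n/d}\,\cdot\,\sqrt{2\log(dn)}\,\bigl(1+o(1)\bigr).
\end{equation*}
For $n_d = o(d/\log d)$ this is $o(1)$, proving (2). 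For $n = [d\alpha/\log d]$ the product evaluates to $\sqrt{\alpha/\log d}\cdot 2\sqrt{\log d} = 2\sqrt{\alpha}$, giving the upper bound in (3). The matching lower bound in (3) I would get by exhibiting, with high probability, an index pair $(i,j)$ at which both the Gaussian extreme $\sqrt{2\log(dn)}$ and the Beta near-mean $\sqrt{n/d}$ are (nearly) realized simultaneously, exploiting that $\bm{R}_d$ and $\bm{X}_d$ are independent along each column.

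The main obstacle I expect is pinning down the sharp constant $2\sqrt{\alpha}$ rather than an order-of-magnitude bound. This forces (a) a Beta concentration estimate for $\sigma_{ij}^2$ that is sharp in the ratio $j/d$ uniformly in $(i,j)$, (b) extreme-value control for a max of Gaussians that are dependent across rows $i$ (since the $R_{kj}$'s are shared across $i$), and (c) a matching lower bound that rules out destructive cancellation against the negligible first summand. The remainder is routine Gaussian tail bounds and union-bound bookkeeping.
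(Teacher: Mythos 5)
First, note that the paper itself does not prove this proposition: it is imported verbatim as a tool and attributed to Jiang et al.~\cite{jiang2006}, so there is no internal proof to compare against. Judged on its own merits, your sketch reconstructs essentially the argument of the cited source, phrased in QR/Bartlett language rather than Jiang's residual-vector notation (your $R_{jj}$ is his $\lVert w_j\rVert$, your off-diagonal $R_{kj}$ are his projection coefficients). Part (1) via left $O(d)$-invariance and equivariance of Gram--Schmidt is correct and complete. The key identity $\sqrt{d}\,x_{ij}-m_{ij}=(\sqrt{d}/R_{jj}-1)m_{ij}-(\sqrt{d}/R_{jj})\sum_{k<j}R_{kj}x_{ik}$ checks out, the Bartlett facts ($R_{jj}^2\sim\chi^2_{d-j+1}$, $R_{kj}\sim N(0,1)$, $R$ independent of the Haar factor) are correct, and the union-bound upper estimates do yield (2) and the upper half of (3) with the constant $2\sqrt{\alpha}$, since dependence across rows is harmless for an upper bound. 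Two quantitative claims need repair but are not fatal: $|\sqrt{d}/R_{jj}-1|$ is $O_p(j/d+\sqrt{\log n/d})$ uniformly in $j\le n$, not $O_p(j/d)$; and $\max_{i,j}\sigma_{ij}^2\le (n/d)(1+o(1))$ fails when $n_d$ grows slowly (e.g.\ bounded $n_d$), where the correct uniform statement is $\max_{i,j}\sigma_{ij}^2=O((n+\log d)/d)$ -- still sufficient for (2).

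The genuine gap is the matching lower bound in (3), which is the hardest part of the theorem and the reason the limit is exactly $2\sqrt{\alpha}$ rather than an upper estimate. Your one-sentence plan (find an index pair realizing the Gaussian extreme and the Beta mean simultaneously) is the right idea but does not yet engage the two real difficulties: (i) restricting to a single column $j=n$ only gives $\sqrt{2\log d}\cdot\sqrt{n/d}\approx\sqrt{2\alpha}$, so you must take the maximum over order $dn$ entries with $j$ near $n$ to reach $2\sqrt{\alpha}$, and (ii) you must control the dependence of $S_{ij}=\sum_{k<j}R_{kj}x_{ik}$ across rows $i$ sharing the same column of $R$. The standard route is to note that, conditionally on the Haar factor, the $S_{ij}$ are independent across $j$ and, for fixed $j$, jointly Gaussian across $i$ with covariances $\sum_{k<j}x_{ik}x_{i'k}$ that are $O(\sqrt{j}/d)$, i.e.\ nearly uncorrelated; a second-moment (Paley--Zygmund) argument or a Slepian-type comparison on this nearly independent Gaussian array, together with the already-established $o(1)$ bound on the first summand to preclude cancellation, closes the lower bound. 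Until that step is carried out, the proposal proves (1), (2), and only one inequality of (3).
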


This proposition states that for an orthonormal matrix generated by performing Gram-Schmidt procedure to standard normal matrix,
, the first $o(d/\log d)$ columns,
scaled by $\sqrt{d}$, asymptotically behave like a matrix
with independent standard Gaussian entries and this is the largest order
for the number of columns we can approximate simultaneously.

The condition number of matrix $ \mytensor{T}(\identitymatrix, \identitymatrix, \myvector{a}) $ is
\begin{equation}
\mathcal{K} (\mytensor{T}(\identitymatrix, \identitymatrix, \myvector{a})) = \frac{\max\limits_{1\le i\le R} | \lambda_{i} \myvector{w}_i^\top\myvector{a} |}{\min\limits_{1\le i\le R} |\lambda_{i}\myvector{w}_i^\top\myvector{a}|} ,
\end{equation}
which is nondecreasing as the rank of tensor $R$ increases. So we can indeed assume $R = o(d/\log d)$ and study the badness of condition number for such $\bm{W}$'s as worse cases.

\begin{remark}
	We treat $\bm{W}$ as the left $d \times R$ sub-block of some
	orthonormal matrix. Thus by assuming $R = o(d/\log d)$, $\sqrt{d}\bm{W}$ could be
	approximated by a matrix of i.i.d. $\mathcal{N}(0, 1)$ variables when $d$ is large, which is common in practice.
\end{remark}
Since condition number $\mathcal{K}$ is taking ratio, without loss of gernerality we can let $\| \bm{a} \| = 1$. Then,
\begin{equation}
\mathcal{K} (\mytensor{T}(\identitymatrix, \identitymatrix, \myvector{a})) = 
\frac{\max\limits_{1\le i\le R} | \lambda_{i} (\sqrt{d}\myvector{w}_i)^\top\myvector{a} |}{\min\limits_{1\le i\le R} |\lambda_{i}(\sqrt{d}\myvector{w}_i)^\top\myvector{a}|} .
\end{equation}
For $1 \le i \le R$, ~ $\lambda_{i}(\sqrt{d}\myvector{w}_i)^\top\myvector{a}$ are independent to each other and approximately has distribution $  \mathcal{N}(0, \lambda_i^2)$. So the condition number is approximately the ratio between maximum and minimum of absolute value of $\mathcal{N}(\bm{0}, \diag(\lambda_i^2))$. One can imgine if the tensor has one or more small singular values then it is highly likely for the condition number to be high.

\section{Procedure~\ref{algo:main} Noiseless Convergence Result}\label{sec:convergence}
 \subsection{Conditional Simultaneous Convergence}

 \begin{theorem}[Main Convergence]\label{thm:main}
 Using the initialization procedure~\ref{algo:init}, 
 	Denote the recovered tensor as $\mytensor{T}^* = \llbracket \est{\Lambda} ; \est{A}, \est{B}, \est{C} \rrbracket$ after $J = O(\log(C)/\log(|\frac{\lambda_{r}}{\lambda_{r+1}} |))$ iterations in initialization procedure~\ref{algo:init} and $K = O(\log(\log \frac{1}{\epsilon}))$ iterations in main procedure\ref{algo:main} applied on $\mathcal{T}$, $\forall \epsilon >0$. 
 	We have
 	\begin{align}
 		\| \mytensor{T}^* - \mytensor{T} \|_s \le \epsilon. 
 	\end{align}
 \end{theorem}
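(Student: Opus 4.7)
The statement is a composition theorem: it bundles the initialization guarantee (Theorem~\ref{th:initGuarantee}) with the conditional ASI convergence (Theorem~\ref{thm:main-result}) into an end-to-end tensor-norm bound. My plan is therefore to proceed in three layers, pushing the error from subspace angle $\to$ column-wise factor error $\to$ full tensor operator/Frobenius norm.

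\textbf{Layer 1 (initialization).} First I would invoke Theorem~\ref{th:initGuarantee}: after $J=O(\log(C)/\log|\lambda_r/\lambda_{r+1}|)$ iterations of Algorithm~\ref{algo:matsub} applied to the matrices $\mm{M}^A,\mm{M}^B,\mm{M}^C$ built in Algorithm~\ref{algo:init}, the $r$-sufficient initialization condition (Definition~\ref{def:init_cond}) holds almost surely in the noiseless regime, i.e.\ $\tan(\mmCols{A}{r},\mm{Q}_A^{(0)})<1$ and likewise for $B,C$. The only event on which this can fail is the measure-zero event that the Haar-random seed of Algorithm~\ref{algo:matsub} lies in the span of the lower $R-r$ singular directions of $\mm{M}^A$ (or $\mm{M}^B,\mm{M}^C$), which is ruled out with probability one.

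\textbf{Layer 2 (ASI).} Conditional on the above, I apply Theorem~\ref{thm:main-result} to the iterates produced by Algorithm~\ref{algo:main} with tolerance $\epsilon'$ to be calibrated below. This yields, after $K=O(\log\log(1/\epsilon'))$ alternating subspace iterations and up to sign flip,
\begin{equation}
\|\mv{a}_i-\mv{a}^*_i\|\le\sqrt{2}\,\epsilon',\quad\|\mv{b}_i-\mv{b}^*_i\|\le\sqrt{2}\,\epsilon',\quad\|\mv{c}_i-\mv{c}^*_i\|\le\sqrt{2}\,\epsilon'\qquad\forall 1\le i\le r.
\end{equation}
Since the factor columns are unit-norm, plugging these estimates into Algorithm~\ref{algo:lambda} gives $\lambda_i^*=\widehat{\mytensor{T}}(\mv{a}_i^*,\mv{b}_i^*,\mv{c}_i^*)$, and a Weyl-type perturbation argument on the trilinear form bounds $|\lambda_i-\lambda_i^*|\le 3\sqrt{2}\,\lambda_1\epsilon'$ (expand $\lambda_i-\lambda_i^*$ by adding and subtracting one factor at a time).

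\textbf{Layer 3 (tensor norm).} Finally I convert column-wise error to operator (or Frobenius) norm of the residual tensor $\mytensor{T}^*-\mytensor{T}$. Writing
\begin{equation}
\mytensor{T}^*-\mytensor{T}=\sum_{i=1}^{r}\bigl(\lambda_i^*\mv{a}_i^*\!\otimes\!\mv{b}_i^*\!\otimes\!\mv{c}_i^* -\lambda_i\mv{a}_i\!\otimes\!\mv{b}_i\!\otimes\!\mv{c}_i\bigr)-\sum_{i=r+1}^{R}\lambda_i\,\mv{a}_i\!\otimes\!\mv{b}_i\!\otimes\!\mv{c}_i,
\end{equation}
I telescope each top-$r$ summand in three steps (swap one factor at a time) and bound each piece by $\lambda_1\|\mv{a}_i-\mv{a}_i^*\|$ etc., combining with the $\lambda_i^*$ bound from Layer~2. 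The tail term has operator norm exactly $\lambda_{r+1}$ by orthogonality of $\mm{A},\mm{B},\mm{C}$. Collecting constants shows $\|\mytensor{T}^*-\mytensor{T}\|_s\le c\,r\lambda_1\epsilon'+\lambda_{r+1}$; choosing $\epsilon'=\epsilon/(cr\lambda_1)$ absorbs the first term (and when $r=R$ the tail vanishes), so setting $K=O(\log\log(1/\epsilon))$ delivers the claimed bound.

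\textbf{Main obstacle.} The delicate step is Layer~3: the telescoping must be handled carefully because the \emph{estimated} factors $\mv{a}_i^*,\mv{b}_i^*,\mv{c}_i^*$ are not individually orthogonal to the \emph{true} tail directions $\{\mv{a}_j,\mv{b}_j,\mv{c}_j\}_{j>r}$, so the naive operator-norm triangle inequality loses the cancellation that makes the tail clean. I expect the cleanest route is to first show $\|\mytensor{T}^*-\mytensor{T}\|_F$ bounds via the identity $\|\sum_i\mv{x}_i\otimes\mv{y}_i\otimes\mv{z}_i\|_F^2=\sum_{i,j}\langle\mv{x}_i,\mv{x}_j\rangle\langle\mv{y}_i,\mv{y}_j\rangle\langle\mv{z}_i,\mv{z}_j\rangle$ and then use $\|\cdot\|_s\le\|\cdot\|_F$, which absorbs the cross-term interference into higher-order $\epsilon'$ factors and leaves the principal dependence on $r,\lambda_1$, and the spectral gap through $J$.
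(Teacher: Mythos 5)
Your proposal follows the same route as the paper's own proof, which is the single sentence ``combine all of the rest results together,'' i.e.\ compose the initialization guarantee (Theorem~\ref{th:initGuarantee}), the quadratic contraction of $\tan$ (Lemma~\ref{thm:main_convergence}), and the column-wise convergence (Lemma~\ref{thm:simul_convergence}). Where you go further than the paper is Layer~3: the paper stops at the column-wise bound $\|\bm{q}_i-\bm{a}_i\|^2\le 2\epsilon$ and never spells out the lift from per-column factor error to the claimed tensor norm $\|\mytensor{T}^*-\mytensor{T}\|_s$. Your telescoping of each rank-one summand supplies exactly the missing step, and your observation that the residual $\sum_{i=r+1}^R\lambda_i\,\mv{a}_i\otimes\mv{b}_i\otimes\mv{c}_i$ has operator norm exactly $\lambda_{r+1}$ correctly exposes that, as written, the theorem's conclusion ``$\le\epsilon$ for all $\epsilon>0$'' can only hold when $r=R$ (or when $\mytensor{T}$ secretly means the rank-$r$ truncation); the paper's one-line proof does not acknowledge this. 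Two minor corrections to your Layers 2--3: from Lemma~\ref{thm:simul_convergence} the column bound is $\|\mv{a}_i-\mv{a}_i^*\|\le\sqrt{2\epsilon'}$, not $\sqrt{2}\,\epsilon'$, so the calibration in Layer~3 should be $\epsilon'=\Theta\!\left(\epsilon^2/(r^2\lambda_1^2)\right)$; this still preserves the $O(\log\log(1/\epsilon))$ iteration count because the calibration only changes the argument of the outer logarithm by polynomial factors. Similarly, the perturbation of $\lambda_i^*=\mytensor{T}(\mv{a}_i^*,\mv{b}_i^*,\mv{c}_i^*)$ involves cross-terms $\lambda_j(\mv{a}_j^\top\mv{a}_i^*)(\mv{b}_j^\top\mv{b}_i^*)(\mv{c}_j^\top\mv{c}_i^*)$ over all $j\in[R]$, each of which is $O((\epsilon')^{3/2})$ for $j\ne i$ by orthogonality of $\mm{A},\mm{B},\mm{C}$; the resulting eigenvalue error is $O(\lambda_1\sqrt{\epsilon'})$ rather than $O(\lambda_1\epsilon')$, again harmless to the final rate.
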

To prove the main convergence result, just combine all of the rest results together. 

 \begin{lemma}\label{thm:main_convergence}
 	Let $\mm{Q}_{\mmCols{A}{r}}^{(0)}, \mm{Q}_{\mmCols{B}{r}}^{(0)}, \mm{Q}_{\mmCols{C}{r}}^{(0)} , \forall r \in \{1,2,\cdots, R\} $, be $d\times r$ orthonormal initialization matrices for the specified subspace iteration.  
 	Then after $K$ iterations, we have 
 	\begin{align} 
 		t_{A_{(r)}}^{(K)}  \le \Big(\frac{\lambda_{r+1}}{\lambda_{(r)}}\Big)^{2^K-1}  \big( t_{A_{(r)}}^{(0)} t_{B_{(r)}}^{(0)} t_{C_{(r)}}^{(0)} \big)^{\frac{2^K}{3}} \Bigg[\frac{\big( t_{A_{(r)}}^{(0)} \big)^2}{ t_{B_{(r)}}^{(0)} t_{C_{(r)}}^{(0)
 		} }\Bigg]^{\frac{(-1)^K}{3}} , \quad \forall K \ge 1.
 	\end{align}
 	where $ t_{A_{(r)}}^{(k)} = \tan\Big(\mmCols{A}{r}, \mm{Q}_{\mmCols{A}{r}}^{(k)}\Big)$, $ t_{B_{(r)}}^{(k)} = \tan\Big(\mm{B}_{(r)}, \mm{Q}_{\mmCols{B}{r}}^{(k)}\Big)$, $t_{C_{(r)}}^{(k)} = \tan\Big(\mm{C}_{(r)}, \mm{Q}_{\mmCols{C}{r}}^{(k)}\Big)$, $\forall k \ge 0$.
 	Similarly for $\mm{B}_{(r)}$ and $\mm{C}_{(r)}$.
 \end{lemma}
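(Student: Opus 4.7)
My plan is to establish a one-step contraction that couples $t_{A_{(r)}}^{(k)}$, $t_{B_{(r)}}^{(k)}$, $t_{C_{(r)}}^{(k)}$ multiplicatively through the common rate $\rho := \lambda_{r+1}/\lambda_r$, and then to solve the resulting three-variable linear recurrence in log-space via the spectral decomposition of its coupling matrix. First I would rewrite the mode-$A$ update of Algorithm~\ref{algo:main}. In the noiseless case $\widehat{\mytensor{T}}_{(1)} = \mymatrix{A}\mymatrix{\Lambda}(\mymatrix{C}\odot \mymatrix{B})^\top$, so the mixed-product identity $(\mymatrix{C}\odot \mymatrix{B})^\top(\mymatrix{Q}_C^{(k)}\odot \mymatrix{Q}_B^{(k)}) = (\mymatrix{C}^\top \mymatrix{Q}_C^{(k)}) \circ (\mymatrix{B}^\top \mymatrix{Q}_B^{(k)})$ (with $\circ$ denoting the entrywise/Hadamard product) turns line~3 into a QR factorization of $\mymatrix{A}\mymatrix{\Lambda}\mymatrix{H}^{(k)}$ with $\mymatrix{H}^{(k)} \in \mathbb{R}^{R\times r}$. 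Partitioning conformally $\mymatrix{A} = [\mmCols{A}{r},\mymatrix{A}_\perp]$, $\mymatrix{\Lambda} = \diag(\mymatrix{\Lambda}_1,\mymatrix{\Lambda}_2)$ with $\mymatrix{\Lambda}_1$ containing the top-$r$ singular values, and letting $\mymatrix{H}_1^{(k)}, \mymatrix{H}_2^{(k)}$ denote the top-$r$ and bottom-$(R-r)$ row blocks of $\mymatrix{H}^{(k)}$, Definition~\ref{def:subspace_similarity} immediately yields
\begin{equation*}
\tan\!\left(\mmCols{A}{r}, \mymatrix{Q}_A^{(k+1)}\right) \;\le\; \frac{\lambda_{r+1}}{\lambda_r}\cdot \frac{\sigma_{\max}\!\left(\mymatrix{H}_2^{(k)}\right)}{\sigma_{\min}\!\left(\mymatrix{H}_1^{(k)}\right)}.
\end{equation*}

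The heart of the one-step bound is a Hadamard singular-value inequality: I would show $\sigma_{\min}(\mymatrix{H}_1^{(k)}) \ge \cos\theta_B^{(k)} \cos\theta_C^{(k)}$ and $\sigma_{\max}(\mymatrix{H}_2^{(k)}) \le \sin\theta_B^{(k)} \sin\theta_C^{(k)}$, where $\theta_B^{(k)},\theta_C^{(k)}$ are the principal angles of the $B$ and $C$ subspaces at step $k$. The argument exploits that $\mymatrix{B}^\top\mymatrix{Q}_B^{(k)}$ and $\mymatrix{C}^\top\mymatrix{Q}_C^{(k)}$ are ``cosine matrices'' of orthonormal bases whose top $r\times r$ blocks are near-identity under the $r$-sufficient initialization and whose tail blocks have singular values bounded by the corresponding sine; the multiplicative coupling comes from the entrywise nature of the Hadamard product. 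Combined with the preceding display, this gives $t_{A_{(r)}}^{(k+1)} \le \rho\, t_{B_{(r)}}^{(k)} t_{C_{(r)}}^{(k)}$, and symmetric treatment of lines 4 and 5 yields
\begin{equation*}
t_{B_{(r)}}^{(k+1)} \le \rho\, t_{A_{(r)}}^{(k)} t_{C_{(r)}}^{(k)}, \qquad t_{C_{(r)}}^{(k+1)} \le \rho\, t_{A_{(r)}}^{(k)} t_{B_{(r)}}^{(k)}.
\end{equation*}
Since the Gauss--Seidel ordering in Algorithm~\ref{algo:main} can only tighten these bounds (feeding fresher iterates into downstream updates reduces their tangents), this ``parallel'' recurrence is a valid majorant for the algorithm as written.

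Second, I would solve this coupled system in closed form. Taking $u_X^{(k)} := \log t_{X_{(r)}}^{(k)}$ for $X\in\{A,B,C\}$ and $r_\lambda := \log\rho$, the inequalities become $\mathbf{u}^{(k+1)} \le r_\lambda \mathbf{1} + \mymatrix{M}\mathbf{u}^{(k)}$ with the $3\times 3$ coupling matrix $\mymatrix{M} = \mathbf{1}\mathbf{1}^\top - \identitymatrix$. Since $\mymatrix{M}$ has eigenvalue $2$ on $\mathbf{1}$ and eigenvalue $-1$ on its orthogonal $2$-plane, one obtains $\mymatrix{M}^K = \tfrac{2^K}{3}\mathbf{1}\mathbf{1}^\top + (-1)^K\bigl(\identitymatrix - \tfrac{1}{3}\mathbf{1}\mathbf{1}^\top\bigr)$ and $\sum_{j=0}^{K-1}\mymatrix{M}^j \mathbf{1} = (2^K-1)\mathbf{1}$. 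Unrolling $K$ iterations and reading off the first coordinate of $\mathbf{u}^{(K)}$ gives, after exponentiating, exactly the claimed bound: the factor $(2^K-1)$ on $\rho$ from the forcing term, the exponent $2^K/3$ on the symmetric monomial $t_{A_{(r)}}^{(0)}t_{B_{(r)}}^{(0)}t_{C_{(r)}}^{(0)}$ from the $+2$ eigenspace, and the oscillating $(-1)^K/3$ correction from the $-1$-eigenspace.

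The main obstacle I expect is the Hadamard singular-value inequality in the first step: factoring the bounds on $\sigma_{\min}(\mymatrix{H}_1^{(k)})$ and $\sigma_{\max}(\mymatrix{H}_2^{(k)})$ as a \emph{product} of cosines (resp.\ sines) of the two other-mode principal angles — rather than just a sum — is precisely what produces the multiplicative coupling and, once combined with the spectral structure of $\mymatrix{M}$, yields the doubly exponential $2^K$ rate. Everything after that (the Khatri-Rao mixed-product identity, the conformal block partitioning, and the eigendecomposition of $\mymatrix{M}$) is essentially linear-algebra bookkeeping.
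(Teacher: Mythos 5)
Your approach to the one-step contraction is the same as the paper's: matricize along mode $1$, push the iterate through the Khatri--Rao mixed-product identity to get a Hadamard product, block-partition $\mymatrix{A}=[\mmCols{A}{r}\;\mmCols{A}{r}^c]$, and bound the resulting tangent ratio via the Hadamard singular-value inequalities $\sigma_{\min}(\mymatrix{M}_1\ast\mymatrix{M}_2)\ge\sigma_{\min}(\mymatrix{M}_1)\sigma_{\min}(\mymatrix{M}_2)$ and $\sigma_{\max}(\mymatrix{M}_1\ast\mymatrix{M}_2)\le\sigma_{\max}(\mymatrix{M}_1)\sigma_{\max}(\mymatrix{M}_2)$, exactly as the paper does (citing \cite{liu2008hadamard}). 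Where you genuinely diverge is in solving the coupled recurrence: the paper unrolls it, guesses separate integer sequences $u_K,a_K,b_K$ for the three exponents, checks them against the first few iterates, and verifies the closed forms $u_K=2^K-1$, $a_K=\tfrac{2}{3}(2^{K-1}+(-1)^K)$, $b_K=\tfrac{1}{3}(2^K+(-1)^{K-1})$. Your log-space reduction to $\mathbf{u}^{(k+1)}\le r_\lambda\mathbf{1}+\mymatrix{M}\mathbf{u}^{(k)}$ with $\mymatrix{M}=\mathbf{1}\mathbf{1}^\top-\identitymatrix$, followed by the spectral split into the eigenvalue-$2$ direction $\mathbf{1}$ and the eigenvalue-$(-1)$ plane $\mathbf{1}^\perp$, recovers all three exponent sequences at once and explains \emph{why} they look the way they do (the $2^K/3$ from the $+2$ mode, the $(-1)^K/3$ oscillation from the $-1$ plane, the $2^K-1$ from the geometric forcing sum). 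That is cleaner and more transparent than the paper's guess-and-check; it also makes explicit that the non-negativity of $\mymatrix{M}$ is what lets you iterate the inequality.

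One caveat: your statement that the Gauss--Seidel ordering ``can only tighten'' the Jacobi bounds is not, as written, a complete argument. Line~4 of Algorithm~\ref{algo:main} feeds $\mymatrix{Q}_A^{(k+1)}$ (not $\mymatrix{Q}_A^{(k)}$) into the update for $\mymatrix{Q}_B^{(k+1)}$, so to deduce the Jacobi inequality $t_{B_{(r)}}^{(k+1)}\le\rho\,t_{A_{(r)}}^{(k)}t_{C_{(r)}}^{(k)}$ from the Gauss--Seidel one $t_{B_{(r)}}^{(k+1)}\le\rho\,t_{A_{(r)}}^{(k+1)}t_{C_{(r)}}^{(k)}$ you would need the monotonicity $t_{A_{(r)}}^{(k+1)}\le t_{A_{(r)}}^{(k)}$, which does not follow from the one-step bound alone (for instance $t_{A}^{(0)}$ could be much smaller than $\rho\,t_{B}^{(0)}t_{C}^{(0)}$). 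The paper's own proof silently replaces the Gauss--Seidel recursion by its Jacobi counterpart at this same point, so you are no worse off than the source, and you at least flag the issue; but if you want the argument to be airtight you should either establish the Jacobi majorization by induction under an additional hypothesis (e.g.\ that the $r$-sufficient condition is preserved with room to spare, or a monotonicity lemma for the Jacobi sequence), or redo the log-space unrolling directly with the Gauss--Seidel coupling matrix, which is still linear but no longer symmetric.
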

 The proof is in Appendix~\ref{app:main_convergence}.

\begin{remark}
	Given that the initialization matrices $\mm{Q}_{\mmCols{A}{r}}^{(0)}, \mm{Q}_{\mmCols{B}{r}}^{(0)}, \mm{Q}_{\mmCols{C}{r}}^{(0)}$ satisfy the $r$-sufficient initialization condition, the angles between approximate subspaces and true spaces would decrease with a quadratic rate. Therefore, only $K=O(\log(\log \frac{1}{\epsilon}))$ number of iterations is needed to achieve $\tan(\mmCols{A}{r}, \mm{Q}_{\mmCols{A}{r}}^{(K)}) \le \epsilon$.
\end{remark}

 The following result shows that if we have the angle of subspaces small enough, column vectors of the approximate matrix converges simultaneously to the true vectors of true tensor component at the same position.
 
 \begin{lemma}[Simultaneous Convergence]\label{thm:simul_convergence}
 	For any $r \in \{1,2,\cdots,R\}$, if \begin{equation}\tan(\mmCols{A}{r}, \mm{Q}_{\mmCols{A}{r}}) \le \epsilon\end{equation} for some $d \times r$ matrix $\mm{Q}_{\mmCols{A}{r}} = [\bm{q}_1, \cdots, \bm{q}_r]$, then
 	\begin{align} 
 		\| \bm{q}_i - \bm{a}_i \|^2 \le 2\epsilon, \quad \forall 1\le i \le r . 
 	\end{align}
 	Similarly for $\mm{B}_{(r)}$ and $\mm{C}_{(r)}$.
 \end{lemma}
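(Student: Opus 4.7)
The plan is to convert the subspace-level bound into column-wise distance bounds via a dot product identity. Since $\mm{A}^\top \mm{A} = \identitymatrix$ makes each $\bm{a}_i$ a unit vector, and $\mm{Q}_{\mmCols{A}{r}}$ (as the $Q$-factor of a QR decomposition) also has orthonormal columns, both $\bm{a}_i$ and $\bm{q}_i$ are unit vectors, so
\[
\|\bm{q}_i - \bm{a}_i\|^2 \;=\; 2 - 2\,\bm{a}_i^\top \bm{q}_i.
\]
The claim therefore reduces to showing $\bm{a}_i^\top \bm{q}_i \ge 1-\epsilon$ for each $1\le i\le r$ (after a sign flip on $\bm{q}_i$ if needed).

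The second step is to upgrade the stated single-rank hypothesis to nested bounds: $\tan(\mmCols{A}{i},\,[\bm{q}_1,\ldots,\bm{q}_i]) \le \epsilon$ for every $1 \le i \le r$, not just at $i=r$. This is where the specific structure of Algorithm~\ref{algo:main} matters: the QR step guarantees that the span of $\bm{q}_1,\ldots,\bm{q}_i$ depends only on the first $i$ columns of the input matrix, so replaying the proof of Lemma~\ref{thm:main_convergence} with $r$ replaced by each smaller index yields the nested bound automatically.

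With the nested bounds in hand I would proceed by induction on $i$. Decompose
\[
\bm{q}_i \;=\; \alpha_i\,\bm{a}_i \;+\; \sum_{j<i}\eta_{ij}\bm{a}_j \;+\; \sum_{i<j\le r}\mu_{ij}\bm{a}_j \;+\; \bm{e}_i,
\]
with $\bm{e}_i$ orthogonal to the span of $\mmCols{A}{r}$. The rank-$r$ bound gives $\|\bm{e}_i\| \le \sin\theta \le \epsilon$. The coefficients $\mu_{ij}$ (for $j>i$) are controlled by the nested bound at rank $i$, which forces $\bm{q}_i$ to lie close to the span of $\mmCols{A}{i}$, so $\sum_{j>i}\mu_{ij}^2 = O(\epsilon^2)$. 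The coefficients $\eta_{ij}$ (for $j<i$) are controlled using the inductive hypothesis $\bm{q}_j \approx \pm\bm{a}_j$ together with the orthonormality constraint $\bm{q}_i^\top\bm{q}_j=0$, which forces each $|\eta_{ij}| = O(\epsilon)$. Substituting all three bounds into $\|\bm{q}_i\|^2 = 1$ gives $\alpha_i^2 \ge 1 - O(\epsilon)$, hence $\bm{a}_i^\top \bm{q}_i \ge 1-\epsilon$ after the sign choice.

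The main obstacle is quantifying the induction without the cumulative error blowing up over the $r$ steps. A subspace bound at rank $r$ alone leaves $\bm{q}_i$ free to be any unit vector in a subspace close to the span of $\mmCols{A}{r}$ and yields no column-wise control whatsoever; the nested bound is what removes this freedom. One must then carefully track how the $O(\epsilon)$ contributions from the $\mu_{ij}$'s, $\eta_{ij}$'s, and $\bm{e}_i$ combine so the slack stays linear in $\epsilon$ rather than accumulating exponentially in $r$. This is a careful but straightforward computation once the nested condition is in place, and it relies crucially on the fact that the iteration produces an orthonormal $\mm{Q}$, so each orthogonality constraint $\bm{q}_i^\top\bm{q}_j=0$ can be applied without loss.
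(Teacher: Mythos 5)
Your opening reduction to $\|\bm{q}_i - \bm{a}_i\|^2 = 2 - 2\bm{a}_i^\top\bm{q}_i$ and your recognition that the single rank-$r$ hypothesis is insufficient without some form of nested control at every intermediate rank $1\le i\le r$ are both exactly what the paper does. From there the two proofs diverge, and your route has a genuine gap.

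First, the nested bounds. You propose to re-derive $\tan(\mmCols{A}{i}, \mmCols{Q}{i})\le\epsilon$ for each $i<r$ by rerunning Lemma~\ref{thm:main_convergence} at a smaller rank and appealing to the column-nesting property of QR. This works if you allow yourself the algorithmic structure, but it converts the lemma from a statement about \emph{any} orthonormal $\mm{Q}_{\mmCols{A}{r}}$ satisfying the tangent hypothesis into a statement about iterates of a particular algorithm, and it requires re-verifying the $r'$-sufficient initialization condition for every $r'<r$. The paper instead derives the nested bounds directly from the single rank-$r$ hypothesis by an interlacing argument on $\sigma_{\min}(\mmCols{A}{i}^\top\mmCols{Q}{i})$, which is shorter and stays self-contained. (The interlacing step implicitly assumes that $\mmCols{A}{r}^\top\mmCols{Q}{r}$ is close to diagonal with a compatible ordering --- the permutation counterexample you flag shows the lemma as literally stated cannot hold for arbitrary $\mm{Q}$ --- but both approaches carry that same implicit assumption.)

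Second, and more importantly, the final step. After the nested bounds are in hand, the paper does \emph{not} perform an induction over $i$. It bounds each $(\bm{q}_i^\top\bm{a}_i)^2$ in one shot:
\begin{equation}
(\bm{q}_i^\top\bm{a}_i)^2 \;\ge\; \cos^2(\mmCols{A}{i},\mmCols{Q}{i}) - \|\mmCols{Q}{i-1}^\top\bm{a}_i\|^2 \;\ge\; \cos^2(\mmCols{A}{i},\mmCols{Q}{i}) - \sin^2(\mmCols{A}{i-1},\mmCols{Q}{i-1}),
\end{equation}
using only the subspace bounds at ranks $i$ and $i-1$, each of which is $\le\epsilon$ uniformly. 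No sum over $j<i$ appears, so no accumulation is possible. Your plan instead controls the cross-terms $\eta_{ij}=\bm{a}_j^\top\bm{q}_i$ ($j<i$) by the inductive bound $\bm{q}_j\approx\pm\bm{a}_j$ plus the orthogonality $\bm{q}_i^\top\bm{q}_j=0$, which gives $|\eta_{ij}|\le\|\bm{q}_j-\bm{a}_j\|=O(\sqrt{\epsilon})$ for each $j$, and therefore $\sum_{j<i}\eta_{ij}^2 = O((i-1)\epsilon)$. You flag the risk of blow-up but offer only ``careful tracking'' as the fix; the argument as written yields $\|\bm{q}_i-\bm{a}_i\|^2=O(r\epsilon)$, not $O(\epsilon)$. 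The missing idea is to avoid the coordinate-wise induction entirely and instead bound the aggregate $\sum_{j<i}\eta_{ij}^2 = \|\mmCols{A}{i-1}^\top\bm{q}_i\|^2$ by $\sin^2(\mmCols{A}{i-1},\mmCols{Q}{i-1})\le\epsilon^2$, since $\bm{q}_i$ is exactly orthogonal to the span of $\mmCols{Q}{i-1}$ and the latter is close to the span of $\mmCols{A}{i-1}$ by the nested bound. That single substitution is what keeps the slack $O(\epsilon)$ uniformly in $i$; with it, your decomposition closes and becomes essentially a rewording of the paper's computation.
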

 The proof is in Appendix~\ref{app:simul_convergence}.
 

\subsection{Proof for Lemma~\ref{thm:main_convergence}}\label{app:main_convergence}
\begin{proof}
		We only prove the result for the order of $A$. The proofs for the other two orders are the same.
		
        For rank-$R$ tensor $\mathcal{T} = \llbracket \mm{\Lambda} ; \mm{A}, \mm{B}, \mm{C} \rrbracket \equiv \sum_{i = 1}^{R} \lambda_i  \bm{a}_i \otimes \bm{b}_i \otimes \bm{c}_i $, its mode-1 matricization $\mathcal{T}_{(1)} = \mm{A}\mm{\Lambda} (\mm{C}\odot \mm{B})^\top $. So in each iteration, 
		\begin{align}
		\mm{Q}_{\mmCols{A}{r}}^{(k+1)}\mm{R}_{\mmCols{A}{r}}^{(k+1)}  & =  \mathcal{T}_{(1)}(\bm{Q}_{\mmCols{C}{r}}^{(k)} \odot \bm{Q}_{\mmCols{B}{r}}^{(k)} ) = \bm{A\Lambda} (\bm{C}\odot \bm{B})^\top (\bm{Q}_{\mmCols{C}{r}}^{(k)} \odot \bm{Q}_{\mmCols{B}{r}}^{(k)} ) \\
		\text{and by property} & \text{ of Hadamard product and Khatri-Rao product ~\cite{liu2008hadamard,kolda2009tensor}},\nonumber\\
		& = \bm{A\Lambda} (\bm{C}^\top \bm{Q}_{\mmCols{C}{r}}^{(k)}) \ast (\bm{B}^\top \bm{Q}_{\mmCols{B}{r}}^{(k)}) 
		\end{align}
		
		 We can expand matrices $\bm{A}, \bm{B}, \bm{C}$ to be a basis for $\mathbb{R}^d$, and we can for example for $\mmCols{A}{r}$, let $\mmCols{A}{r}^c$ be the matrix consisted of the rest $(d-r)$ columns in the expanded matrix. Now the column space of $\mmCols{A}{r}^c$ is just the complement space of column space of $\mmCols{A}{r}$ in $\mathbb{R}^d$. And $\big[\mmCols{A}{r} ~ \mmCols{A}{r}^c\big]$ is a $d\times d$ orthonormal matrix.
		 
		 With that notation, we have for $0\le k \le K,$
		 \begin{align*}
		 \mmCols{A}{r}^\top \bm{Q}_{\mmCols{A}{r}}^{(k+1)} \bm{R}_{\mmCols{A}{r}}^{(k+1)}  &= \Big[ \bm{I}_r ~~ \bm{0}_{r\times (R-r)}  \Big] \bm{\Lambda} \big(\bm{C}^\top \bm{Q}_{\mmCols{C}{r}}^{(k)}\big) \ast \big(\bm{B}^\top \bm{Q}_{\mmCols{B}{r}}^{(k)}\big) \\
		 \mmCols{A}{r}^{c\top} \bm{Q}_{\mmCols{A}{r}}^{(k+1)} \bm{R}_{\mmCols{A}{r}}^{(k+1)}  &= 
		 \begin{bmatrix}
		 \bm{0}_{(R-r) \times r} & \bm{I}_{(R-r) \times (R-r)} \\
		 \bm{0}_{(d-R) \times r} & \bm{0}_{(d-R) \times (R-r)}
		 \end{bmatrix}
		  \bm{\Lambda} \big(\bm{C}^\top \bm{Q}_{\mmCols{C}{r}}^{(k)}\big) \ast \big(\bm{B}^\top \bm{Q}_{\mmCols{B}{r}}^{(k)}\big).
		 \end{align*}
		 
		 Now fix $k$ and focus on a single iteratoin step, 
		 \begin{align*}
		 t_{A_r}^{(k+1)} &= \tan(\mmCols{A}{r}, \bm{Q}_{\mmCols{A}{r}}^{(k+1)}) = \frac{\sin(\mmCols{A}{r}, \bm{Q}_{\mmCols{A}{r}}^{(k+1)})}{\cos(\mmCols{A}{r}, \bm{Q}_{\mmCols{A}{r}}^{(k+1)})} = \frac{\sigma_{\text{max}}(\mmCols{A}{r}^{c\top}\bm{Q}_{\mmCols{A}{r}}^{(k+1)})}{\sigma_{\text{min}}(\mmCols{A}{r}^\top \bm{Q}_{\mmCols{A}{r}}^{(k+1)})} \\
		 &= \Big\| \mmCols{A}{r}^{c\top}\bm{Q}_{\mmCols{A}{r}}^{(k+1)}  \Big\|_s   \Big\| \Big(  \mmCols{A}{r}^\top \bm{Q}_{\mmCols{A}{r}}^{(k+1)} \Big)^{-1} \Big\|_s \\
		 &= \Big\|	\mmCols{A}{r}^{c\top}\bm{Q}_{\mmCols{A}{r}}^{(k+1)}  \Big(  \mmCols{A}{r}^\top \bm{Q}_{\mmCols{A}{r}}^{(k+1)} \Big)^{-1} 	\Big\|_s \\
		 &= \Big\|	\mmCols{A}{r}^{c\top}\bm{Q}_{\mmCols{A}{r}}^{(k+1)} \bm{R}_{\mmCols{A}{r}}^{(k+1)}  \Big(  \mmCols{A}{r}^\top \bm{Q}_{\mmCols{A}{r}}^{(k+1)} \bm{R}_{\mmCols{A}{r}}^{(k+1)} \Big)^{-1} 	\Big\|_s \\
		 &\le \frac{  \sigma_{\text{max}} \Big( \mmCols{A}{r}^{c\top}\bm{Q}_{\mmCols{A}{r}}^{(k+1)} \bm{R}_{\mmCols{A}{r}}^{(k+1)}   \Big)  }{\sigma_{\text{min}} \Big( \mmCols{A}{r}^\top \bm{Q}_{\mmCols{A}{r}}^{(k+1)} \bm{R}_{\mmCols{A}{r}}^{(k+1)}  \Big)}\\
		 &\le \frac{\lambda_{r+1} \sigma_{\text{max}} \Big[ \Big(\bm{C}_{(r)}^{c\top} \bm{Q}_{\mmCols{C}{r}}^{(k)} \Big) \ast \Big(\mmCols{B}{r}^{c\top} \bm{Q}_{\mmCols{B}{r}}^{(k)} \Big) \Big]  }{\lambda_{r} \sigma_{\text{min}} \Big[ \Big(\mmCols{C}{r}^{c\top} \bm{Q}_{\mmCols{C}{r}}^{(k)} \Big) \ast \Big(\mmCols{B}{r}^{c\top} \bm{Q}_{\mmCols{B}{r}}^{(k)} \Big) \Big] } \\
		 &\text{For Hadamard product}, ~ \sigma_{\text{max}}(\bm{M}_1 \ast \bm{M}_2) \le \sigma_{\text{max}}(\bm{M}_1) \sigma_{\text{max}}(\bm{M}_2) \\
		 &\text{and} ~ \sigma_{\text{min}}(\bm{M}_1 \ast \bm{M}_2) \ge \sigma_{\text{min}}(\bm{M}_1) \sigma_{\text{min}}(\bm{M}_2) \text{see ~\cite{liu2008hadamard}} \\
		 & \le \frac{\lambda_{r+1}}{\lambda_{r}}  \frac{\sigma_{\text{max}}\Big(\mmCols{C}{r}^{c\top} \bm{Q}_{\mmCols{C}{r}}^{(k)} \Big)}{\sigma_{\text{min}}\Big(\mmCols{C}{r}^{c\top} \bm{Q}_{\mmCols{C}{r}}^{(k)} \Big)} \frac{\sigma_{\text{max}}\Big(\mmCols{B}{r}^{c\top} \bm{Q}_{\mmCols{B}{r}}^{(k)} \Big)}{\sigma_{\text{min}}\Big(\mmCols{B}{r}^{c\top} \bm{Q}_{\mmCols{B}{r}}^{(k)} \Big)} \\
		 & = \frac{\lambda_{r+1}}{\lambda_{r}} \cdot \tan\Big(\mmCols{B}{r}, \bm{Q}_{\mmCols{B}{r}}^{(k)}\Big) \cdot \tan\Big(\mmCols{C}{r}, \bm{Q}_{\mmCols{C}{r}}^{(k)}\Big)
		 \end{align*}
		 
		 Therefore we get $\forall 0\le k \le K$,
		 \[ t_{A_r}^{(k+1)} \le \frac{\lambda_{r+1}}{\lambda_{r}} t_{B_r}^{(k)}  t_{C_r}^{(k)} .  \]
		 
		 And similarly,
		 \[ t_{B_r}^{(k+1)} \le \frac{\lambda_{r+1}}{\lambda_{r}} t_{A_r}^{(k)}  t_{C_r}^{(k)} , \]
		 \[ t_{C_r}^{(k+1)} \le \frac{\lambda_{r+1}}{\lambda_{r}} t_{A_r}^{(k)}  t_{B_r}^{(k)} .  \]
		 
		 Sequentially,
		 \begin{align*}
		 t_{A_r}^{(K+1)} ~ & ~ \le ~ \frac{\lambda_{r+1}}{\lambda_{r}} t_{B_r}^{(K)}  t_{C_r}^{(K)} ~\le~  \Big( \frac{\lambda_{r+1}}{\lambda_{r}}  \Big)^3  (t_{A_r}^{(K-1)} )^2 t_{B_r}^{(K-1)}  t_{C_r}^{(K-1)} \\
		 & ~\le~ \cdots ~ \le ~ \Big( \frac{\lambda_{r+1}}{\lambda_{r}}  \Big)^{1+2m} \Big(\prod_{i=1}^{m} (t_{A_r}^{(K-i)} )^2\Big) t_{B_r}^{(K-m)}  t_{C_r}^{(K-m)} \quad \forall m = 1,2,\dots, K
		 \end{align*}
		 
		 Easy to see that all historical tangents of principal angle in approximation for $\mmCols{A}{r}$ appear in the upper bound for the tangent-measured approximation distance after a new iteration. So in order to solve for the explicit upper bounds, we can assume the form of the upper bounds has a recursive formula for each exponents. Specifically, assume for some sequences $u_K$, $a_K$, $b_K$, we can conclude
		 
		\[ t_{A_r}^{K+1} ~ \le ~ \Big( \frac{\lambda_{r+1}}{\lambda_{r}}  \Big)^{u_{K+1}} \big(t_{A_r}^{(0)} \big)^{a_{K+1}} \big(t_{B_r}^{(0)} t_{A_r}^{(0)} \big)^{b_{K+1}} \]
		
		On the other hand, for fixed $K \ge 1$,
		\begin{align*}
		t_{A_r}^{K+1} ~ & \le ~  \Big( \frac{\lambda_{r+1}}{\lambda_{r}}  \Big)^{1+2K} \Big(\prod_{i=1}^{K} (t_{A_r}^{(K-i)} )^2\Big) t_{B_r}^{(0)}  t_{C_r}^{(0)}  \\
		& ~ \le \Big( \frac{\lambda_{r+1}}{\lambda_{r}}  \Big)^{1+2K} ~ \prod_{i=1}^{K} \Big[ \Big( \frac{\lambda_{r+1}}{\lambda_{r}}  \Big)^{u_{K-i}} \big(t_{A_r}^{(0)} \big)^{a_{K-i}} \big(t_{B_r}^{(0)} t_{A_r}^{(0)} \big)^{b_{K-i}}\Big]^2~ \cdot t_{B_r}^{(0)}  t_{C_r}^{(0)}  \\
		& ~ = \Big( \frac{\lambda_{r+1}}{\lambda_{r}}  \Big)^{1+2K+2\sum_{i=1}^{K} u_{K-i} }   \big(t_{A_r}^{(0)} \big)^{2\sum_{i=1}^{K} a_{K-i} }  \big(t_{B_r}^{(0)} t_{A_r}^{(0)} \big)^{1+ 2\sum_{i=1}^{K}  b_{K-i}}
		\end{align*}
		
		Now we have gained the recursive formulas for sequence on exponents in the upper bound
		\begin{align*}
		u_{K+1} &= 1 + 2K + 2\sum_{i=1}^{K} u_{K-i}  \\
		a_{K+1} &= 2\sum_{i=1}^{K} a_{K-i} \\
		b_{K+1} &= 1+ 2\sum_{i=1}^{K}  b_{K-i} .
		\end{align*}
		
		The formula system works on when $K \ge 1$, so we can check the upper bounds for several initial iterations.
		
		For $K = 0$, 
		\[ t_{A_r}^{(1) } \le \frac{\lambda_{r+1}}{\lambda_{r}} t_{B_r}^{(0)} t_{C_r}^{(0)}   \]
		
		For $K = 1$,
		\[ t_{A_r}^{(2) } \le \Big( \frac{\lambda_{r+1}}{\lambda_{r}} \Big)^3 \big( t_{A_r}^{(0)}\big)^2 t_{B_r}^{(0)} t_{C_r}^{(0)}   \]
		
		For $K = 2$,
		\[ t_{A_r}^{(3) } \le \Big( \frac{\lambda_{r+1}}{\lambda_{r}} \Big)^7 \big( t_{A_r}^{(0)}\big)^2 \big( t_{B_r}^{(0)} t_{C_r}^{(0)} \big)^3   \]
		
		We have 
		\[ u_0 = 0, u_1 = 1, u_2 = 3, u_3 = 7, u_4 = 15, \dots  \]
		\[ a_0 = 1, a_1 = 0, a_2 = 2, a_ 3 = 2, a_ 4 = 6, \dots \]
		\[ b_0 = 0, b_1 = 1, b_2 = 1, b_3 = 3, b_4 = 5, \dots  \]
		
		One can solve and check the general formula for these sequences
		\[ u_K = 2^{K} -1, \quad a_K = \frac{2}{3} (2^{K-1} + (-1)^{K} ), \quad b_K = \frac{1}{3} (2^K + (-1)^{K-1} ), \quad \forall K \ge 1 .  \]
		
		In conclusion,
		\[ t_{A_r}^{(K)}  \le \Big(\frac{\lambda_{r+1}}{\lambda_r}\Big)^{2^K-1}  \big( t_{A_r}^{(0)} t_{B_r}^{(0)} t_{C_r}^{(0)} \big)^{\frac{2^K}{3}} \Bigg[\frac{\big( t_{A_r}^{(0)} \big)^2}{ t_{B_r}^{(0)} t_{C_r}^{(0)
		  } }\Bigg]^{\frac{(-1)^K}{3}} , \quad \forall K \ge 1.\]
		
		The proofs of upper bounds for $\mmCols{B}{r}$ and $\mmCols{C}{r}$ are the same.
		 
\end{proof}

\subsection{Proof for Lemma~\ref{thm:simul_convergence}}\label{app:simul_convergence}
	\begin{proof}
	First, we denote $\mmCols{Q}{i} := [\bm{q}_1, \cdots, \bm{q}_i]$ only in this proof. Then
	\begin{align*}
	\tan(\mmCols{A}{r-1}, \mmCols{Q}{r-1}) &= \frac{\sqrt{1- \sigma_{\text{min}}^2(\mmCols{A}{r-1}^\top \mmCols{Q}{r-1})}}{\sigma_{\text{min}}(\mmCols{A}{r-1}^\top \mmCols{Q}{r-1})} \\
	& = \sqrt{\frac{1}{\sigma_{\text{min}}^2(\mmCols{A}{r-1}^\top \mmCols{Q}{r-1})}-1} \\
	& \quad \text{by Cauchy interlacing theorem}\\
	& \le \sqrt{\frac{1}{\sigma_{\text{min}}^2(\bm{A}_{(r)}^\top \bm{Q}_{(r)})}-1}  \\
	& = \tan(\bm{A}_{(r)}, \bm{Q}_{(r)})
	\end{align*}
	
	Inductively, $\forall 1 \le i \le r, ~ \tan(\mmCols{A}{i}, \mmCols{Q}{i}) \le \epsilon$. Then $\forall 2 \le i \le r$,
	\begin{align*}
	\cos^2(\mmCols{A}{i}, \mmCols{Q}{i}) & = \min\limits_{\bm{y} \in \mathbb{R}^i} \frac{\| \mmCols{Q}{i}^\top \mmCols{A}{i} \bm{y} \|^2 }{\| \mmCols{A}{i} \bm{y} \|^2} \\
	& \le \| \mmCols{Q}{i}^\top \bm{a}_i \|^2  \quad \text{as letting} ~ \bm{y} \text{ to be } [0, \cdots, 0, 1]^\top \\
	& = \| \mmCols{Q}{i-1}^\top \bm{a}_i \|^2 + (\bm{q}_i^\top \bm{a}_i )^2 \\
	&  \quad \text{since $\bm{a}_i \in \mathscr{C}(\mmCols{A}{i-1})^\perp $ , the complement} \\
	& \quad \text{space of column space of $\mmCols{A}{i-1}$   }\\
	& \le \sin^2(\mmCols{A}{i-1}, \mmCols{Q}{i-1}) + (\bm{q}_i^\top \bm{a}_i )^2 
	\end{align*}
	\begin{align*}
	(\bm{q}_i^\top \bm{a}_i )^2 & \ge \frac{1}{1+\tan^2(\mmCols{A}{i}, \mmCols{Q}{i} )} - \frac{\tan^2(\mmCols{A}{i-1}, \mmCols{Q}{i-1})}{1+ \tan^2(\mmCols{A}{i-1}, \mmCols{Q}{i-1})} \\
	& \ge \frac{1}{1 + \epsilon^2} -1 + \frac{1}{1+ \epsilon^2} = 1- \frac{2\epsilon^2}{1+\epsilon^2} \ge 1- 2 \epsilon^2 . 
	\end{align*}
	
	For $i = 1$, 
	\[ \cos^2(\mmCols{A}{1}, \mmCols{Q}{1}) = (\bm{q}_1^\top \bm{a}_1 )^2 = \frac{1}{1+\tan^2(\mmCols{A}{1}, \mmCols{Q}{1} )} \ge \frac{1}{1 + \epsilon^2} \ge 1- 2 \epsilon^2 . \]
	
	To conclude, $ \| \bm{q}_i - \bm{a}_i \|^2 = 2 - 2 \bm{q}_i^\top \bm{a}_i \le 2\epsilon , \quad \forall 1 \le i \le r $. And the proofs for $\mmCols{B}{r}$ and $\mmCols{C}{r}$ are the same.

\end{proof}

\subsection{Lemma~\ref{thm:matrixsubspaceiteration} and Proof}\label{app:matrixsubspaceiteration}

\begin{lemma}\label{thm:matrixsubspaceiteration}
    Let $\mmCols{U}{p} , \mmCols{V}{p} \in \mathbb{R}^{d \times p}$ respectively be the orthonormal complex matrix whose column space is the left and right invariant subspace corresponding to the dominant $p$ eigenvalues of $\mm{M}\in \R^{d\times d}$. Assume for fixed initialization $\bm{Q}^{(0)}$, $\mmCols{V}{p}^\top\bm{Q}^{(0)}$ has full rank. 
	Then after $\forall k \ge 1$ steps (independent of $\epsilon$) of matrix subspace iteration $\mymatrix{Q}^{(k)} \mymatrix{R}^{(k)} \leftarrow \qr{\mymatrix{MQ}^{(k-1)}}$,  we obtain 
	$\tan(\mmCols{U}{p}, \bm{Q}^{(k)}) \le C \cdot  \Big| \frac{\sigma_{p+1}(\mm{M})}{\sigma_{p}(\mm{M})}  \Big| ^k$ 
	for a finite constant $C$, where $\sigma_p(\cdot)$ denotes the $p^\tha$ singular value.
\end{lemma}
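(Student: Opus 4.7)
The plan is a standard subspace-iteration argument tracking the tangent of the principal angle from Definition~\ref{def:subspace_similarity}. The proof splits into (i) a span-invariance step, (ii) a spectral decomposition of $\mm{M}^k$, and (iii) a direct bound on the tangent.

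First I would note that each QR update $\bm{Q}^{(j)}\bm{R}^{(j)} = \mm{M}\bm{Q}^{(j-1)}$ preserves column spaces whenever $\bm{R}^{(j)}$ is invertible; inductively, $\mathrm{span}(\bm{Q}^{(k)}) = \mathrm{span}(\mm{M}^k \bm{Q}^{(0)})$, so the principal angle between $\mmCols{U}{p}$ and $\bm{Q}^{(k)}$ coincides with that between $\mmCols{U}{p}$ and $\mm{M}^k \bm{Q}^{(0)}$. Invertibility of the triangular factors across all iterations follows from the full-rank hypothesis on $\mmCols{V}{p}^\top \bm{Q}^{(0)}$ together with the assumed spectral gap $\sigma_p > \sigma_{p+1}$.

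Next I would split $\mm{M}^k$ into its dominant and subordinate parts. In the paper's application $\mm{M}$ is symmetric PSD, so $\mmCols{U}{p} = \mmCols{V}{p}$ and the eigendecomposition yields
\begin{equation*}
\mm{M}^k = \mmCols{U}{p}\,\bm{\Sigma}_1^k\,\mmCols{U}{p}^\top + \mmCols{U}{p}^c\,\bm{\Sigma}_2^k\,(\mmCols{U}{p}^c)^\top,
\end{equation*}
where $\bm{\Sigma}_1 = \diag(\sigma_1,\ldots,\sigma_p)$ and $\bm{\Sigma}_2 = \diag(\sigma_{p+1},\ldots,\sigma_d)$. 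Projecting onto the orthogonal decomposition $(\mmCols{U}{p}, \mmCols{U}{p}^c)$,
\begin{equation*}
\mmCols{U}{p}^\top \mm{M}^k\bm{Q}^{(0)} = \bm{\Sigma}_1^k\,\mmCols{U}{p}^\top\bm{Q}^{(0)},
\qquad
(\mmCols{U}{p}^c)^\top \mm{M}^k\bm{Q}^{(0)} = \bm{\Sigma}_2^k\,(\mmCols{U}{p}^c)^\top\bm{Q}^{(0)}.
\end{equation*}
Plugging these into the tangent formula from Definition~\ref{def:subspace_similarity} and applying submultiplicativity of the operator norm,
\begin{align*}
\tan(\mmCols{U}{p},\bm{Q}^{(k)})
&= \bigl\| (\mmCols{U}{p}^c)^\top \mm{M}^k\bm{Q}^{(0)} \bigl(\mmCols{U}{p}^\top \mm{M}^k\bm{Q}^{(0)}\bigr)^{-1} \bigr\|_2 \\
&= \bigl\| \bm{\Sigma}_2^k\,((\mmCols{U}{p}^c)^\top\bm{Q}^{(0)})\,(\mmCols{U}{p}^\top\bm{Q}^{(0)})^{-1}\,\bm{\Sigma}_1^{-k}\bigr\|_2 \\
&\le \left|\tfrac{\sigma_{p+1}}{\sigma_p}\right|^k \bigl\| ((\mmCols{U}{p}^c)^\top\bm{Q}^{(0)})(\mmCols{U}{p}^\top\bm{Q}^{(0)})^{-1} \bigr\|_2.
\end{align*}
One then takes $C$ to be the last norm, which is precisely $\tan(\mmCols{U}{p}, \bm{Q}^{(0)})$ and finite by the full-rank hypothesis.

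The main obstacle is generalizing beyond the symmetric case: for non-normal $\mm{M}$ the left and right invariant subspaces genuinely differ and one must invoke the eigendecomposition $\mm{M} = \bm{X}\bm{\Lambda}\bm{X}^{-1}$ (or a Schur form) rather than an orthogonal spectral one, so the constant $C$ picks up a factor of the eigenvector-matrix condition number $\kappa(\bm{X})$. Fortunately, every matrix $\mm{M}^A, \mm{M}^B, \mm{M}^C$ built by Algorithm~\ref{algo:init} --- both in the symmetric branch and in the asymmetric branch (e.g.\ $\sum_i \widehat{\mt{T}}(\identitymatrix,\identitymatrix,\mv{e}_i)\widehat{\mt{T}}(\identitymatrix,\identitymatrix,\mv{e}_i)^\top$) --- is symmetric PSD, so the clean symmetric argument is exactly what the remainder of the paper needs.
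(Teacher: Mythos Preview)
Your argument is correct and follows essentially the same route as the paper: accumulate the QR steps into $\mm{M}^k\bm{Q}^{(0)}$, diagonalize $\mm{M}$, and bound the tangent by the product $\|\bm{\Sigma}_2^k\|\cdot C\cdot\|\bm{\Sigma}_1^{-k}\|$. The paper carries this out by first passing to the eigenbasis (so that $\mm{M}$ is replaced by the diagonal $\bm{D}$ and the target subspace by $[\bm{e}_1,\dots,\bm{e}_p]$), whereas you project directly onto $\mmCols{U}{p}$ and $\mmCols{U}{p}^c$; these are equivalent computations. Two minor differences worth noting: the paper's proof is stated for normal $\mm{M}$ (not just symmetric PSD) and, somewhat redundantly, reaches the ratio $|\sigma_{p+1}/\sigma_p|$ through a spectral-radius/norm-equivalence $\epsilon$-argument before taking an infimum, while your symmetric-PSD bound gets the exact ratio in one line. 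Your closing observation that every matrix produced by Algorithm~\ref{algo:init} is symmetric PSD is exactly right, so the restriction costs nothing for the rest of the paper.
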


\begin{proof}
	Since $\bm{A}$ is orthogonal in the way $\bm{AA^*} = \bm{A^*A}$, $\bm{A}$ is a normal matrix. So its Schur decomposition and eigendecomposition coincides to $\bm{A} = \bm{PDP^*}$. Here $\bm{PP^*} = \bm{P^*P} = \bm{I}$. $\bm{D}$ is a diagonal matrix with all eigenvalues of $\bm{A}$ on diagonal and without loss of generality we can permutate them to be in a decreasing order, i.e. $\bm{D} = diag(\lambda_1, \cdots, \lambda_p, \lambda_{p+1}, \cdots, \lambda_d)$. We can furthermore denote $\bm{D} = 
	\begin{bmatrix}
	\bm{D}_1 & \bm{0} \\
	\bm{0} & \bm{D}_2 
	\end{bmatrix}
	$, where $\bm{D}_1$ contains eigenvalues up to $\lambda_p$ and  $\bm{D}_2$ contains eigenvalues $\lambda_{p+1}$ to $\lambda_d$.
	
	Inspired by ~\cite{arbenz2012lecture}, without making any restriction to the matrix to initialize the algorithm, we can assume the iterations take place in the space of $\{ \bm{PQ}  \}$ without loss of generality because $\bm{P}$ is invertible. Then we notice that for the iteration formula, it becomes
	\[ \bm{PQ^{(k)}R^{(k)} := APQ^{(k-1)} } \]
	\[ \bm{Q^{(k)}R^{(k)} := P^*APQ^{(k-1)} } \]
	\[ \bm{Q^{(k)}R^{(k)} := DQ^{(k-1)} }  \]
	
	So analytically, the convergence for an arbitrary matrix is the same to the convergence for the diagonal matrix formed from the eigenvalues of that matrix. And the left invariant eigenvector subspace for $\bm{D}$ is nothing but $\mmCols{E}{p} = [\bm{e_1, \cdots, e_p}] $. Imgine now $\bm{Q}^{(0)}$ is prepared to run the algorithm for $\bm{D}$, next we will show the subspace of $\bm{Q}^{(k)}$'s will converge to column space of $\mmCols{E}{p}$.
	
	First, partition $\bm{Q}^{(k)}$ to $
	\begin{bmatrix}
		\bm{Q}_{1}^{(k)} \\
		\bm{Q}_{2}^{(k)} 
	\end{bmatrix}$
	such that $\bm{Q}_{1}^{(k)} \in \mathbb{C}^{p \times p}$. $\bm{D}_1 \in \mathbb{C}^{p \times p}$ is invertible because of the eigenvalue gap. By the assumption that $\bm{V}_p^*\bm{Q}$ has full rank, here we have $\bm{Q}_{1}^{(0)} $ has full rank and thus invertible.  $\bm{Q}_{1}^{(k)} $ is therefore invertible.
	
	Notice that inductively,
	\[ \bm{Q^{(k)}R^{(k)} = DQ^{(k-1)} }  \]
	\[ \bm{Q^{(k)}R^{(k)}R^{(k-1)} = DQ^{(k-1)}R^{(k-1)} = D^2 Q^{(k-2)} }  \]
	\[ \bm{Q^{(k)}R^{(k)}R^{(k-1)} \cdots R^{(1)} = D^k Q^{(0)} = Q^{(k)}R}  \]
	for some upper-triangular matrix $\bm{R}$. Then
	\[ \bm{Q}^{(k)} \bm{R}= \bm{D}^k \bm{Q}^{(0)} = 
	\begin{bmatrix}
	\bm{D}_1^k \bm{Q}_1^{(0)} \\
	\bm{D}_2^k \bm{Q}_2^{(0)} 
	\end{bmatrix}  . \]
	\[ \bm{Q}^{(k)} = 
	\begin{bmatrix}
	\bm{D}_1^k \bm{Q}_1^{(0)} \bm{R} ^{-1}  \\
	\bm{D}_2^k \bm{Q}_2^{(0)}  \bm{R} ^{-1} 
	\end{bmatrix}  \]
	
	To study tangent, first look at
	\begin{align*}
	\sin(\mmCols{E}{p}, \bm{Q}^{(k)}) &= \| 
	\begin{bmatrix}
	\bm{0} & \bm{I}_{d-p} 
	\end{bmatrix}^\top \bm{Q}^{(k)}  \| _s = \| \bm{D}_2^k \bm{Q}_2^{(0)}  \bm{R} ^{-1}  \|_s \\
	& = \frac{\| \bm{D}_2^k \bm{Q}_2^{(0)}  \bm{R} ^{-1} (\bm{D}_1^k \bm{Q}_1^{(0)} \bm{R} ^{-1} )^{-1} \|_s}{ \sqrt{1+  \| \bm{D}_2^k \bm{Q}_2^{(0)}  \bm{R} ^{-1} (\bm{D}_1^k \bm{Q}_1^{(0)} \bm{R} ^{-1} )^{-1} \|_s^2  }} \\
	& \quad \text{Denote} ~  \bm{M}^{(k)} := \bm{D}_2^k \bm{Q}_2^{(0)} \big(\bm{Q}_1^{(0)} \big)^{-1}  \bm{D}_1^{-k} \\
	& = \frac{\| \bm{M}^{(k)}  \|_s }{\sqrt{1+\| \bm{M}^{(k)}  \|_s^2 }} .
	\end{align*}
	
	Correspondingly,
	\[ \cos(\mmCols{E}{p}, \bm{Q}^{(k)}) =\frac{1}{\sqrt{1+\| \bm{M}^{(k)}  \|_s^2 }}   \]
	
	Since spectral radius $\rho(\bm{D}_1^{-1}) = | \lambda_{p}|^{-1}, \rho(\bm{D}_2) = |\lambda_{p+1}|$, for any $\epsilon > 0 $, there exists a norm $\| \cdot \|_{(1)}$ such that $\| \bm{D}_1^{-1} \|_{(1) }\le  |\lambda_{p}|^{-1} + \epsilon$, and another norm $\| \cdot \|_{(2)}$ such that $\| \bm{D}_2 \|_{(2)} \le  |\lambda_{p+1}| + \epsilon$. By equivalence of norms, There exists constants $C_1, C_2 < \infty$ such that $\| \bm{M}\|_s \le C_1 \| \bm{M} \|_{(1)}$ and $\| \bm{M} \|_s \le C_2 \| \bm{M} \|_{(2)}$ for any matrix $\bm{M} $. 
	
	As a consequence, 
	\begin{align*}
	\tan(\mmCols{E}{p}, \bm{Q}^{(k)}) & = \| \bm{M}^{(k)}  \|_s  \le \| \bm{D}_1^k\|_s  \| \bm{M}^{(0)}  \|_s \| \bm{D}_2^{-k} \|_s \\
	& \le C_1C_2  \| \bm{D}_1^k\|_{(1)}  \| \bm{M}^{(0)}  \|_s \| \bm{D}_2^{-k} \|_{(2)} \\
	& \le C_1C_2 \tan(\mmCols{E}{p}, \bm{Q}^{(0)}) \| \bm{D}_1\|_{(1)}^k   \| \bm{D}_2^{-1} \|_{(2)}^k \\
	& \le C \Big( \big(| \lambda_{p+1} | +\epsilon \big) \big(\frac{1}{|\lambda_{p}|}+\epsilon\big)  \Big)^k
	\end{align*}
	for some constant $C$ after an initialization is chosen and fixed.
	
	Let $\epsilon_0$ be  $ (| \lambda_{p+1} | + \frac{1}{|\lambda_{p}|} +\epsilon) \epsilon$, then equivalently,
	\[ \tan(\mmCols{E}{p}, \bm{Q}^{(k)}) \le C \Big( \Big| \frac{\lambda_{p+1}}{\lambda_{p}}  \Big| + \epsilon_0 \Big)^k, \quad \forall \epsilon_0>0 .  \]
	
	This shows the convergence of subspace iteration algorithm on recovering the left eigenspace of a matrix in complex diagonal orthonormal matrix space with a specific eigenvalue gap. By the analytical equivalence dicussed before, we have identical convergence on recovering the left eigenspace of an arbitrary orthonormal matrix. In this way, equivalently, if $\bm{Q}^{(0)}$ is for this algorithm on $\bm{A}$,
	\[ \tan(\mmCols{U}{p}, \bm{Q}^{(k)}) \le C \Big( \Big| \frac{\lambda_{p+1}}{\lambda_{p}}  \Big| + \epsilon_0 \Big)^k, \quad \forall \epsilon_0>0 .  \]
	
	By taking infimum on $\epsilon_0$, it becomes
	\[ \tan(\mmCols{U}{p}, \bm{Q}^{(k)}) \le C \cdot  \Big| \frac{\lambda_{p+1}}{\lambda_{p}}  \Big| ^k \]
	
\end{proof}

\begin{remark}
	The condition that $\mmCols{V}{p}^\top\bm{Q}$ has full rank assumed in lemma \ref{thm:matrixsubspaceiteration} is satisfied almost surely (with probability 1).
\end{remark}
\begin{proof}
	 As a common procedure, to generate a random ($d \times r$)-sized orthonormal matrix, one could first generate a matrix of $r$ columns sampled i.i.d. from $d$-dimensional standard normal distribution, and then perform Gram-Schmidt algorithm on columns. Consider Gram-Schmidt algorithm as a mapping. Then under such mapping, the pre-image of a orthonormal matrix $[ \bm{q}_1, \bm{q}_2, \cdots, \bm{q}_r  ]$ is $[ s_{1}\bm{q}_1, s_{21}\bm{q}_1+s_{22}\bm{q}_2, \cdots, s_{r1}\bm{q}_1+\cdots+s_{rr}\bm{q}_r  ]$, for some constants $s_1, s_{21}, \cdots, s_{rr} \in \mathbb{R}$.  The columns of the pre-image (sampled from i.i.d. $\mathcal{N}(\bm{0}, \identitymatrix_d)$) belong to a subspace in $\mathbb{R}^d$. \\ The condition that $\mmCols{V}{p}^\top\bm{Q}$ has full rank is equivalent to the condition that there exists at least one column of $\bm{Q}$ that is in the complement of column space of $\mmCols{V}{p}$ in $\mathbb{R}^d$. So as long as the column space of $\mmCols{V}{p}$ is not the whole $\mathbb{R}^d$, in order to make $\mmCols{V}{p}^\top\bm{Q}$ not a full-rank matrix, at least one column of the random normal matrix has to take place in a proper subspace in $\mathbb{R}^d$. The multi-variate normal distribution is also a finite measure on $\mathbb{R}^d$. Therefore the measure of that proper subspace (i.e. the probability that we fail to have a full-rank $\mmCols{V}{p}^\top\bm{Q}$) is zero.
\end{proof}

\section{Lemma~\ref{thm:initSymmetric} and Proof}\label{app:oneandonlyonelemma}
\begin{lemma}\label{thm:initSymmetric}
	Mode-3 product of symmetric tensor $\mytensor{T}$ with vector $\mv{v}^C$ has the form
	$\mytensor{T}(\identitymatrix, \identitymatrix, \mv{v}^C)  = \mm{U}\mm{\Lambda}^2 \mm{U}^\top $ 	
where $\mm{\Lambda}  =\diag((\lambda_m)_{1\le m\le R}), \mm{U}=[\mv{u}_1,\ldots,\mv{u}_R].$
\end{lemma}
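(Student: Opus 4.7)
The plan is to chase the definitions: unpack $\mv{v}^C$ via the trace of a slice, then plug it into the mode-3 multilinear product and collapse the resulting sum using orthonormality of the $\mv{u}_m$'s. The entire argument is essentially one forward computation in two stages, with no analytic estimates or probabilistic arguments required.

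First, I would compute a single frontal slice. Using the CP form $\mytensor{T}=\sum_{m=1}^R \lambda_m \mv{u}_m \otimes \mv{u}_m \otimes \mv{u}_m$ and the definition of multilinear operation, one gets
\begin{equation*}
\widehat{\mytensor{T}}(\identitymatrix, \identitymatrix, \mv{e}_i) \;=\; \sum_{m=1}^R \lambda_m\, (\mv{u}_m^\top \mv{e}_i)\, \mv{u}_m \mv{u}_m^\top \;=\; \sum_{m=1}^R \lambda_m\, u_{im}\, \mv{u}_m \mv{u}_m^\top.
\end{equation*}
Taking traces and using $\operatorname{trace}(\mv{u}_m \mv{u}_m^\top)=\|\mv{u}_m\|^2 = 1$, the $i$-th coordinate of $\mv{v}^C$ becomes $v^C_i = \sum_{m=1}^R \lambda_m u_{im}$, i.e.\ $\mv{v}^C = \sum_{m=1}^R \lambda_m \mv{u}_m = \mm{U}\mv{\lambda}$.

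Second, I would feed $\mv{v}^C$ into the mode-3 product. Again by linearity of the multilinear operation,
\begin{equation*}
\mytensor{T}(\identitymatrix, \identitymatrix, \mv{v}^C) \;=\; \sum_{j=1}^R \lambda_j\, (\mv{u}_j^\top \mv{v}^C)\, \mv{u}_j \mv{u}_j^\top.
\end{equation*}
By orthonormality, $\mv{u}_j^\top \mv{v}^C = \sum_{m=1}^R \lambda_m (\mv{u}_j^\top \mv{u}_m) = \lambda_j$, so
\begin{equation*}
\mytensor{T}(\identitymatrix, \identitymatrix, \mv{v}^C) \;=\; \sum_{j=1}^R \lambda_j^2\, \mv{u}_j \mv{u}_j^\top \;=\; \mm{U}\mm{\Lambda}^2 \mm{U}^\top,
\end{equation*}
which is the claimed identity.

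There is essentially no hard step here; the only place to be careful is the separation between the two ``$\identitymatrix$'' factors and the vector argument in the multilinear operation, and the use of $\|\mv{u}_m\|=1$ when taking the trace of a rank-one summand. The payoff of the formula is immediate and conceptual: the eigenvalues of $\mytensor{T}(\identitymatrix, \identitymatrix, \mv{v}^C)$ are exactly $\lambda_j^2$, so their ordering matches that of the $\lambda_j$'s, which is precisely what Algorithm~\ref{algo:matsub} needs in order to recover the top-$r$ subspace of $\mm{U}$ in the correct order.
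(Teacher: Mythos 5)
Your proof is correct. The approach is essentially the same as the paper's — unpack the trace definition of $\mv{v}^C$ using the CP form, then apply the mode-3 product and use orthonormality to collapse the double sum — but your presentation differs in two small ways. First, you work at the vector/matrix level (recognizing $\mytensor{T}(\identitymatrix,\identitymatrix,\mv{e}_i)=\sum_m \lambda_m u_{im}\mv{u}_m\mv{u}_m^\top$ and then $\mv{v}^C=\mm{U}\mv{\lambda}$), whereas the paper does a fully coordinate-wise computation of $[\mytensor{T}(\identitymatrix,\identitymatrix,\mv{v}^C)]_{ij}$ with nested sums. Second, and more substantively, the paper proves a more general formula for the \emph{asymmetric} case, obtaining $\sum_m \lambda_m^2 (\mv{a}_m^\top\mv{b}_m)\,a_{im}b_{jm}$, and then observes that the symmetric statement follows by setting $\mm{A}=\mm{B}=\mm{U}$ so that $\mv{a}_m^\top\mv{b}_m=1$. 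The asymmetric formula explains why the paper uses a different (quadratic, slice-based) construction of $\mm{M}^A$ for asymmetric tensors: the extra factors $\mv{a}_m^\top\mv{b}_m$ would otherwise corrupt the eigenvalue ordering. Your symmetric-only argument is cleaner and self-contained for the stated lemma, while the paper's route buys the asymmetric insight as a byproduct.
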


\begin{proof}We will prove a more general case for asymmetric tensor.
	$\mathcal{T}(\bm{I}, \bm{I}, \bm{v}^C)$ is a matrix. The $(i,j)$th entry of the matrix would be
	\begin{align*}
	[\mathcal{T}(\bm{I}, \bm{I}, \bm{v}^C)]_{ij} &= \sum_{k=1}^{d} \Big( \sum_{l=1}^{d} \sum_{m_1=1}^{R} \lambda_{m_1} a _{lm_1}b_{lm_1} c_{km_1}  \Big) \cdot \Big(\sum_{m_2=1}^{R} \lambda_{m_2} a_{im_2} b_{jm_2} c_{km_2} \Big)\\
	&=\sum_{m_1, m_2 = 1}^{R} \sum_{l=1}^{d} \lambda_{m_1}\lambda_{m_2}  a_{lm_1} a_{im_2}b_{lm_1}b_{jm_2} \sum_{k=1}^{d} c_{km_1}c_{km_2} \\
	& \quad \text{Because} \sum_{k=1}^{d} c_{km_1}c_{km_2} = 
	\begin{cases}
	=0& \text{if} ~ m_1 \neq m_2 \\
	=1& \text{if} ~ m_1 = m_2
	\end{cases}. \\
	& = \sum_{m=1}^{R} \Big(\lambda_m^2 \sum_{l=1}^{d} a_{lm}b_{lm} \Big) a_{im} b_{jm} \\
	& = \sum_{m=1}^{R} (\lambda_m^2 \bm{a}_m^\top \bm{b}_m ) a_{im} b_{jm} .
	\end{align*}
	The symmetric tensor proof is trivial after achieving the above argument.
\end{proof}

\section{Robustness of Our Algorithm under Noise}\label{sec:noise}
Let $\mytensor{T}$ be the true tensor, $\noise{\mytensor{T}} = \mytensor{T} + \Phi$ be the observed noisy tensor, where $\Phi$ is the noise. Let $\mymatrix{M}$ and $\noise{\mymatrix{M}}$ be the matrix prepared from $\mytensor{T}$ and $\noise{\mytensor{T}}$ by Procedure 2 for matrix subspace iteration.

\subsection{Perturbation Bounds}
\begin{lemma}[Perturbation in slice-based initialization step]
  \label{lemma:init_perturb}
  \begin{equation}
      \|\noise{\mymatrix{M}} - \mymatrix{M}\|_{\op} \leq 2 \|\lambda\| \|\Phi\|_{\op} + d\|\Phi\|_{\op}^2 
  \end{equation}
\end{lemma}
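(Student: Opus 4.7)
The plan is to expand $\noise{\mymatrix{M}}-\mymatrix{M}$ by substituting $\widehat{\mytensor{T}}=\mytensor{T}+\mm{\Phi}$ into the formula $\noise{\mymatrix{M}}=\sum_{i=1}^d\widehat{\mt{T}}(\identitymatrix,\identitymatrix,\mv{e}_i)\widehat{\mt{T}}(\identitymatrix,\identitymatrix,\mv{e}_i)^{\top}$, and then bound the resulting three error terms separately. Writing $T_i:=\mt{T}(\identitymatrix,\identitymatrix,\mv{e}_i)$ and $\Phi_i:=\mm{\Phi}(\identitymatrix,\identitymatrix,\mv{e}_i)$, I obtain
\[
\noise{\mymatrix{M}}-\mymatrix{M}=\underbrace{\sum_{i=1}^d T_i\Phi_i^{\top}}_{(\mathrm{I})}+\underbrace{\sum_{i=1}^d \Phi_i T_i^{\top}}_{(\mathrm{II})}+\underbrace{\sum_{i=1}^d \Phi_i\Phi_i^{\top}}_{(\mathrm{III})},
\]
and apply the triangle inequality for the operator norm.

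The easy piece is term $(\mathrm{III})$: because $\|\Phi_i\|_{\op}=\max_{\|x\|=\|y\|=1}\mm{\Phi}(x,y,\mv{e}_i)\le\|\mm{\Phi}\|_{\op}$, each summand satisfies $\|\Phi_i\Phi_i^{\top}\|_{\op}\le\|\mm{\Phi}\|_{\op}^2$, and the triangle inequality over $d$ slices yields $\|(\mathrm{III})\|_{\op}\le d\|\mm{\Phi}\|_{\op}^2$.

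The main obstacle — and the place a naive argument loses a $\sqrt{d}$ factor — is bounding the cross term $(\mathrm{I})$ (and symmetrically $(\mathrm{II})$) by $\|\mv{\lambda}\|\,\|\mm{\Phi}\|_{\op}$. My plan is to use the CP structure to collapse the sum over $i$. Since $T_i=\sum_{j=1}^R\lambda_j c_{ij}\mv{a}_j\mv{b}_j^{\top}$, I get
\[
\sum_{i=1}^d T_i\Phi_i^{\top}=\sum_{j=1}^R\lambda_j\mv{a}_j\Bigl(\sum_{i=1}^d c_{ij}\Phi_i\mv{b}_j\Bigr)^{\top}=\sum_{j=1}^R\lambda_j\mv{a}_j\mv{w}_j^{\top},
\]
where $\mv{w}_j:=\mm{\Phi}(\identitymatrix,\mv{b}_j,\mv{c}_j)$ using $\sum_i c_{ij}\mv{e}_i=\mv{c}_j$. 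Each $\mv{w}_j$ satisfies $\|\mv{w}_j\|\le\|\mm{\Phi}\|_{\op}$ since $\|\mv{b}_j\|=\|\mv{c}_j\|=1$. Then for unit $\mv{x},\mv{y}$, Cauchy–Schwarz gives
\[
\Bigl|\mv{x}^{\top}\!\Bigl(\sum_j\lambda_j\mv{a}_j\mv{w}_j^{\top}\Bigr)\mv{y}\Bigr|\le\sqrt{\sum_j(\mv{a}_j^{\top}\mv{x})^2}\sqrt{\sum_j\lambda_j^2(\mv{w}_j^{\top}\mv{y})^2}\le 1\cdot\sqrt{\|\mm{\Phi}\|_{\op}^2\sum_j\lambda_j^2}=\|\mv{\lambda}\|\,\|\mm{\Phi}\|_{\op},
\]
using orthonormality of $\mymatrix{A}$ (Parseval) for the first factor and $|\mv{w}_j^{\top}\mv{y}|\le\|\mm{\Phi}\|_{\op}$ for the second. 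Term $(\mathrm{II})$ is handled identically by the analogous expansion $\Phi_iT_i^{\top}=\sum_j\lambda_j(\Phi_i\mv{a}_j)\mv{b}_j^{\top}c_{ij}$ leading to $\sum_j\lambda_j\mm{\Phi}(\mv{a}_j,\identitymatrix,\mv{c}_j)\mv{b}_j^{\top}$, and the same Cauchy–Schwarz argument.

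Combining the three bounds through the triangle inequality yields $\|\noise{\mymatrix{M}}-\mymatrix{M}\|_{\op}\le 2\|\mv{\lambda}\|\,\|\mm{\Phi}\|_{\op}+d\|\mm{\Phi}\|_{\op}^2$, which is the claim. The key insight to highlight is that collapsing the $d$-fold slice sum via the CP identity $\sum_i c_{ij}\mv{e}_i=\mv{c}_j$ replaces the loose matricization bound $\|\mm{\Phi}_{(1)}\|_{\op}\le\sqrt{d}\|\mm{\Phi}\|_{\op}$ with a direct $\|\mm{\Phi}\|_{\op}$ bound on each $\mv{w}_j$, and the residual $\sqrt{R}$ (or $\sqrt{d}$) factor is absorbed into $\|\mv{\lambda}\|=\sqrt{\sum_j\lambda_j^2}$ instead of multiplying $\|\mm{\Phi}\|_{\op}$.
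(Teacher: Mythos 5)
Your proof is correct and follows essentially the same route as the paper: the paper also expands $\noise{\mymatrix{M}}-\mymatrix{M}$ into a cross term (appearing twice, giving the factor $2$) that collapses to $\sum_{r}\lambda_r\,\mv{a}_r\otimes\mm{\Phi}(\identitymatrix,\mv{b}_r,\mv{c}_r)$ plus a quadratic term, bounds the former by $\|\mv{\lambda}\|\,\|\mm{\Phi}\|_{\op}$ via a Cauchy--Schwarz step exploiting orthonormality of $\mymatrix{A}$, and bounds the latter slice-wise by $d\|\mm{\Phi}\|_{\op}^2$. (One small typo in your argument: in expanding term $(\mathrm{II})$ you wrote $\Phi_iT_i^{\top}=\sum_j\lambda_j(\Phi_i\mv{a}_j)\mv{b}_j^{\top}c_{ij}$ with $\mv{a}_j$ and $\mv{b}_j$ swapped; the correct form is $\sum_j\lambda_j c_{ij}(\Phi_i\mv{b}_j)\mv{a}_j^{\top}$, but since $(\mathrm{II})=(\mathrm{I})^{\top}$ the operator-norm bound is unaffected.)
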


\begin{proof}
  \begin{align}
    \|\noise{\mymatrix{M}} - \mymatrix{M}\|_{\op} &\leq 2 \|\sum_{u=1}^d \mytensor{T}(\identitymatrix, \identitymatrix, \myvector{e}_u) \Phi(\identitymatrix, \identitymatrix, \mv{e}_u)^\top\|_{\op} + \|\sum_{u=1}^d\Phi(\identitymatrix, \identitymatrix, \mv{e}_u)\Phi(\identitymatrix, \identitymatrix, \mv{e}_u)^\top\|_{\op}\\
  \end{align}
    Let $\mm{E}_1:=\sum_{u=1}^d \mytensor{T}(\identitymatrix,\identitymatrix, e_u)\Phi(\identitymatrix, \identitymatrix, \mv{e}_u)^\top$ and $\mm{E}_2:=\sum_{u=1}^d\Phi(\identitymatrix, \identitymatrix, \mv{e}_u)\Phi(\identitymatrix, \identitymatrix, \mv{e}_u)^\top$ respectively. We have:
  \begin{align}
      \mm{E}_1 &= \sum_{r=1}^{R} \lambda_r \mv{a}_r \otimes \Phi(\identitymatrix, \mv{b}_r, \mv{c}_r)
  \end{align}
  Then $\forall \mv{x}, \mv{y} \in \R^d$,  
  \begin{align}
      \mv{x}^\top \mm{E}_1 \mv{y} &= \sum_{r=1}^R \lambda_r\mv{a}_r^\top\mv{x} \Phi(\mv{y}, \mv{b}_r, \mv{c}_r)\\
      & \leq (\sum_{r=1}^R \lambda_r \mv{a}_r^\top \mv{x})\|\Phi\|_{\op} \|\mv{y}\|\|\mv{b}_r\|\|\mv{c}_r\| \\
  \end{align}
    Since $\{\mv{a}_r\}_{r=1}^R$ are orthogonal, $\forall \mv{x}\in\R^d, \exists \mv{x}^\prime\in\R^R$ such that $x^\prime_r = \mv{a}_r^\top\mv{x}$ and $\|\mv{x}^\prime\|\leq \|\mv{x}\|$. Thus
  \begin{align}
      \mv{x}^\top \mm{E}_1 \mv{y} & \leq \|\Phi\|_{\op} \sum_{r=1}^R \lambda_r x^\prime_r \|\mv{y}\|
       \leq \|\Phi\|_{\op} \|\mv{\lambda}\| \|\mv{x}\| \|\mv{y}\|
  \end{align}
    For $\mm{E}_2$ (which is a symmetric matrix),
  \begin{align}
      \mv{x}^\top \mm{E}_2 \mv{x} &= \sum_{u=1}^d \|\Phi(\mv{x}, \identitymatrix, \mv{e}_u)\|^2 \leq d\|\Phi\|_{\op}^2\|\mv{x}\|^2
  \end{align}
\end{proof}
That is, $\|\mm{E}_1\|\leq \|\Phi\|_{\op} \|\mv{\lambda}\|$, and $\|\mm{E}_2\|\leq d\|\Phi\|_{\op}^2$.
  
\begin{lemma}[Perturbation in initialization step for symmetric case]
  \label{lemma:init_perturb_sym}
    For symmetric orthogonal tensor, for the matrix generated with trace-based initialization procedure for matrix subspace iteration of the first component, there exists $\{\lambda^\prime_r\}_{r=1}^R$ satisfies the following:
  \begin{equation}
      \noise{\mm{M}} = \sum_{r=1}^R \lambda^\prime_r \mv{a}_r\otimes \mv{a}_r + \Phi_M 
  \end{equation}
    and 
  \begin{equation}
    \|\Phi_M\|_{\op} \leq \|\mv{\lambda}\|\|\Phi\|_{\op} + d^{3/2}\|\Phi\|^2_{\op}.
  \end{equation}
\end{lemma}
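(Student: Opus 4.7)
The plan is to expand the perturbed matrix into a signal piece plus a residual, and then absorb any additional contribution that still lies in the span of $\{\mv{u}_r \mv{u}_r^\top\}$ into the effective eigenvalues $\lambda^\prime_r$. Write the noisy trace vector as $\noise{\mv{v}} = \mv{v} + \mv{v}_\Phi$, with $v_k := \text{trace}(\mytensor{T}(\identitymatrix,\identitymatrix,\mv{e}_k))$ and $(\mv{v}_\Phi)_k := \text{trace}(\Phi(\identitymatrix,\identitymatrix,\mv{e}_k))$. Since $\noise{\mm{M}} = \widehat{\mytensor{T}}(\identitymatrix,\identitymatrix,\noise{\mv{v}}) = (\mytensor{T}+\Phi)(\identitymatrix,\identitymatrix,\mv{v}+\mv{v}_\Phi)$, trilinearity gives the four-term decomposition
\begin{equation}
\noise{\mm{M}} = \mytensor{T}(\identitymatrix,\identitymatrix,\mv{v}) + \mytensor{T}(\identitymatrix,\identitymatrix,\mv{v}_\Phi) + \Phi(\identitymatrix,\identitymatrix,\mv{v}) + \Phi(\identitymatrix,\identitymatrix,\mv{v}_\Phi).
\end{equation}
Lemma~\ref{thm:initSymmetric} identifies the first term as $\sum_r \lambda_r^2 \mv{u}_r \mv{u}_r^\top$. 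A direct calculation using the symmetric CP form shows $\mytensor{T}(\identitymatrix,\identitymatrix,\mv{v}_\Phi) = \sum_r \lambda_r (\mv{u}_r^\top \mv{v}_\Phi)\, \mv{u}_r \mv{u}_r^\top$, which also lies on the signal subspace, so I set $\lambda^\prime_r := \lambda_r^2 + \lambda_r(\mv{u}_r^\top \mv{v}_\Phi)$ and identify $\Phi_M := \Phi(\identitymatrix,\identitymatrix,\mv{v}) + \Phi(\identitymatrix,\identitymatrix,\mv{v}_\Phi)$.

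Next I bound each residual with $\|\Phi(\identitymatrix,\identitymatrix,\mv{w})\|_{\op} \leq \|\Phi\|_{\op}\|\mv{w}\|$. The key algebraic observation for the first residual is that in the symmetric case
\begin{equation}
v_k = \sum_{l=1}^d \sum_{r=1}^R \lambda_r u_{lr}^2 u_{kr} = \sum_{r=1}^R \lambda_r u_{kr},
\end{equation}
so $\mv{v} = \mm{U}\mv{\lambda}$ and orthonormality of $\mm{U}$ yields $\|\mv{v}\| = \|\mv{\lambda}\|$. This delivers $\|\Phi(\identitymatrix,\identitymatrix,\mv{v})\|_{\op} \leq \|\mv{\lambda}\|\|\Phi\|_{\op}$, the first term of the claimed bound.

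For the noise-times-noise residual I have to control $\|\mv{v}_\Phi\|$. A pointwise trace bound $|(\mv{v}_\Phi)_k|\leq d\|\Phi\|_{\op}$ would be too loose after summing over $k$; instead I use Cauchy--Schwarz: $|(\mv{v}_\Phi)_k| = |\sum_l \Phi_{llk}| \leq \sqrt{d}\,\|\Phi(\identitymatrix,\identitymatrix,\mv{e}_k)\|_{\mathsf{F}}$. Since each $d\times d$ slice has rank at most $d$, $\|\Phi(\identitymatrix,\identitymatrix,\mv{e}_k)\|_{\mathsf{F}} \leq \sqrt{d}\,\|\Phi(\identitymatrix,\identitymatrix,\mv{e}_k)\|_{\op} \leq \sqrt{d}\,\|\Phi\|_{\op}$, so $|(\mv{v}_\Phi)_k|\leq d\|\Phi\|_{\op}$ and $\|\mv{v}_\Phi\| \leq d^{3/2}\|\Phi\|_{\op}$. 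Hence $\|\Phi(\identitymatrix,\identitymatrix,\mv{v}_\Phi)\|_{\op} \leq d^{3/2}\|\Phi\|_{\op}^2$, and the triangle inequality assembles the stated bound.

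The main obstacle is the mixed term $\Phi(\identitymatrix,\identitymatrix,\mv{v}_\Phi)$: getting the advertised $d^{3/2}$ (rather than $d^2$) factor requires routing the trace-of-slice estimate through Frobenius norm via Cauchy--Schwarz and then using a rank-based Frobenius-to-operator comparison, rather than a crude entrywise bound. Everything else is bookkeeping: verifying that the deterministic part of $\mytensor{T}(\identitymatrix,\identitymatrix,\mv{v}_\Phi)$ really is a rank-$R$ update along the signal directions (so it can be absorbed into $\lambda^\prime_r$) and that symmetry of $\mytensor{T}$ is what collapses $\mv{v}$ to $\mm{U}\mv{\lambda}$.
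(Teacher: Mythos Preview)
Your proposal is correct and follows essentially the same route as the paper: the four-term expansion of $\noise{\mm{M}}$ via bilinearity, absorption of $\mytensor{T}(\identitymatrix,\identitymatrix,\mv{v})$ and $\mytensor{T}(\identitymatrix,\identitymatrix,\mv{v}_\Phi)$ into the signal directions to define $\lambda^\prime_r$, the identity $\mv{v}=\mm{U}\mv{\lambda}$ giving $\|\mv{v}\|=\|\mv{\lambda}\|$, and the bound $\|\mv{v}_\Phi\|\le d^{3/2}\|\Phi\|_{\op}$. One small expository wrinkle: you dismiss the pointwise bound $|(\mv{v}_\Phi)_k|\le d\|\Phi\|_{\op}$ as ``too loose,'' then arrive at exactly that bound after the Cauchy--Schwarz/Frobenius detour; the paper simply uses $|\mathrm{Tr}(M)|\le d\|M\|_{\op}$ directly and squares to get $\|\mv{v}_\Phi\|^2\le d^3\|\Phi\|_{\op}^2$, which is the same estimate without the intermediate step.
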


\begin{proof}
    By the linearity of trace and tensor operators, we have the following results:
    \begin{align}
        \noise{\mm{M}} = \mt{T}(\identitymatrix, \identitymatrix, \mv{v}) + \mt{T}(\identitymatrix, \identitymatrix, \mv{v}_\phi) + \Phi(\identitymatrix, \identitymatrix, \mv{v}) + \Phi(\identitymatrix, \identitymatrix, \mv{v}_\phi)\label{line:noisy_sym_init} 
    \end{align}
    where
    \begin{align}
        (\mv{v})_k &= \Tr(\mt{T}(\identitymatrix, \identitymatrix, \mv{e}_k)) = \sum_{i=1}^d \sum_{r=1}^R \lambda_r (a_{ir})^2 a_{kr} = \sum_{r=1}^R\lambda_r a_{kr}\\
        (\mv{v}_\phi)_k &= \Tr(\Phi(\identitymatrix, \identitymatrix, \mv{e}_k))
    \end{align}

    First we notice that $\|\mv{v}_\phi\|$ is upperbounded:
    \begin{align}
    \|\mv{v}_\phi\|^2 = \sum_{k=1}^d \Tr^2(\Phi(\identitymatrix, \identitymatrix, \mv{e}_k)) 
        \leq \sum_{k=1}^d (d\|\Phi(\identitymatrix, \identitymatrix, \mv{e}_k)\|)^2_{\op}
        \leq d^3\|\Phi\|_{\op}^2
    \end{align}
    Similarly
    \begin{align}
      \|\mv{v}\|^2
    = \sum_{k=1}^d (\sum_{r=1}^R\lambda_r a_{kr})^2
    = \sum_{k=1}^d\sum_{\rho,r}^R\lambda_{\rho}\lambda_r a_{kr}a_{k\rho} 
    = \sum_{r,\rho}\lambda_r\lambda_{\rho}\mv{a}_r^\top\mv{a}_{\rho}
    = \sum_{r=1}^R\lambda_r^2
    \end{align}
    Thus the last two operator norm of terms of Eqn (\ref{line:noisy_sym_init}) can be bounded by 
    $$\|\Phi\|_{\op} (\|\mv{v}\|+\|\mv{v}_\phi\|) \leq \|\lambda\|\|\Phi\|_{\op} + d^{3/2}\|\Phi\|_{\op}^2$$
    
    The second term of Eqn (\ref{line:noisy_sym_init}) has the following form
    \begin{align}
        \mt{T}(\identitymatrix, \identitymatrix, \mv{v}_\phi) = \sum_{r=1}^R \lambda_r \mv{c}^\top_r \mv{v}_\phi \mv{a}_r\otimes\mv{a}_r 
    \end{align}
    Thus $\exists \mv{x}\in\R^R:\|\mv{x}\|\leq 1$, such that
    $\lambda^\prime_r = \lambda_r^2 + \lambda_r \mv{x}_r \|\mv{v}_\phi\|$
\end{proof}

\begin{lemma}[Perturbation in convergence step]
  \label{lemma:converge_perturb}
  \begin{align}
      \|\mmCols{A}{r}^\top \Phi_{(1)} (\mm{Q}^{(k)}_{\mmCols{C}{r}}\odot \mymatrix{Q}^{(k)}_{\mmCols{B}{r}})\|_{\op} \leq \sqrt{r} \|\Phi\|_{\op}\\
      \|(\mmCols{A}{r}^c)^{\top} \Phi_{(1)} (\mymatrix{Q}^{(k)}_{\mmCols{C}{r}}\odot \mymatrix{Q}^{(k)}_{\mmCols{B}{r}})\|_{\op} 
    \leq \sqrt{r} \|\Phi\|_{\op}
  \end{align}
\end{lemma}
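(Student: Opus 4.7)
The plan is to unfold the matrix-Khatri-Rao products into scalar tensor evaluations, then use the orthogonality of the columns of $\mm{Q}_B^{(k)}, \mm{Q}_C^{(k)}$ (and of $\mmCols{A}{r}$ or $\mmCols{A}{r}^c$) together with the identity $\|M\|_F \le \sqrt{r}\|M\|_{\op}$ for matrices of rank at most $r$ to bring down the pre-factor from the trivial $r$ to the claimed $\sqrt{r}$.

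Concretely, let $\mm{N} := \mmCols{A}{r}^\top \Phi_{(1)}\bigl(\mm{Q}_C^{(k)}\odot \mm{Q}_B^{(k)}\bigr)\in\R^{r\times r}$. The first step is the identification
\begin{equation*}
\mv{a}^\top \Phi_{(1)}(\mv{c}\otimes \mv{b}) \;=\; \mm{\Phi}(\mv{a},\mv{b},\mv{c}),
\end{equation*}
so that the $(i,j)$ entry of $\mm{N}$ is $\mm{\Phi}(\mv{a}_i,\mv{q}_{B,j},\mv{q}_{C,j})$, where $\mv{q}_{B,j},\mv{q}_{C,j}$ denote the $j$-th columns of $\mm{Q}_B^{(k)},\mm{Q}_C^{(k)}$. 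For arbitrary unit vectors $\mv{x},\mv{y}\in\R^r$, writing $\mv{u}:=\mmCols{A}{r}\mv{x}$ (a unit vector in $\R^d$ since $\mmCols{A}{r}$ is orthonormal), one has
\begin{equation*}
\mv{x}^\top \mm{N}\mv{y} \;=\; \sum_{j=1}^r y_j\, \mv{q}_{B,j}^\top \mm{N}_{\mv{u}} \mv{q}_{C,j}, \qquad \mm{N}_{\mv{u}}:=\mm{\Phi}(\mv{u},\identitymatrix,\identitymatrix)\in\R^{d\times d}.
\end{equation*}

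By Cauchy--Schwarz, $\mv{x}^\top \mm{N}\mv{y} \le \|\mv{y}\|\sqrt{\sum_{j=1}^r (\mv{q}_{B,j}^\top \mm{N}_{\mv{u}}\mv{q}_{C,j})^2}$. The summation under the square root is exactly $\|\mathrm{diag}(\mm{Q}_B^{(k)\top}\mm{N}_{\mv{u}}\mm{Q}_C^{(k)})\|^2$ and is therefore upper bounded by $\|\mm{Q}_B^{(k)\top}\mm{N}_{\mv{u}}\mm{Q}_C^{(k)}\|_F^2$. Because this last matrix is $r\times r$, its Frobenius norm is at most $\sqrt{r}$ times its operator norm, which in turn is $\le \|\mm{N}_{\mv{u}}\|_{\op}\le \|\mm{\Phi}\|_{\op}$ (the last step uses $\|\mv{u}\|=1$ together with the definition of the tensor operator norm). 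Chaining these yields $\mv{x}^\top\mm{N}\mv{y}\le\sqrt{r}\|\mm{\Phi}\|_{\op}$ uniformly over unit $\mv{x},\mv{y}$, which is the claimed bound.

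For the second inequality the argument is identical: replace $\mmCols{A}{r}$ by $\mmCols{A}{r}^c$ and note that for any unit $\mv{x}\in\R^{d-r}$ the vector $\mmCols{A}{r}^c\mv{x}$ is still a unit vector in $\R^d$, so the operator-norm bound $\|\mm{N}_{\mv{u}}\|_{\op}\le \|\mm{\Phi}\|_{\op}$ applies verbatim. No new ideas are required. The only place where care is needed is the pre-factor $\sqrt{r}$ rather than $r$; this is the main technical point, and it is handled precisely by the rank-$r$ observation used above. A Frobenius bound applied naively to $\mm{N}$ itself would only yield $r\|\mm{\Phi}\|_{\op}$, which is why the intermediate step through $\mm{N}_{\mv{u}}$ is essential.
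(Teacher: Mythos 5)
Your proof is correct. The structural setup is identical to the paper's: you write the $(i,j)$ entry of $\mm{N}$ as the tensor evaluation $\mm{\Phi}(\mv{a}_i, \mv{q}_{B,j}, \mv{q}_{C,j})$ and observe that, because the Khatri--Rao product reuses the same column index $j$ across modes $B$ and $C$, the bilinear form $\mv{x}^\top\mm{N}\mv{y}$ collapses to a \emph{single} sum $\sum_{j=1}^r y_j\,\mm{\Phi}(\mmCols{A}{r}\mv{x},\mv{q}_{B,j},\mv{q}_{C,j})$; this is the observation that lets $\sqrt{r}$ appear in place of $r$. Where you diverge is only in how that single sum is estimated. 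The paper bounds each summand uniformly by $\|\mm{\Phi}\|_{\op}$ (a $\ell^\infty$ bound on the vector $z_j := \mm{\Phi}(\mmCols{A}{r}\mv{x},\mv{q}_{B,j},\mv{q}_{C,j})$) and then invokes $\sum_j |y_j|\le\|\mv{y}\|_1\le\sqrt{r}\,\|\mv{y}\|_2$, so the $\sqrt{r}$ comes out of the $\ell^1$--$\ell^2$ norm comparison on $\mv{y}$. You instead apply Cauchy--Schwarz ($\ell^2$--$\ell^2$) to the pairing of $\mv{y}$ with $\mv{z}$, and extract $\sqrt{r}$ from the $\ell^2$ bound on $\mv{z}$ via $\|\operatorname{diag}(\cdot)\|\le\|\cdot\|_F\le\sqrt{r}\,\|\cdot\|_{\op}$ applied to the $r\times r$ matrix $\mm{Q}_B^{(k)\top}\mm{N}_{\mv{u}}\mm{Q}_C^{(k)}$. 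Both routes are sound and yield the same constant; yours routes the $\sqrt{r}$ through the rank of an intermediate matrix, while the paper's routes it through the dimension of $\mv{y}$, and the paper's version is a line shorter because it skips the slice matrix $\mm{N}_{\mv{u}}$ entirely.
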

\begin{proof}
  \begin{align}
      (\mmCols{A}{r}^\top \Phi_{(1)} (\mymatrix{Q}_{\mmCols{C}{r}}\odot \mymatrix{Q}_{\mmCols{B}{r}}))_{ij} = \sum_{(k,z,u)\in[d]^{\times 3}} \Phi_{kzu} (\mmCols{A}{r})_{ki} (\mm{Q}_{\mmCols{B}{r}})_{zj} (\mm{Q}_{\mmCols{C}{r}})_{uj}
  \end{align}
  $\forall \myvector{x}, \myvector{y}\in \R^r$ such that $\|\myvector{x}\|, \|\myvector{y}\|\leq 1$:
  \begin{align}
      \myvector{x}^\top (\mmCols{A}{r}^\top \Phi_{(1)} (\mm{Q}_{\mmCols{C}{r}}\odot\mm{Q}_{\mmCols{B}{r}})) \myvector{y}  
      &= \sum_{i,j\in [r]^{\times 2}}x_iy_j\sum_{k,z,u\in[d]^{\times 3}} \Phi_{kzu} (\mmCols{A}{r})_{ki} (\mm{Q}_B)_{zj} (\mm{Q}_C)_{uj}\\
      &= \sum_{j\in[r]} \Phi(\mmCols{A}{r}\myvector{x},(\mm{Q}_{\mmCols{B}{r}})_j,(\mm{Q}_{\mmCols{C}{r}})_j) y_j\\
  \end{align}
  By the definition of tensor operator norm, we have that $\forall 1\leq j\leq r$:
  \begin{align}
    \Phi(\mmCols{A}{r}\myvector{x},(\mm{Q}_{\mmCols{B}{r}})_j,(\mm{Q}_{\mmCols{C}{r}})_j) 
    &\leq \|\Phi\|_{\op}\|\mmCols{A}{r}\mv{x}\|\|(\mm{Q}_{\mmCols{B}{r}})_j\|\|(\mm{Q}_{\mmCols{C}{r}})_j\|\\
    &\leq \|\Phi\|_{\op} \|\mmCols{A}{r}\|_{\op}\|\mv{x}\|\\
    &= \|\Phi\|_{\op}\|\mv{x}\|
  \end{align}
  Thus
  \begin{align}
      \myvector{x}^\top (\mmCols{A}{r}^\top \Phi_{(1)} (\mm{Q}_{\mmCols{C}{r}}\odot\mm{Q}_{\mmCols{B}{r}})) \myvector{y}  
      &\leq \|\Phi\|_{\op}\|\mv{x}\| \sum_{j=1} y_j\\
      &\leq \|\myvector{y}\|_{1} \|\Phi\|_{\op} \|\myvector{x}\|\\
      &\leq \sqrt{r}\|\Phi\|_{\op}
  \end{align}
  The proof for 
      $\mmCols{A}{r}^\top \Phi_{(1)} (\mm{Q}_{\mmCols{C}{r}}\odot\mm{Q}_{\mmCols{B}{r}})$ is similar.  

\end{proof}

\subsection{Proof of Theorem~\ref{thm:robust}}
We prove the theorem by examine the success and convergence rate of the initialization stage (lemma \ref{lemma:init_step}) and the convergence stage (lemma \ref{lemma:convergence_step}). 

We first provide a few facts that will be used in the proofs.
\begin{myfact}
  \label{fact:convex} The convex combination of scalars is smaller than the largest scalar. That is, 
  $\forall \alpha\in [0,1]$:
  \begin{align}
    \alpha x_1 + (1 - \alpha) x_2 \leq \max\{x_1, x_2\}
  \end{align}
\end{myfact}
\begin{myfact}
  \label{fact:exp}
  For all $\theta \in (0,1)$ and $A,B\geq 0$:
  \begin{align}
    \frac{A}{A+\theta B} \leq \frac{1}{1+\theta \frac{B}{A}} \leq \frac{1}{(1+\frac{B}{A})^\theta} = (\frac{A}{A+B})^\theta 
  \end{align}
\end{myfact}
\begin{lemma}[Initialization step for noisy tensors]
  \label{lemma:init_step}
  If the operator norm of the noise tensor is bounded in the following way with a small enough constant $\delta_0$: 
  \begin{align}
        \|\Phi\|_{\op} \leq \min\{\delta_0\frac{\lambda_r^2-\lambda_{r+1}^2}{8\|\mv{\lambda}\|}, \sqrt{\delta_0}\frac{\lambda_r -\lambda_{r+1}}{2\sqrt{d}}\}
  \end{align}
  Then with probability $1-\mathcal{O}(\delta_0)$ matrix subspace iteration procedure yields a $r$-sufficient initialization in $\mathcal{O}(1)$ time. To be more specific, the tangent value of the subspace angle converges with a rate $|\frac{\lambda_{r+1}}{\lambda_r}|$.
\end{lemma}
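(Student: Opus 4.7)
The plan is to combine the matrix perturbation bound of Lemma~\ref{lemma:init_perturb} with the noiseless convergence of matrix subspace iteration (Lemma~\ref{thm:matrixsubspaceiteration}), and use a Davis--Kahan sin-$\Theta$ argument to transfer convergence on $\noise{\mm{M}}$ back to the column space of $\mmCols{A}{r}$, which is exactly the top-$r$ eigenspace of the noiseless matrix $\mm{M} = \mm{A}\mm{\Lambda}^2\mm{A}^\top$.

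First I would translate the bound on $\|\Phi\|_{\op}$ into a matrix perturbation bound. By Lemma~\ref{lemma:init_perturb}, $\|\noise{\mm{M}} - \mm{M}\|_{\op} \le 2\|\mv{\lambda}\|\|\Phi\|_{\op} + d\|\Phi\|_{\op}^2$. Plugging in $\|\Phi\|_{\op} \le \delta_0(\lambda_r^2-\lambda_{r+1}^2)/(8\|\mv{\lambda}\|)$ bounds the linear term by $\delta_0(\lambda_r^2-\lambda_{r+1}^2)/4$, and plugging in $\|\Phi\|_{\op} \le \sqrt{\delta_0}(\lambda_r-\lambda_{r+1})/(2\sqrt{d})$ bounds the quadratic term by $\delta_0(\lambda_r-\lambda_{r+1})^2/4 \le \delta_0(\lambda_r^2-\lambda_{r+1}^2)/4$ (using $\lambda_{r+1}\ge 0$ so that $\lambda_r-\lambda_{r+1}\le \lambda_r+\lambda_{r+1}$). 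Hence $\|\noise{\mm{M}}-\mm{M}\|_{\op} \le \delta_0(\lambda_r^2-\lambda_{r+1}^2)/2$.

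Next, Weyl's inequality gives $|\sigma_i(\noise{\mm{M}}) - \lambda_i^2| \le \|\noise{\mm{M}}-\mm{M}\|_{\op}$, so the spectral gap is preserved up to a factor $(1-\delta_0)$, and in particular $\sigma_{r+1}(\noise{\mm{M}})/\sigma_r(\noise{\mm{M}}) \le (\lambda_{r+1}^2 + \delta_0(\lambda_r^2-\lambda_{r+1}^2)/2)/(\lambda_r^2 - \delta_0(\lambda_r^2-\lambda_{r+1}^2)/2)$, which is bounded above by $|\lambda_{r+1}/\lambda_r|$ for $\delta_0$ sufficiently small. Applying Davis--Kahan to the same gap bounds the principal angle between the top-$r$ eigenspace of $\noise{\mm{M}}$, call it $\noise{\mmCols{U}{r}}$, and the top-$r$ eigenspace $\mmCols{A}{r}$ of $\mm{M}$: $\sin(\noise{\mmCols{U}{r}}, \mmCols{A}{r}) = O(\delta_0/(1-\delta_0))$, which is an arbitrarily small constant.

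Now I would apply Lemma~\ref{thm:matrixsubspaceiteration} directly to $\noise{\mm{M}}$: after $J = O(1/\log|\lambda_r/\lambda_{r+1}|) = O(1)$ iterations, $\tan(\noise{\mmCols{U}{r}}, \mm{Q}^{(J)}) \le C\cdot|\sigma_{r+1}(\noise{\mm{M}})/\sigma_r(\noise{\mm{M}})|^J$ can be driven below any prescribed constant (say $1/4$). A triangle-style inequality for principal angles then combines this with the Davis--Kahan bound to give $\tan(\mmCols{A}{r}, \mm{Q}^{(J)}) < 1$, which is exactly the $r$-sufficient initialization condition in Definition~\ref{def:init_cond}.

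The main obstacle is the probabilistic step hidden in Lemma~\ref{thm:matrixsubspaceiteration}: in the noiseless analysis the constant $C$ was allowed to be any finite value because $\sigma_{\min}(\mmCols{V}{r}^\top \mm{Q}^{(0)}) > 0$ almost surely for Haar-random $\mm{Q}^{(0)}$. Here I need a quantitative lower bound on $\sigma_{\min}(\mmCols{V}{r}^\top \mm{Q}^{(0)})$ so that the initial tangent, and thus $C$, is bounded by an explicit function of $\delta_0$. I would invoke standard anti-concentration estimates for the smallest singular value of Gaussian/Haar matrices (invoking Proposition~\ref{thm:daoshudierge}) to show $\sigma_{\min}(\mmCols{V}{r}^\top \mm{Q}^{(0)}) \ge c\sqrt{\delta_0}$ with probability at least $1 - O(\delta_0)$, which keeps $C$ at most polynomial in $1/\delta_0$ and absorbs into $O(1)$ extra iterations without changing the rate. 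Combining this high-probability bound with the deterministic perturbation analysis above yields the stated $1 - O(\delta_0)$ success probability.
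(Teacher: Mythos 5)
Your proposal is correct in outline but takes a genuinely different route from the paper. The paper's proof never decouples the analysis: it runs the matrix subspace iteration directly on the noisy matrix $\noise{\mm{M}}$ and, at each step, bounds the tangent with the \emph{true} subspace $\mmCols{A}{r}$ by carrying the perturbation term $\|\Phi_{\mm{M}}\|_{\op}$ through the one-step recursion, yielding $t^{(k+1)}_{\mmCols{A}{r}} \leq \max\bigl\{(d_{r+1}/d_r)^{\theta}\, t^{(k)}_{\mmCols{A}{r}},\,1\bigr\}$ with $\theta = 1 - 2u$ and $u = \|\Phi_{\mm{M}}\|_{\op}/(\gap_r'\cos\theta_A^{(k)})$; the probabilistic event is $2u \leq 1$, controlled by lower-bounding $\cos\theta_A^{(0)}$ for a Haar-random start. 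Your proposal instead decouples into three static pieces: Weyl to compare spectra, Davis--Kahan to show the top-$r$ eigenspace of $\noise{\mm{M}}$ is within angle $O(\delta_0)$ of $\mmCols{A}{r}$, the noiseless Lemma~\ref{thm:matrixsubspaceiteration} applied verbatim to $\noise{\mm{M}}$ to drive $\tan(\noise{\mm{U}}_r,\mm{Q}^{(J)})$ below a small constant, and a sine triangle inequality to conclude $\tan(\mmCols{A}{r},\mm{Q}^{(J)})<1$. What each buys: the paper's per-step recursion is self-contained (no appeal to Davis--Kahan) and produces a clean contraction factor $(d_{r+1}/d_r)^\theta$ directly against the true subspace, matching its noiseless Lemma~\ref{thm:main_convergence} in form; your decoupled argument is conceptually cleaner, avoids tracking noise through the recursion, and actually sidesteps a small weakness in the paper's argument, namely that the recursion bound stabilizes at $t \le 1$ rather than $t < 1$, whereas your route leaves explicit slack via the $1/4$ target. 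One place you should tighten is the probabilistic step: you cite Proposition~\ref{thm:daoshudierge}, but that result only establishes distributional closeness of a truncated Haar block to a Gaussian matrix and does not itself give a quantitative anti-concentration bound for $\sigma_{\min}(\mmCols{V}{r}^\top\mm{Q}^{(0)})$; you need a separate smallest-singular-value lower-tail estimate (the scaling is $\Pr\{\sigma_{\min}\le c\,\delta_0\}=O(\delta_0)$, not $c\sqrt{\delta_0}$). The paper offloads exactly this step to Proposition~B.2 of Wang et al., and your proof would need an analogous explicit citation or a short derivation.
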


\begin{proof}
    For matrix subspace iteration of 
    $\noise{\mm{M}} = \mm{M} + \Phi_{\mm{M}} = \mm{A}\mm{D}\mm{A}^\top + \Phi_{\mm{M}}$, we have the following:
    \begin{align}
        t^{(k+1)}_{\mmCols{A}{r}} &\leq \frac
        {\sigma_{\max}((\mmCols{A}{r}^c)^\top\mm{A}D\mm{A}^\top \mm{Q}_{\mmCols{A}{r}})+\sigma_{\max}((\mmCols{A}{r}^c)^\top \Phi_{\mm{M}}\mm{Q}^{(k)}_{\mmCols{A}{r}})}
        {\sigma_{\min}(\mmCols{A}{r}^\top\mm{A}D\mm{A}^\top \mm{Q}_{\mmCols{A}{r}})-\sigma_{\max}(\mmCols{A}{r}^\top \Phi_{\mm{M}}\mm{Q}^{(k)}_{\mmCols{A}{r}})}
        \\
        &\leq \frac{d_{r+1}\sin \theta_A^{k} + \|\Phi_{\mm{M}}\|_{\op}}
        {d_r \cos\theta_A^{k} - \|\Phi_{\mm{M}}\|_{\op}}
    \end{align}
    where $\theta_A^k$ is the principle angle between the subspace spanned by $\mmCols{A}{r}$ and $\mm{Q}^{(k)}_{\mmCols{A}{r}}$, and $t_{\mmCols{A}{r}}^{(k)}$ is $\tan \theta_A^k$.\\
    Let $u$ denote 
    $\frac{\|\Phi_{\mm{M}}\|_{\op}}{\gap_r^\prime\cos\theta_A^k}$, where $\gap^\prime_r:=d_r - d_{r+1}$. We have:

    \begin{align}
        t^{(k+1)}_{\mmCols{A}{r}} &\leq \frac
        {d_{r+1}\sin\theta^k_A + u\gap_r^\prime \cos\theta^k_A}
        {d_{r}\cos\theta^k_A - u\gap_r^\prime \cos\theta^k_A}
        \\
        &\leq \frac
        {d_{r+1}}{d_r - u\gap^\prime_r} t^{(k)}_{\mmCols{A}{r}}
        +\frac{u\gap_r^\prime}{d_r- u\gap^\prime_r}
        \\
        &= \frac{d_r-2u\gap^\prime_r}{d_r-u\gap^\prime_r}\cdot\frac
        {d_{r+1}}{d_r - 2u\gap^\prime_r} t^{(k)}_{\mmCols{A}{r}}
        +\frac{u\gap_r^\prime}{d_r- u\gap^\prime_r}\cdot 1
        \\
        &\leq \max\{\frac{d_{r+1}}{d_r - 2u\gap^\prime_r} t^{(k)}_{\mmCols{A}{r}},1\} \qquad\text{(By Fact~\ref{fact:convex})}
        \\
        &= \max\{\frac{d_{r+1}}{d_{r+1} + (1-2u)\gap^\prime_r} t^{(k)}_{\mmCols{A}{r}},1\}
        \\
        &\leq \max\{(\frac{d_{r+1}}{d_{r}})^\theta t^{(k)}_{\mmCols{A}{r}}, 1\} \qquad\text{(By Fact~\ref{fact:exp})}
    \end{align}
    where $\theta := 1 - 2u \leq 1$.
    Since $\Pr\{\cos\theta^0_A>0\} = 1$, by bounding $\|\Phi_{\mm{M}}\|_{\op} \leq \frac{d_r-d_{r+1}}{2}\delta_0$ with small enough constant $\delta_0$, combined with Proposition $B.2$ in \cite{wang2017tensor},  we can verify that $2u \leq 1$ with probability 1 - $\mathcal{O}(\delta_0)$. It is worth noticing that in the noiseless case, we can find a good initialization for matrix subspace iteration with probability $1$.

    By lemma \ref{lemma:init_perturb}, for the slice based initialization, $d_r = \lambda_r^2$, and $\|\Phi_{\mm{M}}\| \leq 2\|\lambda\|\|\Phi\|_{\op}$ $+$ $d\|\Phi\|^2_{\op}$, we have $1 - 2u \geq 0$ by bounding:
    \begin{align}
        \|\Phi\|_{\op} \leq \min\{\delta_0\frac{\lambda_r^2-\lambda_{r+1}^2}{8\|\mv{\lambda}\|}, \sqrt{\delta_0}\frac{\lambda_r -\lambda_{r+1}}{2\sqrt{d}}\}
    \end{align}
\end{proof}

\begin{lemma}[Convergence step for noisy tensors]
  \label{lemma:convergence_step}
  Assume we have the noise tensor bounded in operator norm such that:
    \begin{align}
      \|\Phi\|_{\op} \leq \frac{1}{2\sqrt{2}}\frac{\epsilon^\prime\gap_r}{\sqrt{r}}\label{line:loglog_cond1}
    \end{align}

    where $$\gap_r := \lambda_r - \lambda_{r+1}$$
    Then we have either $(1)$ $t_{\mmCols{A}{r}}$ is small enough:
    \begin{align}
        t^{(k+1)}_{\mmCols{A}{r}} \leq \epsilon^\prime
    \end{align} 
    Or $(2)$ converges by the following rule:

    \begin{align}
        t^{(k+1)}_{\mmCols{A}{r}} \leq (\frac{\lambda_{r+1}}{\lambda_{r}})^\theta t^{(k)}_{\mmCols{B}{r}} t^{(k)}_{\mmCols{C}{r}}
    \end{align} 
    where
    \begin{equation}\theta := 1 - \frac{2}{\gap_r}(\frac{\sqrt{2}}{\epsilon^\prime} + 1) \sqrt{r} \|\Phi\|_{\op}\end{equation}
\end{lemma}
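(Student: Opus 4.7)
My plan is to mirror the noiseless single-step recursion used in the proof of Lemma~\ref{thm:main_convergence} (Appendix~\ref{app:main_convergence}), inject the additive perturbations supplied by Lemma~\ref{lemma:converge_perturb}, and then carry out a case split in the spirit of Lemma~\ref{lemma:init_step}.

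Concretely, I would start from the noiseless identity for $t^{(k+1)}_{\mmCols{A}{r}}$ expressed as the ratio
$$t^{(k+1)}_{\mmCols{A}{r}}=\frac{\sigma_{\max}\bigl((\mmCols{A}{r}^c)^{\top}\widehat{\mytensor{T}}_{(1)}(\mm{Q}^{(k)}_{\mmCols{C}{r}}\odot\mm{Q}^{(k)}_{\mmCols{B}{r}})\bigr)}{\sigma_{\min}\bigl(\mmCols{A}{r}^{\top}\widehat{\mytensor{T}}_{(1)}(\mm{Q}^{(k)}_{\mmCols{C}{r}}\odot\mm{Q}^{(k)}_{\mmCols{B}{r}})\bigr)},$$
split $\widehat{\mytensor{T}}=\mytensor{T}+\Phi$, and apply triangle (resp.~reverse-triangle) inequalities to the $\sigma_{\max}$ (resp.~$\sigma_{\min}$). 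The clean part is handled exactly as in the noiseless proof, via the factorisation $\mytensor{T}_{(1)}=\mm{A}\mm{\Lambda}(\mm{C}\odot\mm{B})^{\top}$ and the Hadamard singular-value inequalities, yielding $\lambda_{r+1}\sin\theta^{(k)}_B\sin\theta^{(k)}_C$ in the numerator and $\lambda_{r}\cos\theta^{(k)}_B\cos\theta^{(k)}_C$ in the denominator. Lemma~\ref{lemma:converge_perturb} bounds the $\Phi$ contributions by $\sqrt{r}\|\Phi\|_{\op}$ on each side. Writing $y:=t^{(k)}_{\mmCols{B}{r}}t^{(k)}_{\mmCols{C}{r}}$, dividing top and bottom by $\cos\theta^{(k)}_B\cos\theta^{(k)}_C$, and using the inductive invariant that the previous-step tangents of modes $B$ and $C$ are below $1$ (so the cosine product is at least $1/2$), I arrive at
$$t^{(k+1)}_{\mmCols{A}{r}}\;\le\;\frac{\lambda_{r+1}\,y+E}{\lambda_{r}-E},\qquad E\;\le\;\sqrt{2}\sqrt{r}\|\Phi\|_{\op}.$$

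The stated case split then falls out by examining the relative magnitudes of $y$ and $\epsilon'$. In Case $(1)$, when $y\le\epsilon'$, substituting this into the one-step bound and using the noise bound~(\ref{line:loglog_cond1}) to control $E$ yields $t^{(k+1)}_{\mmCols{A}{r}}\le\epsilon'$ by routine algebra (the sufficient inequality reduces to $E(1+\epsilon')\le\epsilon'\gap_r$). In Case $(2)$, when $y>\epsilon'$, I absorb the additive error into a multiplicative factor via the inequality $E\le Ey/\epsilon'$, which gives
$$t^{(k+1)}_{\mmCols{A}{r}}\;\le\;\frac{\lambda_{r+1}+E/\epsilon'}{\lambda_{r}-E}\cdot y.$$
The prefactor can be rewritten in the form $A/(A+\theta' B)$ required by Fact~\ref{fact:exp}, and the fact delivers the desired $(\lambda_{r+1}/\lambda_r)^{\theta'}$ contraction. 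The stated exponent $\theta=1-\frac{2}{\gap_r}(\frac{\sqrt{2}}{\epsilon'}+1)\sqrt{r}\|\Phi\|_{\op}$ then arises from substituting the upper bound $E\le\sqrt{2}\sqrt{r}\|\Phi\|_{\op}$ for the effective noise.

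The main obstacle will be bookkeeping on two fronts. First, I must maintain by an outer induction on $k$, running in parallel across the three modes, the invariant $\tan\theta^{(k)}_{\mmCols{B}{r}},\tan\theta^{(k)}_{\mmCols{C}{r}}\le 1$ so that the cosine-product lower bound (and hence the $E$ bound) stays valid; a subtlety is that Algorithm~\ref{algo:main} uses the freshly updated $\mm{Q}_A^{(k+1)}$ when computing $\mm{Q}_B^{(k+1)}$, so the three-mode induction has to be threaded in the correct cyclic order. Second, producing exactly the coefficient $2(\sqrt{2}/\epsilon'+1)/\gap_r$ in $\theta$ requires careful calibration of the additive-to-multiplicative absorption, the form of Fact~\ref{fact:exp} invoked, and the bound used on the cosine product; this is routine but fiddly algebra.
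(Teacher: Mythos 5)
Your high-level structure (perturbed one-step ratio, split $\widehat{\mytensor{T}}=\mytensor{T}+\Phi$, inject Lemma~\ref{lemma:converge_perturb}, divide by $\cos\theta_B\cos\theta_C$, then case-split) tracks the paper's set-up, but your case split is a genuinely different route, and it has two concrete gaps.

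First, your stated bound $E\le\sqrt{2}\sqrt{r}\|\Phi\|_{\op}$ is not what comes out of dividing by the cosine product. With $\tan\theta_B^{(k)},\tan\theta_C^{(k)}\le 1$ one has $\cos\theta_B\cos\theta_C\ge\frac{1}{2}$, so the additive perturbation becomes $\frac{\sigma}{\cos\theta_B\cos\theta_C}\le 2\sqrt{r}\|\Phi\|_{\op}$ on both sides, not $\sqrt{2}\sqrt{r}\|\Phi\|_{\op}$. The $\sqrt{2}$ factor appears in the paper only because they do \emph{not} divide the numerator's $\sigma$ by $\cos\theta_B\cos\theta_C$: they instead replace $\sigma = r_1\gap_r\epsilon'\cdot\frac{\sqrt{2}}{2}$ by the larger $r_1\gap_r\epsilon'\cos(\theta_B^{(k)}-\theta_C^{(k)})$ (valid since $\cos(\theta_B-\theta_C)\ge\frac{\sqrt{2}}{2}$ when both tangents are $\le 1$), and then use $\cos(\theta_B-\theta_C)/(\cos\theta_B\cos\theta_C) = 1+\tan\theta_B\tan\theta_C$ to split the numerator noise across the $t_B^{(k)}t_C^{(k)}$ term and a constant term. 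Your derivation does not use this identity, so your constants will not match the stated $\theta$.

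Second, and more seriously, in your Case (2) you assert that the prefactor $\frac{\lambda_{r+1}+E/\epsilon'}{\lambda_r - E}$ ``can be rewritten in the form $A/(A+\theta' B)$ required by Fact~\ref{fact:exp}.'' It cannot, because the numerator is $\lambda_{r+1}+E/\epsilon' \ne \lambda_{r+1}$. To reach the form $\frac{\lambda_{r+1}}{\lambda_{r+1}+\theta\gap_r}$ you must strip off the $E/\epsilon'$ in the numerator, and doing so reintroduces an additive $\epsilon'$ term; this is precisely the paper's second convex-combination step with coefficient $\beta$, followed by Fact~\ref{fact:convex}, followed by Fact~\ref{fact:exp}. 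That step is not fiddly bookkeeping --- it produces a second residual that has to be absorbed into the ``Case (1)'' branch. Without it, your claimed contraction $(\lambda_{r+1}/\lambda_r)^\theta t_B^{(k)}t_C^{(k)}$ is not established in the form stated. Your explicit split on $y\lessgtr\epsilon'$ versus the paper's split on $\epsilon'\gtrless(\lambda_{r+1}/\lambda_r)^\theta$ is a legitimate alternative organizational choice, but the missing $\beta$-style decomposition in your Case (2) is a real hole, not routine algebra.
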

\begin{proof}
  The proof for Theorem~\ref{lemma:convergence_step} follows the same style of Lemma $B.1$ in \cite{wang2017tensor}. Similar to the noiseless case, we have: 
    \begin{align}
        t^{(k+1)}_{\mmCols{A}{r}} &\leq \frac
        {\lambda_{r+1} \sin\theta^{(k)}_B\sin\theta^{(k)}_C + \sigma_{\max}((\mmCols{A}{r}^c)^\top \Phi_{(1)} (\mymatrix{Q}^{(k)}_{C_{(r)}}\odot \mymatrix{Q}^{(k)}_{\mymatrix{B}_{(r)}}))}
        {\lambda_{r} \cos\theta^{(k)}_B\cos\theta^{(k)}_C - \sigma_{\max}(\mmCols{A}{r}^\top \Phi_{(1)} (\mymatrix{Q}^{(k)}_{C_{(r)}}\odot \mymatrix{Q}^{(k)}_{\mymatrix{B}_{(r)}}))}
    \end{align}
    where $\theta_U^k$ is the principle angle between the subspace spanned by $\mmCols{U}{r}$ and $\mm{Q}^{(k)}_{\mmCols{U}{r}}$ for $U\in\{A,B.C\}$, and $t_{\mmCols{A}{r}}^{(k)}$ is $\tan \theta_A^k$.
    Let $\sigma$ denote  the maximum of $\sigma_{\max}$($(\mmCols{A}{r}^c)^\top$ $\Phi_{(1)}$ $(\mymatrix{Q}^{(k)}_{C_{(r)}}\odot \mymatrix{Q}^{(k)}_{\mymatrix{B}_{(r)}})$) and 
    $\sigma_{\max}(\mmCols{A}{r}^\top \Phi_{(1)} (\mymatrix{Q}^{(k)}_{C_{(r)}}\odot \mymatrix{Q}^{(k)}_{\mymatrix{B}_{(r)}}))$, and let $r_{1} := \frac{\sqrt{2}\sigma}{\epsilon^\prime \gap_r}$, $r_{2} := \frac{2\sigma}{\gap_r}$. Thus
    \begin{align}
        t^{(k+1)}_{\mmCols{A}{r}} &\leq \frac
        {\lambda_{r+1} \sin\theta^{(k)}_B \sin\theta^{(k)}_C + \sigma}
        {\lambda_{r} \cos\theta^{(k)}_B\cos\theta^{(k)}_C - \sigma}\\
        &= \frac
        {\lambda_{r+1} \sin\theta^{(k)}_B\sin\theta^{(k)}_C + r_1\gap_r \epsilon^\prime \frac{\sqrt{2}}{2}}
        {\lambda_{r} \cos\theta^{(k)}_B\cos\theta^{(k)}_C - \frac{1}{2}r_2 \gap_r}\\
    \end{align}
    For bounded $\theta^{(k)}_B$ and $\theta^{(k)}_C$ such that $\tan\theta^{(k)}_B$ and $\tan\theta^{(k)}_C$ are less than $1$, we have $\cos(\theta^{(k)}_B - \theta^{(k)}_C)\geq \frac{\sqrt{2}}{2}$, 
    and $\cos\theta^{(k)}_B\cos\theta^{(k)}_C \geq \frac{1}{2}$. Thus
    \begin{align}
        t^{(k+1)}_{\mmCols{A}{r}} 
        &\leq \frac
        {\lambda_{r+1} \sin\theta^{(k)}_B\sin\theta^{(k)}_C 
        + 
        r_1\gap_r\epsilon^\prime \cos(\theta^{(k)}_B - \theta^{(k)}_C)}
        {\lambda_{r} \cos\theta^{(k)}_B\cos\theta^{(k)}_C - r_2 \gap_r\cos\theta^{(k)}_B\cos\theta^{(k)}_C}\\
        &= \frac
        {\lambda_{r+1} + r_1\gap_r\epsilon^\prime}
        {\lambda_r-r_2\gap_r} \frac{\sin\theta^{(k)}_B\sin\theta^{(k)}_C}{\cos\theta^{(k)}_B\cos\theta^{(k)}_C}
        + \frac{r_1\gap_r}{\lambda_r - r_2\gap_r}\epsilon^\prime\\
        &= \frac
        {\lambda_{r+1} + r_1\gap_r\epsilon^\prime}
        {\lambda_r-r_2\gap_r} t_{\mmCols{B}{r}}^{(k)}t_{\mmCols{C}{r}}^{(k)}
        + \frac{r_1\gap_r}{\lambda_r - r_2\gap_r}\epsilon^\prime\\
        &= \frac
        {\lambda_{r+1} + r_1\gap_r\epsilon^\prime}
        {\lambda_{r+1}+ (1-r_2)\gap_r} t_{\mmCols{B}{r}}^{(k)}t_{\mmCols{C}{r}}^{(k)}
        + \frac{r_1\gap_r}{\lambda_{r+1} + (1 - r_2)\gap_r}\epsilon^\prime\\
        &= (1-\alpha)
        \frac
        {\lambda_{r+1} + r_1\gap_r\epsilon^\prime}
        {\lambda_{r+1}+ (1-r_1-r_2)\gap_r} t_{\mmCols{B}{r}}^{(k)}t_{\mmCols{C}{r}}^{(k)}
        + 
        \alpha \epsilon^\prime
    \end{align}
    where 
    \begin{align}
        \alpha = \frac{r_1\gap_r}{\lambda_{r+1} + (1 - r_2)\gap_r}
    \end{align}
    Thus 
    \begin{align}
        t^{(k+1)}_{\mmCols{A}{r}}
        &\leq \max\{\frac{\lambda_{r+1} + r_1\gap_r\epsilon^\prime}{\lambda_{r+1}+ (1-r_1-r_2)\gap_r} t_{\mmCols{B}{r}}^{(k)}t_{\mmCols{C}{r}}^{(k)}, \epsilon^\prime\}
    \end{align}

    Similarly,
    \begin{align}
        \frac{\lambda_{r+1} + r_1\gap_r\epsilon^\prime}{\lambda_{r+1}+ (1-r_1-r_2)\gap_r} 
        &= (1-\beta) \frac{\lambda_{r+1}}{\lambda_{r+1}+(1-2r_1 - r_2)\gap_r} + \beta \epsilon^\prime\\ 
    \end{align}
    where 
    \begin{align}
        \beta = \frac{r_1\gap_r}{\lambda_{r+1} + (1 - r_1 - r_2)\gap_r}
    \end{align}
    
    Let $\theta$ denote $1-2r_1 - r_2$. As long as $\theta > 0$ (that is, ),
    \begin{align}
        t^{(k+1)}_{\mmCols{A}{r}} &\leq 
        \max\{\max\{\frac{\lambda_{r+1}}{\lambda_{r+1} + \theta \gap_r}, \epsilon^\prime\}t_{\mmCols{B}{r}}^{(k)}t_{\mmCols{C}{r}}^{(k)} ,\epsilon^\prime\}\\
        &\leq \max\{\max\{(\frac{\lambda_{r+1}}{\lambda_{r}})^\theta, \epsilon^\prime\} t_{\mmCols{B}{r}}^{(k)}t_{\mmCols{C}{r}}^{(k)} ,\epsilon^\prime\}
    \end{align}

    If $\epsilon^\prime \geq (\frac{\lambda_{r+1}}{\lambda_r})^\theta$ and the $r-$sufficient condition is met, 
    $$t^{(1)}_{\mmCols{A}{r}} \leq \epsilon^\prime t^{(0)}_{\mmCols{B}{r}} t^{(0)}_{\mmCols{C}{r}} \leq \epsilon^\prime$$

    Either the convergence requirement is met after the first iteration, or the procedures converges following:
    \begin{align}
        t^{(k+1)}_{\mmCols{A}{r}} &\leq 
        (\frac{\lambda_{r+1}}{\lambda_{r}})^\theta t_{\mmCols{B}{r}}^{(k)}t_{\mmCols{C}{r}}^{(k)}
    \end{align}

    Combined with lemma \ref{lemma:converge_perturb}, the condition $\theta > 0$ is equivalent to:
    \begin{align}
        \|\Phi\|_{\op} \leq \frac{\gap_r}{2(\frac{\sqrt{2}}{\epsilon^\prime}+1)\sqrt{r}}\label{line:loglog_cond}
    \end{align}
    Condition (\ref{line:loglog_cond}) is satisfied $\forall 1\leq r\leq R$ as long as:
    \begin{align}
      \|\Phi\|_{\op} \leq \frac{1}{2(\frac{\sqrt{2}}{\epsilon^\prime}+1)}\frac{\min_r \gap_r}{\sqrt{R}}
    \end{align}

\end{proof}



\end{document}